\documentclass[accepted]{uai2025} % for initial submission
%\documentclass[accepted]{uai2025} % after acceptance, for a revised version; 
% also before submission to see how the non-anonymous paper would look like 
                        
%% There is a class option to choose the math font
% \documentclass[mathfont=ptmx]{uai2025} % ptmx math instead of Computer
                                         % Modern (has noticeable issues)
% \documentclass[mathfont=newtx]{uai2025} % newtx fonts (improves upon
                                          % ptmx; less tested, no support)
% NOTE: Only keep *one* line above as appropriate, as it will be replaced
%       automatically for papers to be published. Do not make any other
%       change above this note for an accepted version.

%% Choose your variant of English; be consistent
\usepackage[american]{babel}
% \usepackage[british]{babel}

%% Some suggested packages, as needed:
\usepackage{natbib} % has a nice set of citation styles and commands
    \bibliographystyle{plainnat}
    
\usepackage{mathtools} % amsmath with fixes and additions
\usepackage{booktabs} % commands to create good-looking tables
\usepackage{tikz} % nice language for creating drawings and diagrams

\usepackage{graphicx}
\usepackage{amsfonts,enumitem,amsmath,amsthm}

\usepackage{microtype}      % microtypography
\usepackage{xcolor}         % colors
\usepackage{float}          % float
\usepackage{subfigure}      % subfigures
\usepackage{multirow}

\usepackage{graphicx,appendix,amssymb}
\usepackage[normalem]{ulem}
\usepackage{soul}
\usepackage{makecell}
\newtheorem{theorem}{Theorem}
\newtheorem{definition}{Definition}
\newtheorem{lemma}{Lemma}

\newtheorem{corollary}{Corollary}
\newtheorem{remark}{Remark}

%% Provided macros
% \smaller: Because the class footnote size is essentially LaTeX's \small,
%           redefining \footnotesize, we provide the original \footnotesize
%           using this macro.
%           (Use only sparingly, e.g., in drawings, as it is quite small.)

%% Self-defined macros
 % just an example

\title{Do Vendi Scores Converge with Finite Samples?\\
Truncated~Vendi Score for Finite-Sample Convergence Guarantees %On the Statistical Complexity of Estimating Vendi Scores from Empirical Data
}

% The standard author block has changed for UAI 2025 to provide
% more space for long author lists and allow for complex affiliations
%
% All author information is authomatically removed by the class for the
% anonymous submission version of your paper, so you can already add your
% information below.
%
% Add authors
\author[1]{\href{mailto:<aospanov9@cse.cuhk.edu.hk>?Subject=Your UAI 2025 paper}{Azim Ospanov}{}}
\author[1]{\href{mailto:<farnia@cse.cuhk.edu.hk>?Subject=Your UAI 2025 paper}{Farzan Farnia}{}}
% Add affiliations after the authors
\affil[1]{%
    Department of Computer Science and Engineering, 
    The Chinese University of Hong Kong}
    
\begin{document}
\maketitle

\begin{abstract}
    Evaluating the diversity of generative models without reference data poses methodological challenges. The reference-free Vendi \citep{friedman_vendi_2023} and RKE \citep{jalali_information-theoretic_2023} scores address this by quantifying the diversity of generated data using matrix-based entropy measures. Among these two, the Vendi score is typically computed via the eigendecomposition of an $n \times n$ kernel matrix constructed from n generated samples. However, the prohibitive computational cost of eigendecomposition for large $n$ often limits the number of samples used to fewer than 20,000. In this paper, we investigate the statistical convergence of the Vendi and RKE scores under restricted sample sizes. We numerically demonstrate that, in general, the Vendi score computed with standard sample sizes below 20,000 may not converge to its asymptotic value under infinite sampling. To address this, we introduce the \emph{$t$-truncated Vendi score} by truncating the eigenspectrum of the kernel matrix, which is provably guaranteed to converge to its population limit with $n=\mathcal{O}(t)$ samples. We further show that existing Nyström and FKEA approximation methods converge to the asymptotic limit of the truncated Vendi score. In contrast to the Vendi score, we prove that the RKE score enjoys universal convergence guarantees across all kernel functions. We conduct several numerical experiments to illustrate the concentration of Nyström and FKEA computed Vendi scores around the truncated Vendi score, and we analyze how the truncated Vendi and RKE scores correlate with the diversity of image and text data. The code is available at \url{https://github.com/aziksh-ospanov/truncated-vendi}.
\end{abstract}

\section{Introduction}
\begin{figure*}[h]
    \centering
    \includegraphics[width=0.89\linewidth]{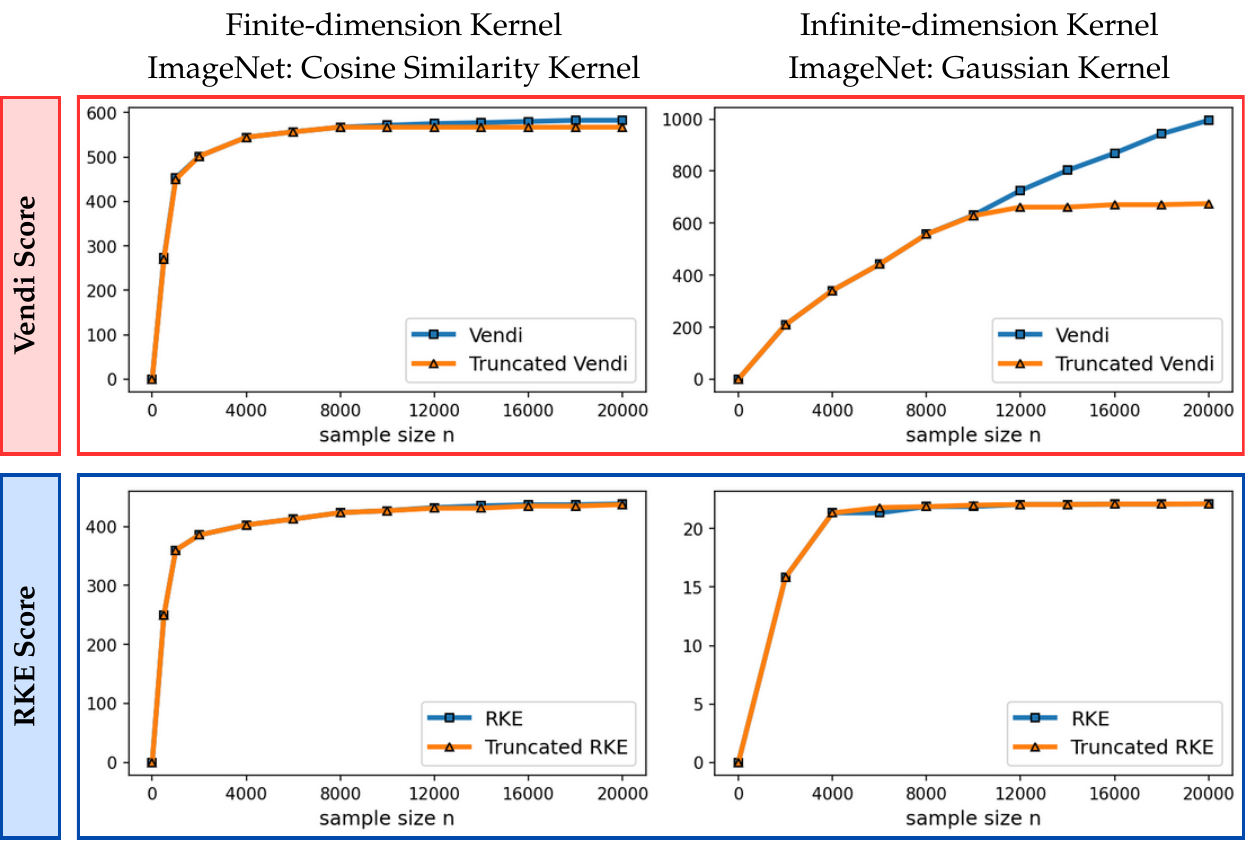}
    \caption{Statistical convergence of Vendi and RKE scores for different sample sizes on ImageNet data: (Left plots) finite-dimension cosine similarity kernel (Right plots) infinite dimension Gaussian kernel with bandwidth $\sigma=30$.  %DINOv2($d=768$) is used as the backbone embedding. 
    The RKE and truncated Vendi scores converged with below 20000 samples, but the Vendi score with Gaussian kernel did not converge.}
    \vspace{-5mm}
  \label{fig:kernel_convergence}
\end{figure*}

The increasing use of generative artificial intelligence has underscored the need for accurate evaluation of generative models.  In practice, users often have access to multiple generative models trained with different training datasets and algorithms, requiring evaluation methods to identify the most suitable model. The feasibility of a model evaluation approach depends on factors such as the required generated sample size, computational cost, and the availability of reference data. Recent studies on evaluating generative models have introduced assessment methods that relax the requirements on data and computational resources. 

Specifically, to enable the evaluation of generative models without reference data, the recent literature has focused on reference-free evaluation scores that remain applicable in the absence of reference samples. The Vendi score \citep{friedman_vendi_2023} is one such reference-free metric that quantifies the diversity of generated data using the entropy of a kernel similarity matrix formulated for the generated samples. Given the  sorted eigenvalues $\lambda_1\ge\cdots \ge\lambda_n$ of the normalized matrix $\frac{1}{n}K$\footnote{In general, we consider the trace-normalized kernel matrix $\frac{1}{\mathrm{Tr}(K)}K$, which given $\forall x: \: k(x,x)=1$, reduces to $\frac{1}{n}K$.}  for the kernel similarity matrix $K=\bigl[k(x_i,x_j)\bigr]_{1\le i,j\le n}$ of $n$ generated samples $x_1,\ldots , x_n$, the definition of (order-1) Vendi score is as: 
\begin{equation}\label{Eq: Intro-Vendi Score}
    \mathrm{Vendi}(x_1,\ldots ,x_n) \, := \, \exp\Bigl(\,\sum_{i=1}^n \lambda_i \log\frac{1}{\lambda_i}\,\Bigr)
\end{equation}
Following conventional definitions in information theory, the Vendi score corresponds to the exponential of the \emph{Von Neumann entropy} of normalized kernel matrix $\frac{1}{n}K$. More generally, \cite{jalali_information-theoretic_2023} define the Rényi Kernel Entropy (RKE) score by applying order-2 Rényi entropy to this matrix, which reduces to the inverse-squared Frobenius norm of the normalized kernel matrix:
\begin{equation}\label{Eq: Intro-Vendi Score}
    \mathrm{RKE}(x_1,\ldots ,x_n) \, := \frac{1}{\Bigl\Vert \frac{1}{n}K \Bigr\Vert^2_F}
\end{equation}
%As demonstrated in \citep{jalali_information-theoretic_2023,ospanov_fkea_2024}, the diversity evaluation of the Vendi and RKE scores can be interpreted as an unsupervised identification of clusters within the generated data, followed by the calculation of the entropy of the detected cluster variable. Due to their flexibility and adaptability, these entropy-based scores can be applied to measure the diversity of samples across different domains, including image, text, and video data.

Although the Vendi and RKE scores do not require reference samples, their computational cost increases rapidly with the number of generated samples $n$. Specifically, calculating the Vendi score for the  $n \times n$  kernel matrix $K$ generally involves an eigendecomposition of $K$, requiring  $O(n^3)$  computations. Therefore, the computational load of Vendi score becomes substantial for a large sample size $n$, and the Vendi score is typically evaluated for sample sizes limited to 20,000. In other words, the  Vendi score, as defined in Equation~\eqref{Eq: Intro-Vendi Score}, would be \emph{computationally infeasible} to compute with standard processors for sample sizes greater than a few tens of thousands. 

Following the above discussion, a key question that arises is whether the Vendi score estimated from restricted sample sizes (i.e. $n\le 20000$) has converged to its asymptotic value with infinite samples, which we call the \emph{population Vendi}. However, the statistical convergence of the Vendi score has not been thoroughly investigated in the literature. % for models trained on large-scale datasets, e.g. ImageNet \citep{deng2009imagenet} and MS~COCO \citep{fleet_microsoft_2014}, which contain many sample categories and could require a large sample size for diversity evaluation. 
In this work, we study the statistical convergence of the Vendi and RKE diversity scores and aim to analyze the concentration of the estimated scores from a limited number of generated samples $n\lessapprox 20000$. %We emphasize that the restricted sample sizes $n\lessapprox 20000$ is an inevitable constraint due to the unaffordable computational costs of computing the Vendi score for larger sample sets. 

%First, we numerically analyze the convergence of the empirically evaluated score $\mathrm{Vendi}(x_1,\ldots ,x_n)$  estimated from $n$ samples $x_1,\ldots ,x_n$ to the population Vendi, which, based on the discussion in \cite{bach_information_2022}, can be defined using the matrix-based entropy of the underlying kernel-based covariance matrix \footnote{As implied by \citep{bach_information_2022}'s results, the Vendi score asymptotically converges to the kernel covariance-based population Vendi  $n\rightarrow \infty$}. Then, a key question is whether the Vendi score for the bounded sample size $n=O(10^4)$ has converged to the population Vendi, i.e. the limit value as $n\rightarrow \infty$.
\vspace{-2mm}
\subsection{Our Results on Vendi's Convergence}\vspace{-2mm}
We discuss the answer to the Vendi convergence question for two types of kernel functions: 1) kernel functions with a finite feature dimension, e.g. the cosine similarity and polynomial kernels, 2) kernel functions with an infinite feature map such as Gaussian (RBF) kernels. For kernel functions with a finite feature dimension $d$, we theoretically and numerically show that a sample size $n=O(d)$ is sufficient to guarantee convergence to the population Vendi (asymptotic value when $n\rightarrow\infty$). For example, the left plot in Figure~\ref{fig:kernel_convergence} shows that in the case of the cosine similarity kernel, the Vendi score on $n$ randomly selected ImageNet \citep{deng2009imagenet} samples has almost converged as the sample size reaches 5000, where the dimension $d$ (using standard DINOv2 embedding \citep{oquab_dinov2_2023}) is 768. 

In contrast, our numerical results for kernel functions with an infinite feature map demonstrate that for standard datasets, a sample size bounded by 20,000 could be insufficient for convergence of the Vendi score. For example, the right plot of Figure~\ref{fig:kernel_convergence} shows the evolution of the Vendi score with the Gaussian kernel on ImageNet data, and the score continues to grow at a significant rate with 20,000 samples\footnote{The heavy computational cost prohibits an empirical evaluation of the sample size required for Vendi’s convergence.}.

Observing the difference between Vendi score convergence  for finite and infinite-dimension kernel functions, a natural question is how to extend the definition of Vendi score from finite to infinite dimension case such that the diversity score would statistically converge in both scenarios. We attempt to address the question by introducing an alternative Vendi statistic, which we call the \emph{$t$-truncated Vendi score}.
The $t$-truncated Vendi score is defined using only the top-$t$ $\lambda_1\ge \cdots \ge \lambda_t$ eigenvalues of the kernel matrix, where $t$ is an integer hyperparameter. This modified score is defined as  
\begin{equation*}\label{Eq: Intro-TruncatedVendi Score}    \mathrm{Truncated}\text{-}\mathrm{Vendi}^{(t)}(x_1,\ldots ,x_n)  = \exp\Bigl(\sum_{i=1}^t {\lambda^{\scriptscriptstyle \text{trunc}}_i} \log\frac{1}{\lambda^{\scriptscriptstyle \text{trunc}}_i}\Bigr)
\end{equation*}
where we shift each of the top-$t$ eigenvalue $\lambda^{\scriptscriptstyle \text{trunc}}_i = \lambda_i +c $ by the same constant $c=\bigl(1-\sum_{i=1}^t\lambda_i\bigr)/t$ to ensure they add up to $1$ and provide a valid probability model. Observe that for a finite kernel dimension $d$ satisfying $d\le t$, the truncated and original Vendi scores take the same value, because the truncation will have no impact on the eigenvalues. On the other hand, under an infinite kernel dimension, the two scores may take different values.

As a main theoretical result, we prove that a sample size $n=O(t)$ is always enough to estimate the \emph{$t$-truncated population Vendi} from $n$ empirical samples, regardless of the finiteness of the kernel feature dimension. This result shows that the  \emph{$t$-truncated} Vendi score provides a statistically converging extension of the Vendi score from the finite kernel dimension to the infinite dimension case. To connect the defined $t$-truncated Vendi score to existing computation methods for the original Vendi score, we show that the existing computationally-efficient methods for computing the Vendi score can be viewed as approximations of our defined \emph{$t$-truncated} Vendi. Specifically, we show that the Nyström method in \citep{friedman_vendi_2023} and the FKEA method proposed by \cite{ospanov_fkea_2024} provide an estimate of the $t$-truncated Vendi. %Therefore, our theoretical results suggest that the population limit of the truncated Vendi is indeed estimated by the computationally efficient Vendi computations proposed by \cite{friedman_vendi_2023} and \cite{ospanov_fkea_2024}.

\begin{figure*}[t]
    \centering
    \includegraphics[width=\linewidth]{figures_final/vendi_t_diagram.pdf}
    \caption{Computation of the proposed $t$-truncated Vendi score. The kernel similarity matrix eigenspectrum is truncated, and the mass of the truncated tail (excluding the top-$t$ eigenvalues) is uniformly redistributed among the top-$t$ eigenvalues.}
    \label{fig:truncated vendi diagram}
\end{figure*}
\vspace{-2mm}
\subsection{Our Results on RKE's Convergence}
For the RKE score, we prove a universal convergence guarantee that holds for every kernel function. The theoretical guarantee shows that the RKE score, and more generally every order-$\alpha$ entropy score with $\alpha\ge 2$, will converge to its population value within $O(\frac{1}{\sqrt{n}})$ error for $n$ samples. Our theoretical guarantee also transfers to the truncated version of the RKE score. However, note that the truncation of the eigenspectrum becomes unnecessary in the RKE case, since the score enjoys universal convergence guarantees.  Figure~\ref{fig:kernel_convergence} shows that using both the cosine-similarity and Gaussian kernel functions, the RKE score nearly converges to its limit value with less than 10000 samples.  

Finally, we present the findings of several numerical experiments to validate our theoretical results on the convergence of Vendi, truncated Vendi, and RKE scores. Our numerical results on standard image, text, and video datasets and generative models indicate that in the case of a finite-dimension kernel map, the Vendi score can converge to its asymptotic limit, in which case, as we explained earlier, the Vendi score is identical to the truncated Vendi. On the other hand, in the case of infinite-dimension Gaussian kernel functions, we numerically observe the growth of the score beyond $n=$10,000. Our numerical results further confirm that the scores computed by Nyström method in \citep{friedman_vendi_2023} and the FKEA method \citep{ospanov_fkea_2024} provide tight estimations of the population truncated Vendi. The following summarizes this work's contributions:
\begin{itemize}[leftmargin=*]
    \item Analyzing the statistical convergence of Vendi and RKE diversity scores under restricted sample sizes $n\lessapprox 2\times 10^4$,
    \item Providing numerical evidence on the Vendi score's lack of convergence for infinite-dimensional kernel functions, e.g. the Gaussian (RBF) kernel,
    \item Introducing the truncated Vendi score as a statistically converging extension of the Vendi score from finite to infinite dimension kernel functions,
    \item Demonstrating the universal convergence of the RKE diversity score across all kernel functions.
\end{itemize}

\vspace{-3mm}
\section{Related Works}

\textbf{Diversity evaluation for generative models} Diversity evaluation in generative models can be categorized into two primary types: reference-based and reference-free methods. Reference-based approaches rely on a predefined dataset to assess the diversity of generated data. Metrics such as FID \citep{heusel_gans_2018}, KID and distributed KID \citep{binkowski2018demystifying,wang2023distributed} measure the distance between the generated data and the reference, while Recall \citep{sajjadi_assessing_2018, kynkaanniemi_improved_2019} and Coverage~\citep{naeem_reliable_2020} evaluate the extent to which the generative model captures existing modes in the reference dataset. \cite{pillutla2021mauve,pillutla-etal:mauve:jmlr2023} propose MAUVE metric that uses information divergences in a quantized embedding space to measure the gap between generated data and reference distribution. In contrast, the reference-free metrics, Vendi \citep{friedman_vendi_2023} and RKE \citep{jalali_information-theoretic_2023}, assign diversity scores based on the eigenvalues of a kernel similarity matrix of the generated data. \cite{jalali_information-theoretic_2023} interpret the approach as identifying modes and their frequencies within the generated data followed by entropy calculation for the frequency parameters. The Vendi and RKE scores have been further extended to quantify the diversity of conditional prompt-based generative AI models \citep{ospanov2024dissecting,jalali2024conditional} and to select generative models in online settings \citep{rezaei2024more,hu2024online,hu2025multi}. Also, \citep{zhang2024interpretable,zhang2025unveiling,jalali2025towards,gong2025kernel,wu2025fusingcrossmodalunimodalrepresentations} extend the entropic kernel-based scores to measure novelty and embedding dissimilarity. In our work, we specifically focus on the statistical convergence of the vanilla Vendi and RKE scores.

\textbf{Statistical convergence analysis of kernel matrices' eigenvalues.} The convergence analysis of the eigenvalues of kernel matrices has been studied by several related works. \cite{shawe2005eigenspectrum} provide a concentration bound for the eigenvalues of a kernel matrix. We note that the bounds in \citep{shawe2005eigenspectrum} use the expectation of eigenvalues $\mathbb{E}_m[\hat{\boldsymbol{\lambda}}(S)]$ for a random dataset $S=(\mathbf{x}_1,\dots,\mathbf{x}_m)$ of fixed size $m$ as the center vector in the concentration analysis. However, since eigenvalues are non-linear functions of a matrix, this concentration center vector $\mathbb{E}_m[\hat{\boldsymbol{\lambda}}(S)]$ does not match the eigenvalues of the asymptotic kernel matrix as the sample size approaches to infinity. On the other hand, our convergence analysis focuses on the asymptotic eigenvalues with an infinite sample size, which determines the limit value of Vendi scores.  
In another related work, \cite{bach_information_2022} discusses a convergence result for the Von-Neumann entropy of kernel matrix. While this result proves a non-asymptotic guarantee on the convergence of the entropy function, the bound may not guarantee convergence at standard sample sizes for computing Vendi scores (less than $10000$ in practice). In our work, we aim to provide convergence guarantees for the finite-dimension and generally truncated Vendi scores with restricted sample sizes.   

\textbf{Efficient computation of matrix-based entropy.} Several strategies have been proposed in the literature to reduce the computational complexity of matrix-based entropy calculations, which involve the computation of matrix eigenvalues—a process that scales cubically with the size of the dataset. \cite{dong2023optimal} propose an efficient algorithm for approximating matrix-based Renyi’s entropy of arbitrary order $\alpha$, which achieves a reduction in computational complexity down to $O(n^2sm)$ with $s,m\ll n$. Additionally, kernel matrices can be approximated using low-rank techniques such as incomplete Cholesky decomposition \citep{fine2001efficient, bach2002kernel} or CUR matrix decompositions \citep{curmatrix_michael}, which provide substantial computational savings. \cite{pasarkar2023cousins} suggest to leverage Nyström method \citep{williams2000nystrom} with $m$ components, which results in $O(nm^2)$ computational complexity. Further reduction in complexity is possible using Random Fourier Features, as suggested by \cite{ospanov_fkea_2024}, which allows the computation to scale linearly with $O(n)$ as a function of the dataset size. This work focuses on the latter two methods and the population quantities estimated by them.

\textbf{Impact of embedding spaces on diversity evaluation.} In our image-related experiments, we used the DinoV2 embedding \citep{oquab_dinov2_2023}, as \cite{stein_exposing_2023} demonstrate the alignment of this embedding with human evaluations. We note that the kernel function in the Vendi score can be similarly applied to other embeddings, including the standard InceptionV3\citep{szegedy_rethinking_2016} and CLIP embeddings \citep{radford_learning_2021} as suggested by \cite{kynkaanniemi_role_2022}. %Also, in our experiments on text data, we utilized the text-embedding-3-large \citep{openai2023textembedding} model, and for the video experiments, we employed the I3D embedding \citep{Carreira_i3d}. We use the mentioned embeddings in our experiments, while our theoretical results suggest that the convergence behavior would be similar for other embeddings.

\vspace{-3mm}
\section{Preliminaries}

Consider a generative model $\mathcal{G}$ that generates samples from a probability distribution $P_X$. To conduct a reference-free evaluation of the model, we suppose the evaluator has access to $n$ independently generated samples from $P_X$, denoted by $x_1,\ldots ,x_n\in\mathcal{X}$. %We use $\mathcal{X}$ to denote the support set of the generative model. 
The assessment task is to estimate the diversity of generative model $\mathcal{G}$ by measuring the variety of the observed generated data, $x_1,\ldots x_n$. In the following subsections, we will discuss kernel functions and their application to define the Vendi and RKE diversity scores.

\subsection{Kernel Functions and Matrices}
Following the standard definition, $k:\mathcal{X}\times \mathcal{X} \rightarrow \mathbb{R}$ is called a kernel function if for every integer $n\in\mathbb{N}$ and inputs $x_1, \ldots, x_n \in \mathcal{X}$, the following kernel similarity matrix $K\in\mathbb{R}^{n\times n}$ %$K = \bigl[k(x_i,x_j) \bigr]_{1\le i,j\le n}$ 
is positive semi-definite (PSD):
\begin{equation}
    K = \begin{bmatrix} k(x_1,x_1) & \cdots & k(x_1,x_n) \\ \vdots & \ddots & \vdots \\ k(x_n,x_1) & \cdots & k(x_n,x_n)
    \end{bmatrix}
\end{equation}
Aronszajn’s Theorem \citep{aronszajn1950reproducing} shows that this definition is equivalent to the existence of a feature map $\phi :\mathcal{X} \rightarrow \mathbb{R}^d$ 
such that for every $x, x' \in\mathcal{X}$ we have the following where $\langle \cdot ,\cdot \rangle $ denotes the standard inner product in the $\mathbb{R}^d$ space:
\begin{equation}\label{Eq: Kernel Equivalent Definition}
    k(x,x') \, =\,  \bigl\langle \phi(x), \phi(x') \bigr\rangle 
\end{equation}
In this work, we study the evaluation using two types of kernel functions: 1) finite-dimension kernels where dimension $d$ is finite, 2) infinite-dimension kernels where there is no feature map satisfying \eqref{Eq: Kernel Equivalent Definition} with a finite $d$ value. A standard example of a finite-dimension kernel is the cosine similarity function where $\phi_{\text{cosine}}(x)= x/ \Vert x\Vert_2$. Also, a widely-used infinite-dimension kernel is the Gaussian (RBF) kernel with bandwidth parameter $\sigma >0$ defined as
\begin{equation}
    k_{\text{\rm Gaussian}(\sigma)} \bigl(x , x'\bigr) \, :=\, \exp\Bigl(-\frac{\bigl\Vert x - x'\bigr\Vert^2_2}{2\sigma^2}\Bigr)
\end{equation}
Both the mentioned kernel examples belong to normalized kernels which require $k(x, x)=1$ for every $x$, i.e., the feature map $\phi(x)$ has unit Euclidean norm for every $x$. Given a normalized kernel function, the non-negative eigenvalues of the normalized kernel matrix $\frac{1}{n}K$ for $n$ points $x_1,\ldots x_n$ will sum up to $1$, i.e., they form a probability model.

\subsection{Matrix-based Entropy Functions and Vendi Score}
For a PSD matrix $A \in\mathbb{R}^{d\times d}$ with unit trace $\mathrm{Tr}(A)=1$, $A$'s eigenvalues form a probability model. The order-$\alpha$ Renyi entropy of matrix $A$ is defined using the order-$\alpha$ entropy of its eigenvalues as 
\begin{equation}\label{Eq: order-alpha entropy}
    H_{\alpha}(A) \, :=\, \frac{1}{1-\alpha}\log\Bigl(\sum_{i=1}^d \lambda^\alpha_i\Bigr)
\end{equation}
For the special case $\alpha=2$, one can consider the Frobenius norm $\Vert\cdot \Vert_F$ and apply the identity $\bigl\Vert A\bigr\Vert_F^2 = \sum_{i=1}^d \lambda^2_i$ to show $H_{2}(A)=\log\bigl(1/\bigl\Vert A\bigr\Vert_F^2\bigr)$. Moreover, for $\alpha =1$, the above definition reduces to the Shannon entropy of the eigenvalues as
$    H_{1}(A) \, := \, \sum_{i=1}^d \lambda_i \log ({1}/{\lambda_i})$ \citep{renyi1961measures}. 

\cite{jalali_information-theoretic_2023} applies the above definition for order $\alpha=2$ to the normalized kernel similarity matrix $\frac{1}{n}K$ to define the RKE diversity score (called RKE mode count), which reduces to
\begin{equation}
   \mathrm{RKE}(x_1,\ldots , x_n) \, :=\, \exp\Bigl(H_2\bigl(\frac{1}{n}K\bigr)\Bigr) =\,  \Bigl\Vert \frac{1}{n}K\Bigr\Vert^{-2}_F 
\end{equation}

For a general entropy order $\alpha$,
\citep{friedman_vendi_2023,pasarkar2023cousins} apply the matrix-based entropy definition to the normalized kernel matrix $\frac{1}{n}K$ and define the order-$\alpha$ Vendi score for samples $x_1,\ldots , x_n$ as
\begin{equation}
    \mathrm{Vendi}_\alpha \bigl(x_1,\ldots ,x_n\bigr) \, :=\, \exp\Bigl(H_\alpha\bigl(\frac{1}{n} K \bigr)\Bigr)
\end{equation}
Specifically, for order $\alpha=1$, the above definition results in the standard (order-1) Vendi score in Equation~\eqref{Eq: Intro-Vendi Score}.

\subsection{Statistical Analysis of Vendi Score}
To derive the population limits of Vendi and RKE scores under infinite sampling, which we call \emph{population Vendi} and \emph{population RKE}, respectively, we review the following discussion from \citep{bach_information_2022,jalali_information-theoretic_2023}. First, note that the normalized kernel matrix $\frac{1}{n}K$, whose eigenvalues are used in the definition of Vendi score, can be written as:
\begin{equation}
    \frac{1}{n}K = \frac{1}{n}\Phi \Phi^\top
\end{equation}
where $\Phi\in\mathbb{R}^{n\times d}$ is an $n\times d$ matrix whose rows are the feature presentations of samples, i.e., $\phi(x_1),\ldots ,\phi(x_n)$. Therefore, the normalized kernel matrix $\frac{1}{n}K$ shares the same non-zero eigenvalues with $\frac{1}{n} \Phi^\top \Phi$, where the multiplication order is flipped. Note that $\frac{1}{n} \Phi^\top \Phi$ is equal to the empirical kernel covariance matrix $ \widehat{C}_X$ defined as:
\begin{equation*}
    \widehat{C}_X := \frac{1}{n}\sum_{i=1}^n \phi(x_i)\phi(x_i)^\top  = \frac{1}{n} \Phi^\top \Phi.
\end{equation*}
As a result, the empirical covariance matrix $\widehat{C}_X=\frac{1}{n} \Phi^\top \Phi$ and kernel matrix $\frac{1}{n}K=\frac{1}{n} \Phi \Phi^\top$ share the same non-zero eigenvalues and therefore have the same matrix-based entropy value for every order $\alpha$: $
    H_{\alpha}(\frac{1}{n}K) = H_{\alpha}(\widehat{C}_X)
$. 
Therefore, if we consider the population kernel covariance matrix $\widetilde{C}_X = \mathbb{E}_{x\sim P_X}\bigl[\phi(x)\phi(x)^\top\bigr]$, we can define the population Vendi score as follows.
\begin{definition}
    Given data distribution $P_X$, we define the order-$\alpha$ population Vendi, ${\mathrm{Vendi}}_\alpha(P_X)$, using the matrix-based entropy of the population kernel covariance matrix $\widetilde{C}_X = \mathbb{E}_{x\sim P_X}\bigl[\phi(x)\phi(x)^\top\bigr]$ as
    \begin{equation}
        {\mathrm{Vendi}}_\alpha(P_X) \, :=\, \exp\Bigl( H_{\alpha}(\widetilde{C}_X)\Bigr)
    \end{equation}
\end{definition}
Note that the population RKE score is identical to the population $\mathrm{Vendi}_2$, since RKE and $\mathrm{Vendi}_2$ are the same. %In the next sections, we study the complexity of estimating the above population Vendi from a limited number of samples. 

\vspace{-3mm}
\section{Statistical Convergence of Vendi and RKE Scores}% with Finite and Infinite Kernel Maps}
Given the definitions of the Vendi score and the population Vendi, a relevant question is how many samples are required to accurately estimate the population Vendi using the Vendi score. To address this question, we first prove the following concentration bound on the vector of ordered eigenvalues $[\lambda_1,\ldots,\lambda_n]$ of the kernel matrix for a normalized kernel function. %We defer the proof of the theoretical results to the Appendix.

\begin{theorem}\label{Thm: eigenvalue convergence}
Consider a normalized kernel function $k$ satisfying $k(x,x)=1$ for every $x\in\mathcal{X}$. Let $\widehat{\boldsymbol{\lambda}}_n$ be the vector of sorted eigenvalues of the normalized kernel matrix $\frac{1}{n}K$ for $n$ independent samples $x_1,\ldots ,x_n\sim P_X$. If we define $\widetilde{\boldsymbol{\lambda}}$ as the vector of sorted eigenvalues of underlying covariance matrix $\widetilde{C}_X$, then if $n\ge 2+8\log (1/\delta)$, the following inequality holds with probability at least $1-\delta$:
\begin{equation*}
    \bigl\Vert \widehat{\boldsymbol{\lambda}}_n - \widetilde{\boldsymbol{\lambda}} \bigr\Vert_2 \, \le \, \sqrt{\frac{32\log\bigl(2/\delta\bigr)}{n}}
\end{equation*}
Note that in calculating the subtraction $\widehat{\boldsymbol{\lambda}}_n - \widetilde{\boldsymbol{\lambda}}$, we add $|d-n|$ zero entries to the lower-dimension vector, if the dimension of vectors $\widehat{\boldsymbol{\lambda}}_n$ and $\widetilde{\boldsymbol{\lambda}}$ do not match.
\end{theorem}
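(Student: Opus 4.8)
The plan is to deduce the stated $\ell_2$ bound on eigenvalue vectors from a concentration bound on the Frobenius (Hilbert--Schmidt) distance between the empirical and population kernel covariance operators, and to obtain the latter by a bounded-differences argument.

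First I would phrase everything at the level of covariance operators. As recalled just before the theorem, the nonzero eigenvalues of $\frac{1}{n}K = \frac{1}{n}\Phi\Phi^\top$ are exactly the nonzero eigenvalues of $\widehat{C}_X = \frac{1}{n}\Phi^\top\Phi$; with the zero-padding convention in the statement, $\widehat{\boldsymbol{\lambda}}_n$ is precisely the sorted spectrum of the self-adjoint operator $\widehat{C}_X$ on the feature space, while $\widetilde{\boldsymbol{\lambda}}$ is the sorted spectrum of $\widetilde{C}_X$. Since the kernel is normalized, $\mathrm{Tr}(\widetilde{C}_X)=\mathbb{E}[k(x,x)]=1$ and $\mathrm{Tr}(\widehat{C}_X)=1$, so both operators are trace class, hence Hilbert--Schmidt, and the Hoffman--Wielandt inequality for self-adjoint Hilbert--Schmidt operators (which also covers the dimension-mismatched case via the zero padding) gives
\[
\bigl\Vert \widehat{\boldsymbol{\lambda}}_n - \widetilde{\boldsymbol{\lambda}} \bigr\Vert_2 \;\le\; \bigl\Vert \widehat{C}_X - \widetilde{C}_X \bigr\Vert_F .
\]

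Next I would control $\Vert \widehat{C}_X - \widetilde{C}_X\Vert_F$. Write $\widehat{C}_X - \widetilde{C}_X = \frac{1}{n}\sum_{i=1}^n Z_i$ with $Z_i = \phi(x_i)\phi(x_i)^\top - \widetilde{C}_X$ i.i.d., mean zero, and $\Vert \phi(x_i)\phi(x_i)^\top\Vert_F = \Vert\phi(x_i)\Vert_2^2 = k(x_i,x_i) = 1$, so $\Vert Z_i\Vert_F \le 2$. Hence replacing a single sample changes $f(x_1,\dots,x_n) := \Vert \widehat{C}_X - \widetilde{C}_X\Vert_F$ by at most $2/n$. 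For the mean, Jensen's inequality plus orthogonality of the centered summands gives $\mathbb{E}[f] \le \bigl(\frac{1}{n^2}\sum_i \mathbb{E}\Vert Z_i\Vert_F^2\bigr)^{1/2} = \bigl(\tfrac{1}{n}(\mathbb{E}\Vert\phi(x)\Vert_2^4 - \Vert\widetilde{C}_X\Vert_F^2)\bigr)^{1/2} \le 1/\sqrt{n}$. Then McDiarmid's bounded-differences inequality with $c_i = 2/n$ yields, with probability at least $1-\delta$,
\[
f \;\le\; \mathbb{E}[f] + \sqrt{\frac{2\log(1/\delta)}{n}} \;\le\; \frac{1}{\sqrt{n}} + \sqrt{\frac{2\log(1/\delta)}{n}} .
\]
Squaring, using $(a+b)^2 \le 2a^2 + 2b^2$ and $\log(1/\delta)\le\log(2/\delta)$ (the hypothesis $n \ge 2 + 8\log(1/\delta)$ is used only to absorb the lower-order $1/\sqrt{n}$ bias term cleanly into the constant), the right-hand side is bounded by $\sqrt{32\log(2/\delta)/n}$, which combined with the Hoffman--Wielandt step proves the claim.

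The main obstacle is the first step: one must be comfortable applying Hoffman--Wielandt in the possibly infinite-dimensional setting and check that the zero-padding convention in the statement is exactly what makes the two spectra comparable (the extra eigenvalues introduced on the larger side are zeros, consistent with the finite rank of $\widehat{C}_X$ and the trace-class property of $\widetilde{C}_X$). Once the problem is reduced to $\Vert \widehat{C}_X - \widetilde{C}_X\Vert_F$, everything else is a routine concentration-of-measure statement for an average of bounded i.i.d.\ rank-one operators, the only bookkeeping being to keep the bias term $\mathbb{E}[f]$ small enough to fold into the deviation bound.
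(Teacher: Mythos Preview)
Your proof is correct and reaches the same conclusion, but via a different concentration tool than the paper. The paper vectorizes the rank-one operators and applies a Vector Bernstein inequality (from \cite{gross2011recovering,kohler2017sub}) directly to the centered summands $\phi(x_i)\otimes\phi(x_i) - \mathbb{E}[\phi(x)\otimes\phi(x)]$, obtaining a tail bound on $\|\widehat{C}_X - \widetilde{C}_X\|_{\mathrm{HS}}$ in one shot, and then invokes Hoffman--Wielandt, exactly as you do in the final step. You instead bound $\mathbb{E}\|\widehat{C}_X - \widetilde{C}_X\|_F \le 1/\sqrt{n}$ by Jensen and then apply McDiarmid's bounded-differences inequality to the real-valued norm function. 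Both routes are valid; yours is arguably more elementary (McDiarmid requires less machinery than Vector Bernstein) while the paper's is a bit more streamlined (a single inequality rather than bias plus fluctuation). One small correction: in the paper's proof the hypothesis $n \ge 2 + 8\log(1/\delta)$ arises from the range restriction $0\le\epsilon\le c$ that the Bernstein lemma carries; in your McDiarmid argument there is no such range restriction, and in fact your final combination $\bigl(1/\sqrt{n}+\sqrt{2\log(1/\delta)/n}\bigr)^2 \le (2+4\log(1/\delta))/n \le 32\log(2/\delta)/n$ holds for every $n\ge 1$ since $32\log 2 > 2$. So the hypothesis is not actually used in your proof, contrary to your parenthetical remark, which means your argument gives a mild strengthening of the statement.
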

\begin{proof}
    We defer the proof to the Appendix.
\end{proof}

Theorem~\ref{Thm: eigenvalue convergence} results in the following corollary on a \emph{dimension-free convergence guarantee} for every $\mathrm{Vendi}_\alpha$ score with order $\alpha\ge 2$, including the RKE score (i.e. $\mathrm{Vendi}_2$).

\begin{corollary}\label{Corollary: Order greater than 2}
In the setting of Theorem \ref{Thm: eigenvalue convergence}, for every $\alpha\ge 2$ and $n\ge 2+8\log (1/\delta)$, the following bound holds with probability at least $1-\delta$:\vspace{-1mm}
\begin{align*}
    \Bigl\vert \mathrm{Vendi}_\alpha\bigl(x_1,\ldots,x_n\bigr)^{\frac{1-\alpha}{\alpha}} - \mathrm{Vendi}_\alpha\bigl(P_X\bigr)^{\frac{1-\alpha}{\alpha}} \Bigr\vert
    \le  &\sqrt{\frac{32\log\frac{2}{\delta}}{n}}
\end{align*}
Notably, for $\alpha=2$, we arrive at the following bound on the gap between the empirical and population RKE scores:
\begin{align*}
    \Bigl\vert \mathrm{RKE}\bigl(x_1,\ldots,x_n\bigr)^{-1/2} - \mathrm{RKE}\bigl(P_X\bigr)^{-1/2} \Bigr\vert \le  \sqrt{\frac{32\log\frac{2}{\delta}}{n}}
\end{align*}
\end{corollary}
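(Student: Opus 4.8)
The plan is to reduce the corollary to Theorem~\ref{Thm: eigenvalue convergence} by rewriting the quantity $\mathrm{Vendi}_\alpha^{(1-\alpha)/\alpha}$ as a plain $\ell_\alpha$-norm of the eigenvalue vector. Starting from $\mathrm{Vendi}_\alpha(x_1,\ldots,x_n) = \exp\bigl(H_\alpha(\tfrac1n K)\bigr) = \bigl(\sum_i \lambda_i^\alpha\bigr)^{1/(1-\alpha)}$ and raising both sides to the power $\tfrac{1-\alpha}{\alpha}$, the exponents multiply to $\tfrac1\alpha$, giving $\mathrm{Vendi}_\alpha(x_1,\ldots,x_n)^{(1-\alpha)/\alpha} = \bigl(\sum_i \lambda_i^\alpha\bigr)^{1/\alpha} = \bigl\Vert \widehat{\boldsymbol{\lambda}}_n\bigr\Vert_\alpha$, and the identical manipulation applied to $\widetilde{C}_X$ yields $\mathrm{Vendi}_\alpha(P_X)^{(1-\alpha)/\alpha} = \bigl\Vert \widetilde{\boldsymbol{\lambda}}\bigr\Vert_\alpha$. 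Hence the left-hand side of the claimed bound is exactly $\bigl\vert\, \Vert \widehat{\boldsymbol{\lambda}}_n\Vert_\alpha - \Vert \widetilde{\boldsymbol{\lambda}}\Vert_\alpha \,\bigr\vert$.

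Next I would pad the shorter of the two eigenvalue vectors with $|d-n|$ zeros so that both live in a common Euclidean space --- precisely the convention already fixed in the statement of Theorem~\ref{Thm: eigenvalue convergence} --- and apply the reverse triangle inequality for the $\ell_\alpha$ norm (valid since $\alpha\ge 1$) to get $\bigl\vert\, \Vert \widehat{\boldsymbol{\lambda}}_n\Vert_\alpha - \Vert \widetilde{\boldsymbol{\lambda}}\Vert_\alpha \,\bigr\vert \le \bigl\Vert \widehat{\boldsymbol{\lambda}}_n - \widetilde{\boldsymbol{\lambda}}\bigr\Vert_\alpha$. Then, invoking the standard monotonicity of $\ell_p$ norms in $p$, namely $\Vert v\Vert_\alpha \le \Vert v\Vert_2$ for every $\alpha\ge 2$, I would bound this by $\bigl\Vert \widehat{\boldsymbol{\lambda}}_n - \widetilde{\boldsymbol{\lambda}}\bigr\Vert_2$. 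Under the hypothesis $n\ge 2+8\log(1/\delta)$, Theorem~\ref{Thm: eigenvalue convergence} bounds the last quantity by $\sqrt{32\log(2/\delta)/n}$ with probability at least $1-\delta$, which is exactly the claimed inequality.

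The specialization to $\alpha=2$ is then immediate: since $\mathrm{Vendi}_2 = \mathrm{RKE}$ and $(1-\alpha)/\alpha = -1/2$, we have $\mathrm{Vendi}_2^{(1-\alpha)/\alpha} = \mathrm{RKE}^{-1/2}$, and equivalently $\mathrm{RKE}^{-1/2} = \Vert \tfrac1n K\Vert_F = \Vert \widehat{\boldsymbol{\lambda}}_n\Vert_2$, so the general bound reads $\bigl\vert \mathrm{RKE}(x_1,\ldots,x_n)^{-1/2} - \mathrm{RKE}(P_X)^{-1/2}\bigr\vert \le \sqrt{32\log(2/\delta)/n}$ with the $\ell_2$ norm in place of $\ell_\alpha$.

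There is no substantial obstacle: the corollary is a Lipschitz-type transfer of Theorem~\ref{Thm: eigenvalue convergence} through the $1$-Lipschitz map $\boldsymbol{\lambda}\mapsto \Vert\boldsymbol{\lambda}\Vert_\alpha$. The only points needing care are (i) applying the triangle inequality and the norm comparison within the same zero-padded space, and (ii) noting that the argument genuinely requires $\alpha\ge 2$ --- the step $\Vert\cdot\Vert_\alpha\le\Vert\cdot\Vert_2$ fails for $1\le\alpha<2$, which is exactly why the order-$1$ Vendi score is not covered by this corollary and the separate truncated-Vendi analysis is needed there.
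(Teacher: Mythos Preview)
Your proposal is correct and follows essentially the same approach as the paper: rewrite $\mathrm{Vendi}_\alpha^{(1-\alpha)/\alpha}$ as the $\ell_\alpha$-norm of the eigenvalue vector, apply the reverse triangle inequality, use the monotonicity $\Vert\cdot\Vert_\alpha\le\Vert\cdot\Vert_2$ for $\alpha\ge 2$, and invoke Theorem~\ref{Thm: eigenvalue convergence}. Your additional remarks on zero-padding and on why the argument breaks for $\alpha<2$ are accurate and match the paper's later discussion in Corollary~\ref{Corollary: Finite Dimension}.
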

\begin{proof}
    We defer the proof to the Appendix.
\end{proof}
Therefore, the bound in Corollary \ref{Corollary: Order greater than 2} holds regardless of the dimension of kernel feature map, indicating that the RKE score enjoys a universal convergence guarantee across all kernel functions. Next, we show that Theorem~\ref{Thm: eigenvalue convergence} implies the following corollary on a dimension-dependent convergence guarantee for order-$\alpha$ Vendi score with $1\le \alpha <  2$, including standard (order-$1$) Vendi score.
\begin{corollary}\label{Corollary: Finite Dimension}
In the setting of Theorem \ref{Thm: eigenvalue convergence}, consider a finite dimension kernel map where we suppose $\mathrm{dim}(\phi)=d<\infty$. (a) For $\alpha=1$, assuming $n\geq 32e^2\log(2/\delta)$, the following bound holds with probability at least $1-\delta$:
\begin{align*}
    &\Bigl\vert\, \log\bigl(\mathrm{Vendi}_1\bigl(x_1,\ldots,x_n\bigr)\bigr) - \log\bigl(\mathrm{Vendi}_1\bigl(P_X\bigr)\bigr)\, \Bigr\vert \\
    \le\:  &\sqrt{\frac{8d\log\bigl(2/\delta\bigr)}{n}}\log\Bigl(\frac{nd}{32\log(2/\delta)}\Bigr).
\end{align*}
(b) For every $1< \alpha< 2$ and $n\ge 2+8\log (1/\delta)$, the following bound holds with probability at least $1-\delta$:
\begin{align*}
    &\Bigl\vert\, \mathrm{Vendi}_\alpha\bigl(x_1,\ldots,x_n\bigr)^{\frac{1-\alpha}{\alpha}} - \mathrm{Vendi}_\alpha\bigl(P_X\bigr)^{\frac{1-\alpha}{\alpha}}\, \Bigr\vert \\
    \le\:  &\sqrt{\frac{32d^{2-\alpha}\log\bigl(2/\delta\bigr)}{n}}
\end{align*}
%We note that under a kernel function with finite dimension $d$, the above bound will be $\mathcal{O}(\sqrt{\frac{d}{n}}\log(nd))$. We note that this result will remain true for t-truncated population Vendi.
\end{corollary}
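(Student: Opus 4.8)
The plan is to deduce both parts from the eigenvalue concentration bound of Theorem~\ref{Thm: eigenvalue convergence}, by first rewriting the displayed power of the Vendi score as an elementary functional of the sorted eigenvalue vector and then controlling the change in that functional by $\Vert \widehat{\boldsymbol{\lambda}}_n - \widetilde{\boldsymbol{\lambda}} \Vert_2$ with an estimate tailored to a finite feature dimension. First I would record the common setup: since $\mathrm{dim}(\phi) = d < \infty$, the matrix $\tfrac1n K = \tfrac1n \Phi \Phi^\top$ has rank at most $d$, so after deleting zero entries both $\widehat{\boldsymbol{\lambda}}_n$ and $\widetilde{\boldsymbol{\lambda}}$ may be viewed as probability vectors supported on (at most) their first $d$ coordinates, and in particular the difference vector has at most $d$ nonzero coordinates. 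On the high-probability event of Theorem~\ref{Thm: eigenvalue convergence} we then have $\Vert \widehat{\boldsymbol{\lambda}}_n - \widetilde{\boldsymbol{\lambda}} \Vert_2 \le \sqrt{32\log(2/\delta)/n}$, and Cauchy--Schwarz over those $\le d$ coordinates upgrades this to $\Vert \widehat{\boldsymbol{\lambda}}_n - \widetilde{\boldsymbol{\lambda}} \Vert_1 \le \sqrt{d}\,\Vert \widehat{\boldsymbol{\lambda}}_n - \widetilde{\boldsymbol{\lambda}} \Vert_2 \le \sqrt{32 d\log(2/\delta)/n} =: \varepsilon_0$.

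For part~(b) I would start from $H_\alpha(A) = \tfrac{1}{1-\alpha}\log\sum_i \lambda_i^\alpha$, which gives $\mathrm{Vendi}_\alpha(\,\cdot\,)^{(1-\alpha)/\alpha} = \bigl(\sum_i \lambda_i^\alpha\bigr)^{1/\alpha} = \Vert \boldsymbol{\lambda} \Vert_\alpha$, i.e.\ exactly the $\ell_\alpha$-norm of the eigenvalue vector. Since $\alpha \ge 1$, $\Vert\cdot\Vert_\alpha$ is a norm, so the reverse triangle inequality bounds the target quantity by $\Vert \widehat{\boldsymbol{\lambda}}_n - \widetilde{\boldsymbol{\lambda}} \Vert_\alpha$. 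Because the difference vector is supported on at most $d$ coordinates, the norm comparison $\Vert v \Vert_\alpha \le d^{1/\alpha - 1/2}\Vert v \Vert_2$ (valid since $\alpha \le 2$) combined with Theorem~\ref{Thm: eigenvalue convergence} gives $\Vert \widehat{\boldsymbol{\lambda}}_n - \widetilde{\boldsymbol{\lambda}} \Vert_\alpha \le \sqrt{32 d^{2/\alpha - 1}\log(2/\delta)/n}$. Finally $(\alpha-1)(\alpha-2)\le 0$ yields $2/\alpha - 1 \le 2 - \alpha$, hence $d^{2/\alpha-1}\le d^{2-\alpha}$, which delivers the stated bound; note that only $n \ge 2 + 8\log(1/\delta)$ is needed here, since no boundary-of-the-simplex issue arises.

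For part~(a), observe that $\log\mathrm{Vendi}_1(\,\cdot\,) = H_1(\tfrac1n K)$ is precisely the Shannon entropy of the eigenvalue vector, so the claim is a modulus-of-continuity estimate for Shannon entropy on the $d$-simplex. Writing $g(t) = -t\log t$, I would bound $\bigl| H_1(\widehat{\boldsymbol{\lambda}}_n) - H_1(\widetilde{\boldsymbol{\lambda}}) \bigr| \le \sum_i | g(\widehat\lambda_i) - g(\widetilde\lambda_i) |$, then invoke the per-coordinate inequality $|g(a)-g(b)| \le g(|a-b|)$ (valid for $a,b \in [0,1]$ with $|a-b| \le \tfrac12$, which holds since each $|\widehat\lambda_i - \widetilde\lambda_i| \le \Vert \widehat{\boldsymbol{\lambda}}_n - \widetilde{\boldsymbol{\lambda}} \Vert_2 \le 1/e$ once $n \ge 32e^2\log(2/\delta)$), and then use concavity of $g$ with Jensen's inequality over the $\le d$ relevant coordinates to get $\sum_i g(|\widehat\lambda_i - \widetilde\lambda_i|) \le \Vert \widehat{\boldsymbol{\lambda}}_n - \widetilde{\boldsymbol{\lambda}} \Vert_1 \log\bigl(d/\Vert \widehat{\boldsymbol{\lambda}}_n - \widetilde{\boldsymbol{\lambda}} \Vert_1\bigr)$. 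The map $t \mapsto t\log(d/t)$ is nondecreasing on $(0, d/e]$, and the hypothesis forces $\varepsilon_0 \le \sqrt{d}/e \le d/e$, so I may substitute the upper bound $\varepsilon_0$ for $\Vert \widehat{\boldsymbol{\lambda}}_n - \widetilde{\boldsymbol{\lambda}} \Vert_1$; using $\log(d/\varepsilon_0) = \tfrac12\log\bigl(nd/(32\log(2/\delta))\bigr)$ then turns $\varepsilon_0\log(d/\varepsilon_0)$ into exactly the claimed $\sqrt{8d\log(2/\delta)/n}\,\log\bigl(nd/(32\log(2/\delta))\bigr)$.

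The main obstacle is part~(a): unlike $\Vert\cdot\Vert_\alpha$ for $\alpha > 1$, Shannon entropy fails to be Lipschitz near the boundary of the simplex, so the finite dimension must genuinely enter through both the $\ell_2 \to \ell_1$ conversion and the $\log(d/\,\cdot\,)$ factor in the continuity modulus; the care is in calibrating the constant in the condition $n \ge 32e^2\log(2/\delta)$ so that $\varepsilon_0$ lands where the per-coordinate estimate for $-t\log t$ holds and where $t \mapsto t\log(d/t)$ is still increasing, and in verifying the per-coordinate estimate itself. Part~(b), by contrast, reduces to standard norm inequalities and should be routine.
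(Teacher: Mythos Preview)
Your proposal is correct and follows essentially the paper's route. Part~(b) is identical: rewrite $\mathrm{Vendi}_\alpha^{(1-\alpha)/\alpha}$ as $\Vert\boldsymbol{\lambda}\Vert_\alpha$, apply the reverse triangle inequality, and compare $\Vert\cdot\Vert_\alpha$ to $\Vert\cdot\Vert_2$ on a $d$-sparse vector; the paper writes the looser exponent $d^{(2-\alpha)/2}$ directly, while you pass through the sharp $d^{1/\alpha-1/2}$ and then relax using $(\alpha-1)(\alpha-2)\le 0$, landing at the same bound.

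For part~(a) the skeleton is also the same: both arguments reduce $\lvert H_1(\widehat{\boldsymbol{\lambda}}_n)-H_1(\widetilde{\boldsymbol{\lambda}})\rvert$ to $\sum_i g(u_i)$ with $g(t)=-t\log t$ and $u_i=\lvert\widehat\lambda_i-\widetilde\lambda_i\rvert$ via the per-coordinate inequality $\lvert g(a)-g(b)\rvert\le g(\lvert a-b\rvert)$ (the paper's Lemma~\ref{Lemma: Log inequality}), and then maximize $\sum_i g(u_i)$ under the $\ell_2$ constraint from Theorem~\ref{Thm: eigenvalue convergence}. The only difference is how that maximum is computed. The paper (Lemma~\ref{Lemma: Entropy Schur-concave}) sets up the constrained problem and verifies the KKT conditions at $u_i=\epsilon/\sqrt{d}$, obtaining $\epsilon\sqrt{d}\log(\sqrt{d}/\epsilon)$. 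You instead go through $\ell_1$: Cauchy--Schwarz gives $\Vert u\Vert_1\le\sqrt{d}\,\Vert u\Vert_2$, Jensen for the concave $g$ gives $\sum_i g(u_i)\le \Vert u\Vert_1\log(d/\Vert u\Vert_1)$, and monotonicity of $t\mapsto t\log(d/t)$ on $(0,d/e]$ lets you substitute the upper bound $\varepsilon_0=\sqrt{d}\,\epsilon$, recovering the same $\epsilon\sqrt{d}\log(\sqrt{d}/\epsilon)$. Your route is a bit more elementary (no optimality conditions), the paper's a bit more direct; both yield the stated constants.
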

\begin{proof}
    We defer the proof to the Appendix.
\end{proof}

%We emphasize that the above concentration guarantee holds under a finite feature map dimension when $\alpha<2$. 
%On the other hand, if the order $\alpha$ of Vendi score is less than $2$, then Theorem \ref{Thm: eigenvalue convergence} provides a statistical convergence guarantee under a bounded dimension kernel map as shown in the following corollary.
Therefore, assuming a finite feature map $d<\infty$ and given an entropy order $1\le \alpha<2$, the above results indicate the convergence of the Vendi score to the underlying population Vendi given $n=O(d^{2-\alpha})$ samples. Observe that this result is consistent with our numerical observations of the convergence of Vendi score using the finite-dimension cosine similarity kernel in Figure~\ref{fig:kernel_convergence}. %Next, we discuss how to extend the above result to an infinite-dimension kernel map by defining the truncated population Vendi.    

\vspace{-3mm}
\section{Truncated Vendi Score and its Estimation via Proxy Kernels}
%\section{Truncated population Vendi and its Estimation via Proxy Kernels}

Corollaries \ref{Corollary: Order greater than 2}, \ref{Corollary: Finite Dimension} demonstrate that if the Vendi score order $\alpha$ is greater than $2$ or the kernel feature map dimension $d$ is finite, then the Vendi score can converge to the population Vendi with $n=O(d)$ samples. However, the theoretical results do not apply to an order $1\le \alpha < 2$ when the kernel map dimension is infinite, e.g. the original order-1 Vendi score \citep{friedman_vendi_2023} with a Gaussian kernel. Our numerical observations indicate that a standard sample size below 20000 could be insufficient for the convergence of order-1 Vendi score (Figure~\ref{fig:kernel_convergence}). To address this gap, here we define the truncated Vendi score by truncating the eigenspectrum of the kernel matrix, and then show that  the existing kernel approximation algorithms for Vendi score concentrate around this modified Vendi score. 
\begin{definition}\label{Definition: population Vendi}
Consider the normalized kernel matrix $\frac{1}{n}K$ of samples $x_1,\ldots ,x_n$. Then, for an integer parameter $t \ge 1$, consider the top-$t$ eigenvalues of $\frac{1}{n}K$: $\lambda_1\ge \lambda_2\ge \cdots \ge \lambda_t$. Define $S_t = \sum_{i=1}^t \lambda_i$ and consider the truncated probability sequence $[{\lambda}^{\scriptscriptstyle \text{trunc}}_1,\ldots ,{\lambda}^{\scriptscriptstyle \text{trunc}}_t]$: $${\lambda}^{\scriptscriptstyle \text{trunc}}_i= \lambda_i + \frac{1-S_t}{t}\quad \text{for $\; i=1,\ldots ,t$}$$
We define the order-$\alpha$  $t$-truncated Vendi score as
\begin{equation*}
    \mathrm{Vendi}_\alpha^{(t)}(x_1,\ldots ,x_n) := \exp\Bigl(\frac{1}{1-\alpha}\log\Bigl(\sum_{i=1}^t {\lambda}^{{\scriptscriptstyle \text{trunc}}^{\large\alpha}}_i\Bigr)\Bigr)
\end{equation*}
Notably, for order $\alpha=1$, the $t$-truncated Vendi score is:
\begin{equation*}
    \mathrm{Vendi}_1^{(t)}(x_1,\ldots ,x_n) := \exp\Bigl(\sum_{i=1}^t {\lambda}^{\scriptscriptstyle \text{trunc}}_i\log\frac{1}{{\lambda}^{\scriptscriptstyle \text{trunc}}_i}\Bigr)
\end{equation*}
\end{definition}

\begin{remark}\label{Remark: truncated Vendi statistic}
The above definition of $t$-truncated Vendi score leads to the definition of $t$-truncated population Vendi $\mathrm{Vendi}_\alpha^{(t)}(P_X)$, where the mentioned truncation process is applied to the eigenspectrum of the population kernel covariance matrix $\widetilde{C}_X$.
 Note that the truncated Vendi score is a statistic and function of random samples $x_1,\ldots ,x_n$, whereas the truncated population Vendi is deterministic and a characteristic of the population distribution $P_X$.
\end{remark}
According to Definition~\ref{Definition: population Vendi}, we find the probability model with the minimum $\ell_2$-norm difference from the $t$-dimensional vector $[\lambda_1,\ldots ,\lambda_t]$ including only the top-$t$ eigenvalues. Then, we use the order-$\alpha$ entropy of the probability model to define the order-$\alpha$ $t$-truncated population Vendi. Our next result shows that this population quantity can be estimated using $n=O(t)$ samples by $t$-truncated Vendi score for every kernel function.
\begin{theorem}\label{Thm: truncated Vendi guarantee}
Consider the setting in Theorem \ref{Thm: eigenvalue convergence}. Then, for every $n\ge 2+8\log (1/\delta)$, the difference between the  $t$-truncated population Vendi and the empirical $t$-truncated Vendi score of samples $x_1,\ldots ,x_n$ is bounded with probability at least $1-\delta$:
\begin{align*}
    &\Bigl\vert\, \mathrm{Vendi}^{(t)}_\alpha\bigl(x_1,\ldots,x_n\bigr)^{\frac{1-\alpha}{\alpha}} - \mathrm{Vendi}^{(t)}_\alpha\bigl(P_X\bigr)^{\frac{1-\alpha}{\alpha}}\, \Bigr\vert \\
    \le\:  &\sqrt{\frac{32\max\{1,t^{2-\alpha}\}\log\bigl(2/\delta\bigr)}{n}}
\end{align*}
\end{theorem}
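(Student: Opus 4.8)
The plan is to show that, after two elementary reductions, the statement follows directly from the eigenvalue concentration bound of Theorem~\ref{Thm: eigenvalue convergence}. The first reduction rewrites the truncated score as a norm. Unwinding Definition~\ref{Definition: population Vendi},
\[
  \mathrm{Vendi}^{(t)}_\alpha(x_1,\dots,x_n)^{\frac{1-\alpha}{\alpha}}
  = \exp\Bigl(\tfrac1\alpha\log\textstyle\sum_{i=1}^t(\widehat{\lambda}^{\scriptscriptstyle \text{trunc}}_i)^\alpha\Bigr)
  = \bigl\Vert\widehat{\boldsymbol{\lambda}}^{\scriptscriptstyle \text{trunc}}_n\bigr\Vert_\alpha ,
\]
where $\widehat{\boldsymbol{\lambda}}^{\scriptscriptstyle \text{trunc}}_n\in\mathbb R^t$ is the truncated‑and‑shifted top-$t$ empirical eigenvalue vector, and likewise $\mathrm{Vendi}^{(t)}_\alpha(P_X)^{\frac{1-\alpha}{\alpha}}=\Vert\widetilde{\boldsymbol{\lambda}}^{\scriptscriptstyle \text{trunc}}\Vert_\alpha$ with $\widetilde{\boldsymbol{\lambda}}^{\scriptscriptstyle \text{trunc}}$ built from the top-$t$ eigenvalues of $\widetilde{C}_X$. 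So it suffices to control $\bigl|\,\Vert\widehat{\boldsymbol{\lambda}}^{\scriptscriptstyle \text{trunc}}_n\Vert_\alpha-\Vert\widetilde{\boldsymbol{\lambda}}^{\scriptscriptstyle \text{trunc}}\Vert_\alpha\,\bigr|$. (When $\alpha=1$ this exponent is $0$ and the displayed inequality reads $0\le(\cdots)$, so I take $\alpha\neq1$; the informative order-$1$ bound is the logarithmic one, obtained by the same reduction together with the entropy‑Lipschitz estimate of Corollary~\ref{Corollary: Finite Dimension}(a) with $d$ replaced by $t$.)

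The second reduction is that the eigenspectrum truncation map is non-expansive in $\ell_2$. Let $u,v\in\mathbb R^t$ be the top-$t$ blocks of $\widehat{\boldsymbol{\lambda}}_n$ and $\widetilde{\boldsymbol{\lambda}}$ (zero-padded when a spectrum has fewer than $t$ nonzero eigenvalues, as in Theorem~\ref{Thm: eigenvalue convergence}), let $w=u-v$ and $\bar w=\tfrac1t\sum_i w_i$. Since both $\tfrac1nK=\tfrac1n\Phi\Phi^\top$ and $\widetilde{C}_X$ have unit trace for a normalized kernel, the shift constants are $(1-\sum_i u_i)/t$ and $(1-\sum_i v_i)/t$, hence $\widehat{\lambda}^{\scriptscriptstyle \text{trunc}}_i-\widetilde{\lambda}^{\scriptscriptstyle \text{trunc}}_i=w_i-\bar w$, and
\[
  \bigl\Vert\widehat{\boldsymbol{\lambda}}^{\scriptscriptstyle \text{trunc}}_n-\widetilde{\boldsymbol{\lambda}}^{\scriptscriptstyle \text{trunc}}\bigr\Vert_2^2
  =\sum_{i=1}^t(w_i-\bar w)^2=\sum_{i=1}^t w_i^2-t\bar w^2
  \le\sum_{i=1}^t w_i^2\le\bigl\Vert\widehat{\boldsymbol{\lambda}}_n-\widetilde{\boldsymbol{\lambda}}\bigr\Vert_2^2 ,
\]
the last step because the full $\ell_2$ distance also ranges over the tail coordinates (both vectors being sorted the same way). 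By Theorem~\ref{Thm: eigenvalue convergence}, for $n\ge 2+8\log(1/\delta)$ we then get $\Vert\widehat{\boldsymbol{\lambda}}^{\scriptscriptstyle \text{trunc}}_n-\widetilde{\boldsymbol{\lambda}}^{\scriptscriptstyle \text{trunc}}\Vert_2\le\sqrt{32\log(2/\delta)/n}$ with probability at least $1-\delta$.

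To conclude, I would combine the triangle inequality for the $\ell_\alpha$ norm ($\alpha\ge1$) with the finite-dimensional comparison $\Vert z\Vert_\alpha\le t^{(1/\alpha-1/2)_+}\Vert z\Vert_2$ for $z\in\mathbb R^t$:
\[
  \bigl|\,\Vert\widehat{\boldsymbol{\lambda}}^{\scriptscriptstyle \text{trunc}}_n\Vert_\alpha-\Vert\widetilde{\boldsymbol{\lambda}}^{\scriptscriptstyle \text{trunc}}\Vert_\alpha\,\bigr|
  \le\bigl\Vert\widehat{\boldsymbol{\lambda}}^{\scriptscriptstyle \text{trunc}}_n-\widetilde{\boldsymbol{\lambda}}^{\scriptscriptstyle \text{trunc}}\bigr\Vert_\alpha
  \le t^{(1/\alpha-1/2)_+}\bigl\Vert\widehat{\boldsymbol{\lambda}}^{\scriptscriptstyle \text{trunc}}_n-\widetilde{\boldsymbol{\lambda}}^{\scriptscriptstyle \text{trunc}}\bigr\Vert_2 .
\]
Since $2(1/\alpha-1/2)_+=(2/\alpha-1)_+\le(2-\alpha)_+$ for $\alpha\ge1$ (equivalently $(\alpha-1)(\alpha-2)\le0$ on $[1,2]$) and $t\ge1$, this gives $t^{2(1/\alpha-1/2)_+}\le\max\{1,t^{2-\alpha}\}$; substituting the estimate from the second step and taking square roots yields precisely the claimed bound.

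There is no deep obstacle here once Theorem~\ref{Thm: eigenvalue convergence} is available — the proof is a change of variables plus two elementary norm inequalities. The two points that need care are: (i) verifying in the second step that the re‑normalizing shift makes truncation a contraction \emph{uniformly} over the possible mismatch between the empirical and population ranks, which is exactly where trace-$1$ (probability‑vector) normalization of the kernel is used; and (ii) the exponent bookkeeping in the last step so that the regimes $1<\alpha<2$ and $\alpha\ge2$ collapse into the single factor $\max\{1,t^{2-\alpha}\}$, together with isolating the degenerate $\alpha=1$ case and routing it through the logarithmic formulation of Corollary~\ref{Corollary: Finite Dimension}(a).
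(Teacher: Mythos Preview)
Your proof is correct and follows the same three-step skeleton as the paper's: (i) show the truncate-and-shift map is $\ell_2$-non-expansive, (ii) invoke Theorem~\ref{Thm: eigenvalue convergence}, (iii) convert the $\ell_2$ bound to an $\ell_\alpha$ bound. The difference is in how step (i) is established. The paper proves a separate lemma (via KKT conditions) that the truncated vector is the Euclidean projection of the full sorted eigenvalue vector onto the $t$-simplex $\Delta_t=\{u:u_i\ge 0,\ \sum_{i\le t}u_i=1,\ u_i=0\text{ for }i>t\}$, and then invokes the general Hilbert-space fact that metric projections onto convex sets are $1$-Lipschitz. You bypass this entirely: you observe that the difference of the two truncated vectors is the \emph{centering} $w-\bar w\mathbf 1$ of the top-$t$ difference $w=u-v$, and use the one-line identity $\sum_i(w_i-\bar w)^2=\sum_i w_i^2-t\bar w^2\le\|w\|_2^2\le\|\widehat{\boldsymbol\lambda}_n-\widetilde{\boldsymbol\lambda}\|_2^2$. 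This is more elementary and pinpoints exactly where trace-$1$ normalization enters (the two shift constants differ by $-\bar w$). The paper's projection viewpoint, in return, explains \emph{why} the specific shift $(1-S_t)/t$ is canonical---it is the closest probability vector---and gets non-expansiveness for free from convex geometry rather than computation. Your step (iii) is also marginally sharper: you first get the H\"older exponent $t^{1/\alpha-1/2}$ and then relax via $(\alpha-1)(\alpha-2)\le 0$ to $t^{(2-\alpha)/2}$, whereas the paper's Corollary~\ref{Corollary: Finite Dimension} writes the looser exponent directly.
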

\begin{proof}
    We defer the proof to the Appendix.
\end{proof}
As implied by Theorem~\ref{Thm: truncated Vendi guarantee}, the $t$-truncated population Vendi can be estimated using $O(t)$ samples, i.e. the truncation parameter $t$ plays the role of the bounded dimension of a finite-dimension kernel map. Our next theorem shows that the Nyström method \citep{friedman_vendi_2023} and the FKEA method \citep{ospanov_fkea_2024} for reducing the computational costs of Vendi scores have a bounded difference with the truncated population Vendi. 

\begin{theorem}\label{Thm: Nyström, FKEA}
    Consider the setting of Theorem~\ref{Thm: eigenvalue convergence}. (a) Assume that the kernel function is shift-invariant and the FKEA method with $t$ random Fourier features is used to approximate the Vendi score. Then, for every $\delta$ satisfying $n\ge 2+8\log (1/\delta)$, with probability at least $1-\delta$: 
    \begin{align*}
    &\Bigl\vert \mathrm{FKEA}\text{-}\mathrm{Vendi}^{(t)}_\alpha\bigl(x_1,\ldots,x_n\bigr)^{\frac{1-\alpha}{\alpha}}- \mathrm{Vendi}^{(t)}_\alpha\bigl(P_X\bigr)^{\frac{1-\alpha}{\alpha}} \Bigr\vert \\
    &\, \le \sqrt{\frac{128\max\{1,t^{2-\alpha}\}\log\bigl(3/\delta\bigr)}{\min\{n,t\}}}
\end{align*}
(b) Assume that the Nyström method is applied with parameter $t$ for approximating the kernel function. Then, if for some $r\ge 1$, the kernel matrix $K$'s $r$th-largest  eigenvalue satisfies ${\lambda}_{r}  \le \tau $ and $t\ge r\tau \log(n)$, the following holds with probability at least $1-\delta - 2n^{-3}$:  
\begin{align*}
    &\Bigl\vert \mathrm{ Nystrom}\text{-}\mathrm{Vendi}^{(t)}_\alpha\bigl(x_1,\ldots,x_n\bigr)^{\frac{1-\alpha}{\alpha}} - \mathrm{Vendi}^{(t)}_\alpha\bigl(P_X\bigr)^{\frac{1-\alpha}{\alpha}}\Bigr\vert  \\
    & \ \le \mathcal{O}\Bigl(\sqrt{\frac{\max\{1,t^{2-\alpha}\}\log\bigl(2/\delta\bigr)t\tau^2\log(n)^2}{n}}\Bigr)
\end{align*}
\end{theorem}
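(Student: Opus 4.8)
The plan is a two-step reduction common to both parts. Both the FKEA and the Nyström procedures, by construction, output the $t$-truncated Vendi score of $\tfrac1n\widehat K$ for a suitable symmetric PSD proxy matrix $\widehat K$ --- the random-Fourier-feature (RFF) Gram matrix in case (a), and the rank-$\le t$ Nyström reconstruction $CW^{+}C^\top$ in case (b). I would insert the genuine empirical score $\mathrm{Vendi}^{(t)}_\alpha(x_1,\ldots,x_n)$ as an intermediate quantity and use the triangle inequality: the distance (in the $\tfrac{1-\alpha}{\alpha}$ power) from the proxy score to $\mathrm{Vendi}^{(t)}_\alpha(P_X)$ is at most the distance from the proxy score to $\mathrm{Vendi}^{(t)}_\alpha(x_1,\ldots,x_n)$, plus the distance from the latter to $\mathrm{Vendi}^{(t)}_\alpha(P_X)$; the second term is exactly what Theorem~\ref{Thm: truncated Vendi guarantee} controls, by $\sqrt{32\max\{1,t^{2-\alpha}\}\log(2/\delta)/n}$. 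For the first term I would reuse the eigenvalue-perturbation bookkeeping underlying the proof of Theorem~\ref{Thm: truncated Vendi guarantee}, namely that an $\ell_2$ bound $\epsilon$ on the (zero-padded) eigenvalue vectors translates into a $\sqrt{\max\{1,t^{2-\alpha}\}}\,\epsilon$ bound on the corresponding $\mathrm{Vendi}^{(t)}_\alpha(\cdot)^{\frac{1-\alpha}{\alpha}}$ difference. By the Hoffman--Wielandt inequality, $\Vert\boldsymbol\lambda(\tfrac1n\widehat K)-\boldsymbol\lambda(\tfrac1n K)\Vert_2\le\Vert\tfrac1n(\widehat K-K)\Vert_F$, so everything reduces to a Frobenius-norm bound on $\widehat K-K$.

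For part (a), $\widehat K_{ij}=\langle\widetilde\phi(x_i),\widetilde\phi(x_j)\rangle$ with $\widetilde\phi$ the normalized cosine/sine feature map built from $t$ frequencies sampled from the shift-invariant kernel's spectral measure, so that $\widehat K$ is itself a normalized kernel matrix and $\mathbb{E}_\omega[\widehat K]=K$ entrywise. Each entry is a bounded average over the $t$ i.i.d.\ frequencies, so a bounded-differences (McDiarmid) argument applied to the scalar $\Vert\widehat K-K\Vert_F$ --- resampling one frequency perturbs it by $O(n/t)$ --- gives $\Vert\tfrac1n(\widehat K-K)\Vert_F=O(\sqrt{\log(1/\delta)/t})$ with high probability over the frequencies (for every fixed data set, hence marginally). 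Plugging this into the Lipschitz bound yields a first term of order $\sqrt{\max\{1,t^{2-\alpha}\}\log(1/\delta)/t}$; combining with the $\sqrt{\max\{1,t^{2-\alpha}\}\log(1/\delta)/n}$ bound on the second term and union-bounding over the two events produces $\sqrt{128\max\{1,t^{2-\alpha}\}\log(3/\delta)/\min\{n,t\}}$, the $\min\{n,t\}$ arising because the two contributions scale like $1/\sqrt n$ and $1/\sqrt t$. (One may instead route the first term through the RFF covariance operator, which is legitimate precisely because $\widehat K$ is a bona fide normalized kernel matrix and Theorem~\ref{Thm: eigenvalue convergence}'s setting applies to it.)

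For part (b) the first term is handled not by a new concentration inequality but by importing a standard column-sampling Nyström spectral-approximation guarantee. Under the eigenvalue-tail hypothesis $\lambda_r(K)\le\tau$ one has $\Vert K-K_r\Vert_F\le\sqrt{\tau\,\mathrm{Tr}(K)}$, and with the landmark budget $t\ge r\tau\log n$ such a guarantee bounds $\Vert\tfrac1n(K-\widehat K)\Vert_F$ --- up to polylogarithmic factors in $n$, and an extra $\sqrt t$ if one only has the operator-norm version and then truncates to the top $t$ coordinates --- by a quantity of the order $\sqrt{t\tau^2\log(n)^2/n}$, with failure probability at most $2n^{-3}$. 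Feeding this through the Lipschitz/Hoffman--Wielandt step and union-bounding with the probability-$(1-\delta)$ event of Theorem~\ref{Thm: truncated Vendi guarantee} for the second term yields the claimed $\mathcal{O}\bigl(\sqrt{\max\{1,t^{2-\alpha}\}\log(2/\delta)\,t\tau^2\log(n)^2/n}\bigr)$ with confidence $1-\delta-2n^{-3}$.

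The hard part will be the reduction step together with the choice of matrix norm. One must (1) verify that the two algorithms genuinely output the $t$-truncated Vendi of $\tfrac1n\widehat K$, correctly tracking the mass-redistribution constant $(1-S_t)/t$ when $\widehat K$ has rank exceeding $t$ (as with the $2t$-dimensional RFF map); (2) work with the Frobenius rather than the operator norm so that the dimensional factor $\sqrt t$ cancels against the $\sqrt{\max\{1,t^{2-\alpha}\}}$ Lipschitz constant instead of degrading to $\sqrt n$; and (3) for Nyström, pin down a column-sampling bound whose hypotheses match $\lambda_r(K)\le\tau$ and $t\ge r\tau\log n$ and whose failure probability is $O(n^{-3})$, then reconcile its exact $\tau,r,t,n$ dependence with the form stated in the theorem --- the fussiest piece of bookkeeping. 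The remaining steps are just the triangle inequality and results already in hand.
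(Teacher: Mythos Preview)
Your proposal is correct and follows essentially the same route as the paper: triangle-inequality splitting through the empirical $t$-truncated score, Hoffman--Wielandt (Weyl for the Nystr\"om operator-norm bound) to pass from matrix perturbation to eigenvalue perturbation, the projection-contraction argument of Theorem~\ref{Thm: truncated Vendi guarantee} to handle the truncation, and a union bound. The only cosmetic differences are that the paper uses the Vector Bernstein inequality rather than McDiarmid for the RFF Frobenius bound in part~(a), and for part~(b) it invokes specifically Theorem~3 of Xu et al.\ (2015) for the spectral-norm Nystr\"om guarantee, then---exactly as you anticipated---passes through Weyl's inequality and picks up the $\sqrt t$ factor when converting the top-$t$ $\ell_\infty$ bound to $\ell_2$.
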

\begin{proof}
    We defer the proof to the Appendix.
\end{proof}
%Theorem~\ref{Thm: Nyström, FKEA} shows FKEA and Nyström methods with parameter $t$ lead to estimating $t$-truncated population Vendi.%, connecting the defined statistic to the existing algorithms for computing the Vendi score. 

\vspace{-3mm}
\section{Numerical Results}
\begin{figure*}
    \includegraphics[width=0.95\linewidth]{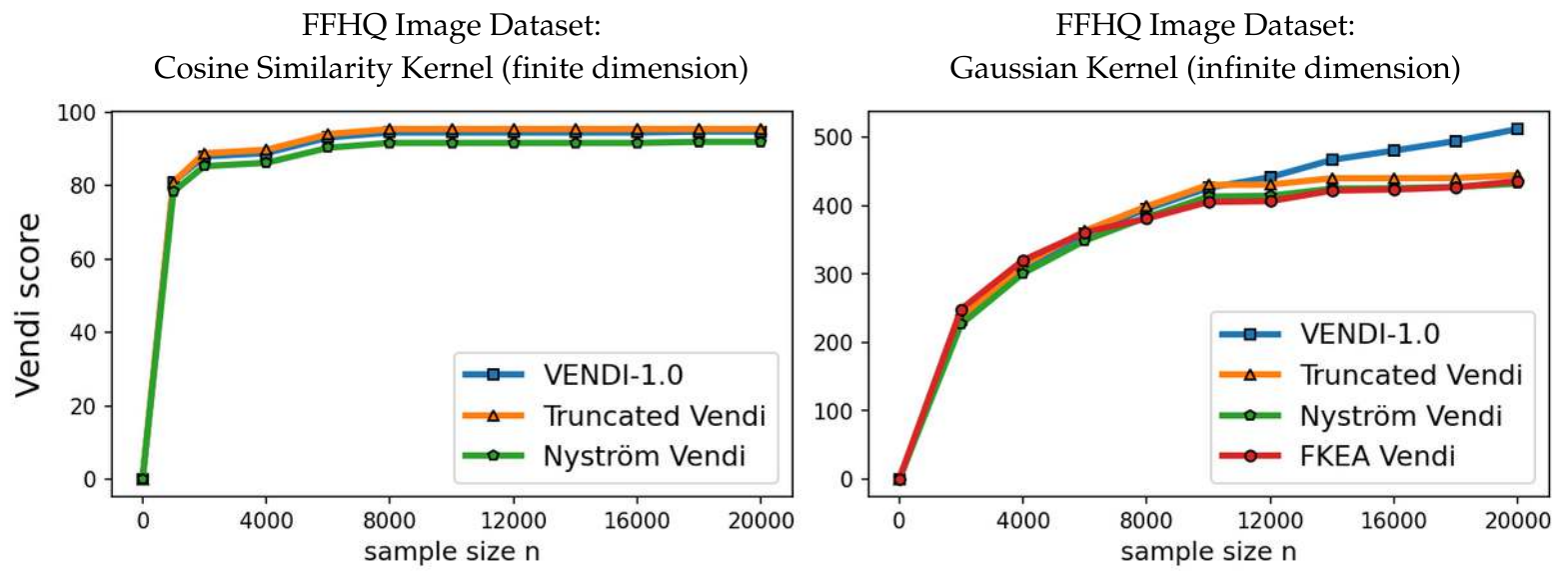}
    \caption{Statistical convergence of Vendi score for different sample sizes on FFHQ\citep{karras2019style} data: (Left plot) finite-dimension cosine similarity kernel (Right plot) infinite dimension Gaussian kernel with bandwidth $\sigma=35$. \emph{DINOv2} embedding (dimension 768) is used in computing the scores.}
    \label{VENDI_ffhq_convergence}
\end{figure*}

\begin{figure*}
    \includegraphics[width=0.95\linewidth]{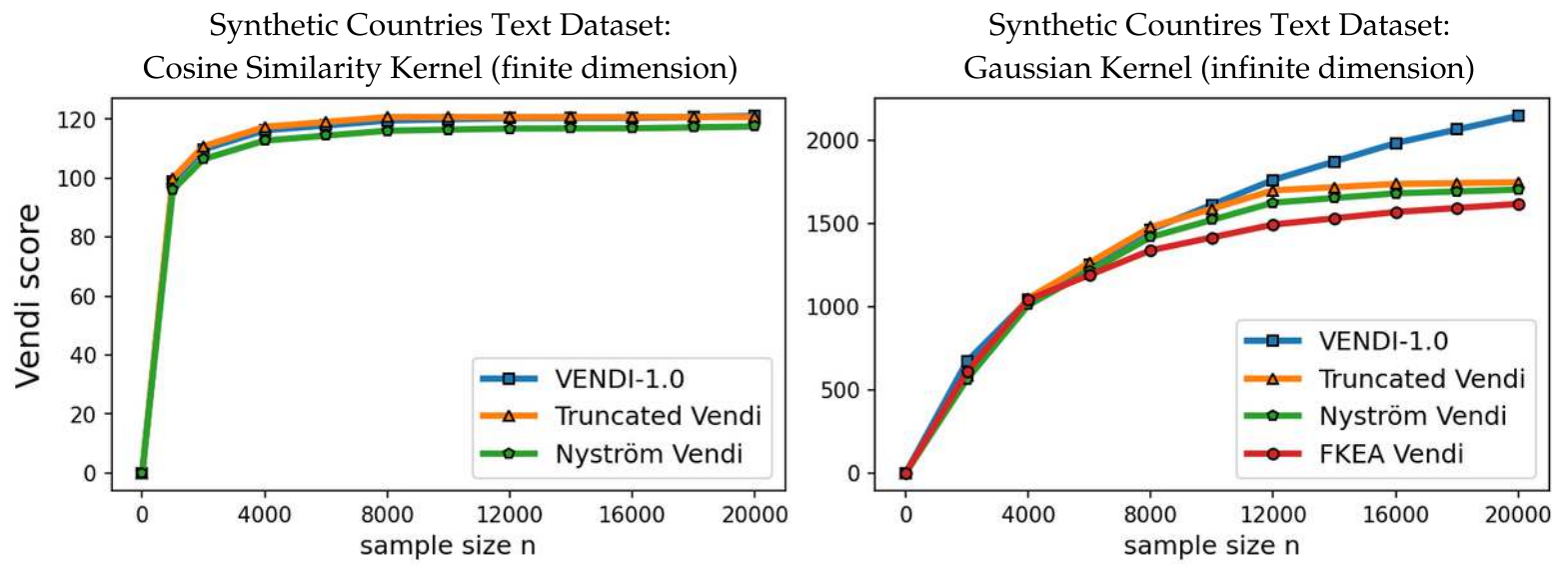}
    \caption{Statistical convergence of Vendi score for different sample sizes on Synthetic Countries data: (Left plot) finite-dimension cosine similarity kernel (Right plot) infinite dimension Gaussian kernel with bandwidth $\sigma=0.6$. \emph{text-embedding-3-large} embedding (dimension 3072) is used in computing the scores.}
    \label{VENDI_countries_convergence}
\end{figure*}

\begin{figure*}
    \includegraphics[width=0.95\linewidth]{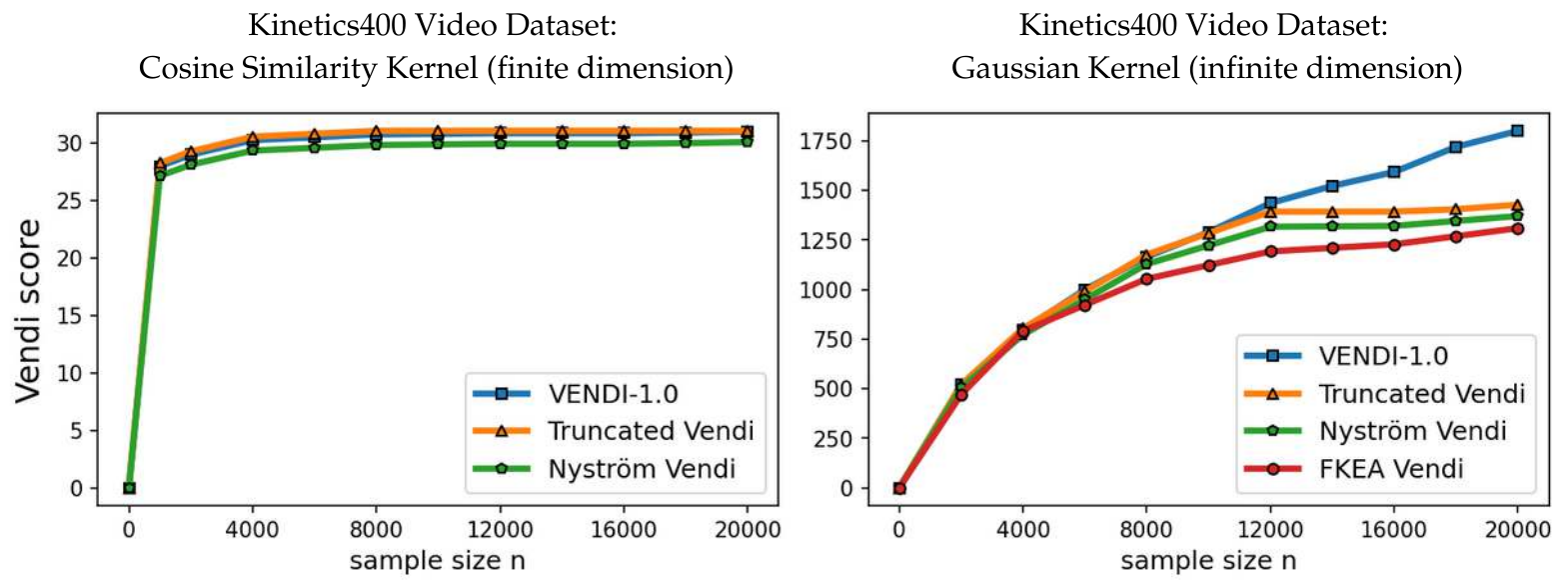}
    \caption{Statistical convergence of Vendi score for different sample sizes on Kinetics400\citep{kay2017kinetics} data: (Left plot) finite-dimension cosine similarity kernel (Right plot) infinite dimension Gaussian kernel with bandwidth $\sigma=4.0$. \emph{I3D} embedding (dimension 1024) is used in computing the scores.}
    \label{VENDI_k400_convergence}
\end{figure*}

\begin{figure*}
    \centering
    \includegraphics[width=0.83\linewidth]{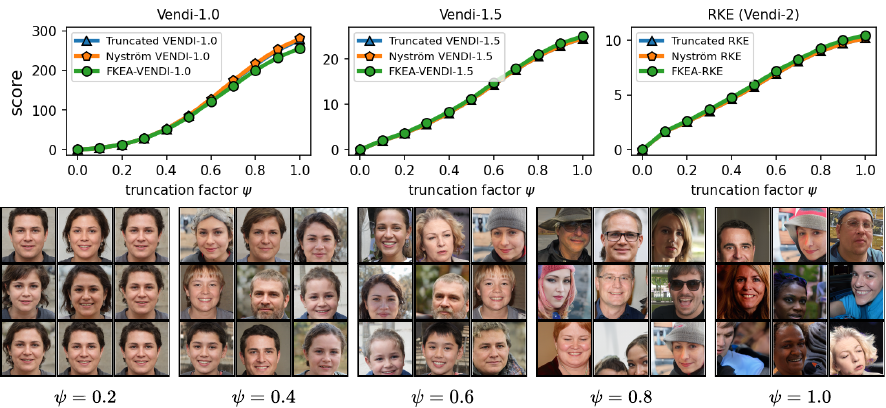}
    \caption{Diversity evaluation of Vendi scores on truncated StyleGAN3 generated FFHQ dataset with varying truncation coefficient $\psi$. Fixed sample size $n=$20k is used for estimating the scores.
    }  \vspace{-4mm}\label{VENDI_ffhq_truncation}
\end{figure*}
%\end{document}

\begin{figure*}
    \centering
    \includegraphics[width=0.83\linewidth]{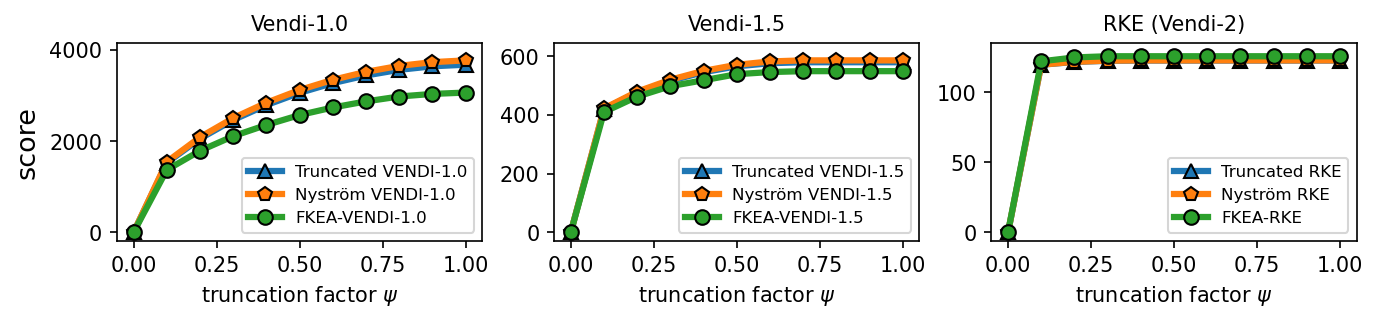}
    \caption{Diversity evaluation of Vendi scores on truncated StyleGAN-XL generated ImageNet dataset with varying truncation coefficient $\psi$. Fixed sample size $n=$20k is used for estimating the scores.
    }
    \vspace{-5mm}
    \label{VENDI_imagenet_truncation}
\end{figure*}

We evaluated the convergence of the Vendi score, the truncated Vendi score, and the proxy Vendi scores using the Nyström method and FKEA in our numerical experiments. We provide a comparative analysis of these scores across different data types and models, including image, text, and video. In our experiments, we considered the cosine similarity kernel as a standard kernel function with a finite-dimension map and the Gaussian (RBF) kernel as a kernel function with an infinite-dimension feature map. In the experiments with Gaussian kernels, we matched the kernel bandwidth parameter with those chosen by \citep{jalali_information-theoretic_2023,ospanov_fkea_2024} for the same datasets. We used 20,000 number of samples per score computation, consistent with standard practice in the literature. %The experimental results were computed using four RTX 3090 GPUs. 
To investigate how computation-cutting methods compare to each other, in the experiments we matched the truncation parameter $t$ of our defined $t$-truncated Vendi score with the Nyström method's hyperparameter on the number of randomly selected rows of kernel matrix and the FKEA's hyperparameter of the number of random Fourier features. The Vendi and FKEA implementations were adopted from the corresponding references' GitHub webpages, while the Nyström method was adopted from the \texttt{scikit-learn} Python package.

%truncated Vendi score across various data types and models, including image, text, and video. In this section, we numerically test that the Nyström, FKEA, and truncated Vendi scores take on similar values and correlate with the diversity of data. %We provide a comparative analysis across multiple test cases. 

%Unless stated otherwise, 
\subsection{Convergence Analysis of Vendi Scores}

To assess the convergence of the discussed Vendi scores, we conducted experiments on four datasets including ImageNet and FFHQ~\citep{karras2019style} image datasets, a synthetic text dataset with 400k paragraphs generated by GPT-4 about 100 randomly selected countries, and the Kinetics video dataset \citep{kay2017kinetics}. Our results, presented in Figures \ref{VENDI_ffhq_convergence}, \ref{VENDI_countries_convergence}, and \ref{VENDI_k400_convergence}, show that for the finite-dimension cosine similarity kernel the Vendi score converges rapidly to the underlying value and the proxy versions including truncated and Nyström Vendi scores were almost identical to the original Vendi score. This observation is consistent with our theoretical results on the convergence of Vendi scores under finite-dimension kernel maps. On the other hand, in the case of infinite dimension Gaussian kernel, we observed that the $\mathrm{Vendi}_1$ score did not converge using 20k samples and the score value kept growing with a considerable rate. However, the $t$-truncated Vendi score with $t=10000$ converged to its underlying statistic shortly after 10000 samples were used. Consistent with our theoretical result, the proxy Nyström and FKEA estimated scores with their rank hyperparameter matched with $t$ also converged to the limit of the truncated Vendi scores. The numerical results show the connection between the truncated Vendi score and the existing kernel methods for approximating the Vendi score.

\subsection{Correlation between the truncated Vendi score and diversity of data}

We performed experiments to test the correlation between the truncated Vendi score and the ground-truth diversity of data. To do this, we applied the truncation technique to the FFHQ-based StyleGAN3~\citep{karras2021aliasfree} model and the ImageNet-based StyleGAN-XL~\citep{Sauer2021ARXIV} model and simulated generative models with different underlying diversity by varying the truncation technique. Considering the Gaussian kernel, we estimated the $t$-truncated Vendi score with $t=10000$ by averaging the estimated $t$-truncated Vendi scores over $5$ independent datasets of size 20k where the score seemed to converge to its underlying value. Figures~\ref{VENDI_ffhq_truncation},~\ref{VENDI_imagenet_truncation} show how the estimated statistic correlates with the truncation parameter for order-$\alpha$ Vendi scores with $\alpha = 1, \, 1.5, 2$. In all these experiments, the estimated truncated Vendi score correlated with the underlying diversity of the models. In addition, we plot the proxy Nyström and FKEA proxy Vendi values computed using 20000 samples which remain close to the estimated $t$-truncated statistic. These empirical results suggest that the estimated $t$-truncated Vendi score with Gaussian kernel can be used to evaluate the diversity of generated data. Also, the Nyström and FKEA methods were both computationally efficient in estimating the truncated Vendi score from limited generated data. We defer the presentation of the additional numerical results on the convergence of Vendi scores with different orders, kernel functions and embedding spaces to the Appendix.

\section{Conclusion}

In this work, we investigated the statistical convergence behavior of Vendi diversity scores estimated from empirical samples. We highlighted that, due to the high computational complexity of the score for datasets larger than a few tens of thousands of generated data points, the score is often calculated using sample sizes below 10,000. We demonstrated that such restricted sample sizes do not pose a problem for statistical convergence as long as the kernel feature dimension is bounded. However, our numerical results showed a lack of convergence to the population Vendi when using an infinite-dimensional kernel map, such as the Gaussian kernel. To address this gap, we introduced the truncated population Vendi as an alternative target quantity for diversity evaluation. We showed that existing Nyström and FKEA methods for approximating Vendi scores concentrate around this truncated population Vendi. An interesting future direction is to explore the relationship between other kernel approximation techniques and the truncated population Vendi. Also, a comprehensive analysis of the computational-statistical trade-offs involved in estimating the Vendi score is another relevant future direction.

\begin{acknowledgements} 
This work is partially supported by a grant from the Research Grants Council of the Hong Kong Special Administrative Region, China, Project 14209920, and is partially supported by CUHK Direct Research Grants with CUHK Project No. 4055164 and 4937054.
Finally, the authors thank the anonymous reviewers for their thoughtful feedback and constructive suggestions.

\end{acknowledgements}

\clearpage
\clearpage

% References
\bibliography{references}

\newpage

\onecolumn
% If your paper is accepted, change the options for the package
% aistats2025 as follows:
%
%\usepackage[accepted]{aistats2025}
%
% This option will print headings for the title of your paper and
% headings for the authors names, plus a copyright note at the end of
% the first column of the first page.

% If you set papersize explicitly, activate the following three lines:
%\special{papersize = 8.5in, 11in}
%\setlength{\pdfpageheight}{11in}
%\setlength{\pdfpagewidth}{8.5in}

% If you use natbib package, activate the following three lines:
%\usepackage[round]{natbib}
%\renewcommand{\bibname}{References}
%\renewcommand{\bibsection}{\subsubsection*{\bibname}}

% If you use BibTeX in apalike style, activate the following line:
%\bibliographystyle{apalike}

% If your paper is accepted and the title of your paper is very long,
% the style will print as headings an error message. Use the following
% command to supply a shorter title of your paper so that it can be
% used as headings.
%
%\runningtitle{I use this title instead because the last one was very long}

% If your paper is accepted and the number of authors is large, the
% style will print as headings an error message. Use the following
% command to supply a shorter version of the authors names so that
% they can be used as headings (for example, use only the surnames)
%
%\runningauthor{Surname 1, Surname 2, Surname 3, ...., Surname n}

% Supplementary material: To improve readability, you must use a single-column format for the supplementary material.
\title{Supplementary Material}
\maketitle
\begin{appendices}

\section{Proofs}
\subsection{Proof of Theorem \ref{Thm: eigenvalue convergence}}
To prove the theorem, we will use the following lemma followed from \citep{gross2011recovering,kohler2017sub}.
\begin{lemma}[Vector Bernstein Inequality \citep{gross2011recovering,kohler2017sub}]\label{Lemma: Vector Bernstein} Suppose that $z_1,\ldots ,z_n$ are independent and identically distributed random vectors with zero mean $\mathbb{E}[z_i] =\mathbf{0}$ and bounded $\ell_2$-norm $\Vert z_i\Vert_2\le c$. Then, for every $0\le\epsilon \le c$, the following holds
\begin{equation*}
    \mathbb{P}\biggl(\,\Bigl\Vert \frac{1}{n}\sum_{i=1}^n z_i  \Bigr\Vert_2 \ge \epsilon\,\biggr) \, \le\, \exp\Bigl(-\frac{n\epsilon^2}{8c^2} +\frac{1}{4}\Bigr)
\end{equation*}

\end{lemma}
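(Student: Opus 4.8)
The plan is to recall the standard proof of the vector Bernstein inequality, which splits into two stages: first bound the \emph{expected} norm of the sum, and then concentrate the norm around its expectation by a bounded-differences argument. Set $S_n=\sum_{i=1}^n z_i$ and $Y=\Vert S_n\Vert_2$. Since the event $\{\frac1n\Vert S_n\Vert_2\ge\epsilon\}$ coincides with $\{Y\ge n\epsilon\}$, it suffices to bound $\mathbb{P}(Y\ge n\epsilon)$.

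First I would control $\mathbb{E}[Y]$. By Jensen's inequality $\mathbb{E}[Y]\le\sqrt{\mathbb{E}[Y^2]}$, and because the $z_i$ are independent with $\mathbb{E}[z_i]=\mathbf{0}$, every cross term in $\mathbb{E}\bigl[\Vert S_n\Vert_2^2\bigr]$ vanishes, leaving $\mathbb{E}[Y^2]=\sum_{i=1}^n\mathbb{E}\bigl[\Vert z_i\Vert_2^2\bigr]\le n c^2$; hence $\mathbb{E}[Y]\le c\sqrt{n}$. Next I would view $Y$ as a function of the independent inputs $z_1,\dots,z_n$: replacing a single $z_j$ by $z_j'$ changes $Y$ by at most $\Vert z_j-z_j'\Vert_2\le 2c$ by the reverse triangle inequality, so McDiarmid's inequality with all bounded-difference constants equal to $2c$ yields $\mathbb{P}\bigl(Y\ge\mathbb{E}[Y]+\lambda\bigr)\le\exp\bigl(-\lambda^2/(2nc^2)\bigr)$ for every $\lambda\ge 0$.

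It then remains to combine the two estimates and match the stated constants. If $\epsilon\le c/\sqrt{n}$ then $n\epsilon^2/(8c^2)\le\frac18$, so the claimed right-hand side is at least $\exp(\tfrac18)>1$ and the inequality is trivial. For $c/\sqrt{n}<\epsilon\le c$, I would take $\lambda=n\epsilon-\mathbb{E}[Y]\ge n\epsilon-c\sqrt{n}>0$; since $t\mapsto\exp(-t^2/(2nc^2))$ is decreasing on $[0,\infty)$ this gives $\mathbb{P}(Y\ge n\epsilon)\le\exp\bigl(-(n\epsilon-c\sqrt{n})^2/(2nc^2)\bigr)$. Substituting $u=\sqrt{n}\,\epsilon/c\ge 1$, the exponent becomes $-(u-1)^2/2$, and the claim follows once $-(u-1)^2/2\le -u^2/8+\frac14$, i.e. $3u^2-8u+6\ge 0$, which holds since its discriminant $64-72$ is negative.

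The argument is essentially routine, so the only real obstacle is bookkeeping: reproducing the exact constants $8$ and $\tfrac14$ forces one to deliberately loosen the mean bound through the elementary quadratic inequality in the last step rather than keep it sharp. The structural hypotheses that must be used are independence of the $z_i$ (needed both for the vanishing cross terms and for McDiarmid) and the almost-sure norm bound (which supplies both $\mathbb{E}[Y^2]\le nc^2$ and the bounded-difference constants $2c$). An alternative to McDiarmid would be a Talagrand/Lipschitz concentration inequality applied to $Y$, but the bounded-differences route is the most direct given only almost-sure control on the $z_i$.
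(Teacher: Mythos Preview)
Your proof is correct. Note that the paper itself does not prove this lemma: it is stated as a cited result from \citep{gross2011recovering,kohler2017sub} and used as a black box in the proof of Theorem~\ref{Thm: eigenvalue convergence}. The argument you give (bound $\mathbb{E}\Vert S_n\Vert_2$ via the second moment, then concentrate $\Vert S_n\Vert_2$ around its mean by McDiarmid, and finally absorb the $c\sqrt{n}$ mean bound into the constants through the quadratic $3u^2-8u+6\ge 0$) is precisely the proof that appears in the cited reference of Kohler and Lucchi, so you have recovered the intended derivation.
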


We apply the above Vector Bernstein Inequality to the random vectors $\phi(x_1)\otimes \phi(x_1) ,\ldots , \phi(x_1)\otimes \phi(x_1)$ where $\otimes$ denotes the Kronecker product. To do this, we define vector $v_i = \phi(x_i)\otimes \phi(x_i) - \mathbb{E}_{x\sim P }\bigl[\phi(x)\otimes \phi(x)\bigr]$ for every $i$. Note that $v_i$ is, by definition, a zero-mean vector and also for every $x$ we have the following for the normalized kernel function $k$:
\begin{equation*}
    \bigl\Vert \phi(x) \otimes \phi(x)\bigr\Vert^2_2 = \bigl\Vert \phi(x) \bigr\Vert^2_2 \cdot \bigl\Vert\phi(x)\bigr\Vert^2_2 = k(x,x)\cdot k(x,x) =1
\end{equation*}
Then, the triangle inequality implies that $$\bigl\Vert v_i\bigr\Vert_2 \le \bigl\Vert \phi(x_i)\otimes \phi(x_i)\bigr\Vert_2 + \bigl\Vert \mathbb{E}_{x\sim P }\bigl[\phi(x)\otimes \phi(x)\bigr] \bigr\Vert_2 \le \bigl\Vert \phi(x_i)\otimes \phi(x_i)\bigr\Vert_2 +  \mathbb{E}_{x\sim P }\bigl[\bigl\Vert\phi(x)\otimes \phi(x)\bigr\Vert_2\bigr]   = 2$$ 
As a result, the Vector Bernstein Inequality leads to the following for every $0\le \epsilon\le 2$:
\begin{equation*}
    \mathbb{P}\Bigl(\Bigl\Vert \frac{1}{n}\sum_{i=1}^n \phi(x_i)\otimes \phi(x_i) - \mathbb{E}_{x\sim P }\bigl[\phi(x)\otimes \phi(x)\bigr]  \Bigr\Vert_2 \ge \epsilon\Bigr) \, \le\, \exp\Bigl(\frac{8-n\epsilon^2}{32}\Bigr)
\end{equation*}
On the other hand, note that $\phi(x)\otimes \phi(x)$ is the vectorized version of rank-1 $\phi(x) \phi(x)^\top$, which shows that the above inequality is equivalent to the following where $\Vert\cdot\Vert_{\mathrm{HS}}$ denotes the Hilbert-Schmidt norm, which will simplify to the Frobenius norm in the finite dimension case,
\begin{align*}
    &\mathbb{P}\Bigl(\Bigl\Vert \frac{1}{n}\sum_{i=1}^n  \bigl[\phi(x_i) \phi(x_i)^\top\bigr] - \mathbb{E}_{x\sim P }\bigl[\phi(x) \phi(x)^\top\bigr]  \Bigr\Vert_{\mathrm{HS}} \ge \epsilon\Bigr) \, \le\, \exp\Bigl(\frac{8-n\epsilon^2}{32}\Bigr) \\
    \Longrightarrow \;\; & \mathbb{P}\Bigl(\Bigl\Vert C_X - \widetilde{C}_X  \Bigr\Vert_{\mathrm{HS}} \ge \epsilon\Bigr) \, \le\, \exp\Bigl(\frac{8-n\epsilon^2}{32}\Bigr)
\end{align*}
Subsequently, we can apply the Hoffman-Wielandt inequality which shows that for the sorted eigenvalue vectors of $C_X$ (denoted by $\widehat{\boldsymbol{\lambda}}_n$ in the theorem) and $\widetilde{C}_X$ (denoted by $\widetilde{\boldsymbol{\lambda}}$ in the theorem) we will have
$\Vert \widehat{\boldsymbol{\lambda}}_n - \widetilde{\boldsymbol{\lambda}} \Vert_2\le \Vert C_X - \widetilde{C}_X  \Vert_{\mathrm{HS}}$, which together with the previous inequality leads to

\begin{align*}
    & \mathbb{P}\Bigl(\bigl\Vert \widehat{\boldsymbol{\lambda}}_n - \widetilde{\boldsymbol{\lambda}} \bigr\Vert_2 \ge \epsilon\Bigr) \, \le\, \exp\Bigl(\frac{8-n\epsilon^2}{32}\Bigr)
\end{align*}
If we define $\delta = \exp\bigl((8-n\epsilon^2)/32\bigr)$ that implies $\epsilon \le \sqrt{\frac{32\log(2/\delta)}{n}}$, we obtain the following for every $\delta \ge \exp\bigl((2-n)/8\bigr)$ (since we suppose $0\le \epsilon \le 2$)
\begin{align*}
    & \mathbb{P}\Bigl(\bigl\Vert \widehat{\boldsymbol{\lambda}}_n - \widetilde{\boldsymbol{\lambda}} \bigr\Vert_2 \ge \sqrt{\frac{32\log(2/\delta)}{n}}\Bigr) \, \le\, \delta \\
    \Longrightarrow \;\; & \mathbb{P}\Bigl(\bigl\Vert \widehat{\boldsymbol{\lambda}}_n - \widetilde{\boldsymbol{\lambda}} \bigr\Vert_2 \le \sqrt{\frac{32\log(2/\delta)}{n}}\Bigr) \, \ge\, 1- \delta
\end{align*}
which completes the proof.

\subsection{Proof of Corollary \ref{Corollary: Finite Dimension}}

\textbf{The case of $\alpha=1$}. We show that Theorem \ref{Thm: eigenvalue convergence} on the concentration of  the eigenvalues $\boldsymbol{\lambda}=[\lambda_1,\ldots , \lambda_d]$ will further imply a concentration bound for the logarithm of Vendi-1 score. In the case of $\mathrm{Vendi}_1$ (when $\alpha\rightarrow 1^+$), the concentration bound will be formed for the logarithm of the Vendi score, i.e. the Von-Neumann entropy (denoted as $H_{\alpha}$):

$$H_1(C_X):= H_1(\boldsymbol{\lambda}) = \sum_{i=1}^d \widetilde{\lambda}_i \log \frac{1}{\widetilde{\lambda}_i}$$

Theorem \ref{Thm: eigenvalue convergence} shows that $\Vert \widehat{\boldsymbol{\lambda}}_n - \widetilde{\boldsymbol{\lambda}} \Vert_2 \le \sqrt{\frac{32\log(2/\delta)}{n}}$ with probability $1-\delta$. To convert this concentration bound to a bound on the order-1 entropy (for Vendi-1 score) difference $H_1({\widehat{C}_n}) - H_1(C_X)$, we leverage the following two lemmas:

\begin{lemma}\label{Lemma: Log inequality}
For every $0\le \alpha ,\beta\le 1$ such that $|\beta-\alpha|\le \frac{1}{e}$, we have $$\Bigl\vert\alpha \log \frac{1}{\alpha} - \beta \log \frac{1}{\beta}\Bigr\vert \,\le\, \vert\beta -\alpha\vert\log\frac{1}{\vert\beta -\alpha\vert}$$ 
\end{lemma}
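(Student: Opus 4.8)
The plan is to set $f(x) := -x\log x = x\log\frac1x$ on $[0,1]$ (with $f(0)=0$ by continuity), and reduce the claim to the statement that $|f(\alpha)-f(\beta)|\le f(|\beta-\alpha|)$ whenever $|\beta-\alpha|\le \tfrac1e$. Since this is symmetric in $\alpha,\beta$, I would assume $\alpha\le\beta$ and write $c:=\beta-\alpha\in[0,\tfrac1e]$, so that the target becomes the two-sided bound $|f(\alpha)-f(\beta)|\le f(c)$. The two directions of this inequality are handled by different elementary arguments.

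For the upper bound $f(\beta)-f(\alpha)\le f(c)$, I would use that $f$ is concave on $[0,1]$ (as $f''(x)=-1/x<0$) and satisfies $f(0)=0$; such a function is \emph{subadditive} on $[0,1]$. Indeed, for $u,v\ge 0$ with $u+v\le 1$, writing $u=\tfrac{u}{u+v}(u+v)+\tfrac{v}{u+v}\cdot 0$ and applying Jensen's inequality gives $f(u)\ge\tfrac{u}{u+v}f(u+v)$, and symmetrically for $f(v)$; summing yields $f(u)+f(v)\ge f(u+v)$. Taking $u=\alpha$, $v=c$ gives $f(\beta)=f(\alpha+c)\le f(\alpha)+f(c)$, i.e. $f(\beta)-f(\alpha)\le f(c)$.

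For the reverse bound $f(\alpha)-f(\beta)\le f(c)$, I would introduce $\psi(x):=f(x)-f(x+c)$ on $[0,1-c]$ and compute $\psi'(x)=f'(x)-f'(x+c)=\log(x+c)-\log x=\log\!\bigl(1+\tfrac{c}{x}\bigr)\ge 0$, so $\psi$ is nondecreasing. Hence $\psi(\alpha)\le\psi(1-c)=f(1-c)-f(1)=f(1-c)$, and it remains to show $f(1-c)\le f(c)$ for $c\le\tfrac1e$. This last step is where the hypothesis $|\beta-\alpha|\le\tfrac1e$ is actually used: since $c\le\tfrac1e$ we have $\log\tfrac1c\ge 1$, so $f(c)=c\log\tfrac1c\ge c$; and using the elementary inequality $-\log(1-c)\le\tfrac{c}{1-c}$ (which follows since $g(c)=\tfrac{c}{1-c}+\log(1-c)$ has $g(0)=0$ and $g'(c)=\tfrac{c}{(1-c)^2}\ge 0$), we get $f(1-c)=(1-c)\bigl(-\log(1-c)\bigr)\le c$. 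Therefore $f(1-c)\le c\le f(c)$, which chains to $f(\alpha)-f(\beta)=\psi(\alpha)\le f(c)$.

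Combining the two directions yields $|f(\alpha)-f(\beta)|\le f(c)=|\beta-\alpha|\log\tfrac{1}{|\beta-\alpha|}$, with the degenerate case $c=0$ trivial under the convention $0\log\tfrac10=0$. I expect the only genuinely delicate point to be the reverse direction: subadditivity alone does not give it (it points the wrong way), and one needs the monotonicity-of-$\psi$ reduction to the boundary value $f(1-c)$ followed by the sharp comparison $f(1-c)\le f(c)$, which is exactly where the threshold $\tfrac1e$ enters.
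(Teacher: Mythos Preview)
Your argument is correct. It also differs from the paper's route in a useful way. The paper writes $g(z)=z\log\frac1z$, locates the maximizer at $z=\tfrac1e$, and splits into three cases according to where $\alpha,\beta$ sit relative to $\tfrac1e$; in each case the concavity/monotonicity of $g$ pins down the extremal pair $(\alpha^*,\beta^*)$ with $|\alpha^*-\beta^*|=c$ that maximizes $|g(\alpha)-g(\beta)|$, and the hardest case (both points in $[\tfrac1e,1]$) reduces to showing $g(1-c)\le g(c)$, which they prove by analyzing the sign of $h(c)=g(c)-g(1-c)$ on $[0,\tfrac1e]$. You instead avoid cases entirely: one direction is handled by the subadditivity $f(u+v)\le f(u)+f(v)$ (a clean consequence of concavity plus $f(0)=0$), and the other by observing that $\psi(x)=f(x)-f(x+c)$ is nondecreasing, so its maximum over $\alpha\in[0,1-c]$ is $\psi(1-c)=f(1-c)$. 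Both proofs therefore converge on the same boundary inequality $f(1-c)\le f(c)$ for $c\le\tfrac1e$; your chain $f(1-c)\le c\le f(c)$, obtained from $-\log(1-c)\le\tfrac{c}{1-c}$ and $\log\tfrac1c\ge 1$, is a slightly slicker alternative to the paper's analysis of $h$. The payoff of your approach is a shorter, case-free argument; the paper's case split is perhaps more visual but heavier.
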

\begin{proof}
Let $c = |\alpha-\beta|$, where $c\in[0, \frac{1}{e}]$. Defining $g(z) = z\log(\frac{1}{z})$, the first-order optimality condition $g'(z)=-\log(z) - 1 = 0$ yields $\frac{1}{e}$ as the local maximum of $g(z)$. Therefore, there are three cases of placement of $\alpha$ and $\beta$ on the interval $[0,1]$: $\alpha$ and $\beta$ appear before maximum point, after maximum point or maximum point is between $\alpha$ and $\beta$. We show that regardless of the placement of $\alpha$ and $\beta$, the above inequality remains true.
\begin{itemize}
    \item \textbf{Case 1:} $\alpha, \beta \in [0, \frac{1}{e}]$. Note that $g''(z) = -\frac{1}{z}$. Since the second-order derivative is negative and the function $g$ is monotonically increasing within the interval $[0,\frac{1}{e}]$, the gap between $g(\alpha)$ and $g(\beta)$ is maximized when $\alpha^*=0$ and $\beta^*=c-\alpha^* = c$. This directly leads to the desired bound as follows:
    $$\Bigl\vert\alpha \log \frac{1}{\alpha} - \beta \log \frac{1}{\beta}\Bigr\vert \, \le\, \Bigl\vert\alpha^* \log \frac{1}{\alpha^*} - \beta^* \log \frac{1}{\beta^*}\Bigr\vert \, =\, \bigl\vert0 \log 0 - c \log \frac{1}{c}\bigr\vert \, \le\, c \log\frac{1}{c}$$ 
    Here, we use the standard limit $0\log 0 = 0$.
    \item \textbf{Case 2:} $\alpha, \beta \in [\frac{1}{e}, 1]$. In this case, we note that $g$ is concave yet decreasing over $[\frac{1}{e},1]$, and so the gap between $g(\alpha)$ and $g(\beta)$ will be maximized when $\alpha^*=1-c$ and $\beta^*=1$. This leads to:
    $$\Bigl\vert\alpha \log \frac{1}{\alpha} - \beta \log \frac{1}{\beta}\Bigr\vert \, \le\, \Bigl\vert\alpha^* \log \frac{1}{\alpha^*} - \beta^* \log \frac{1}{\beta^*}\Bigr\vert\, =\,   (1-c) \log \frac{1}{(1-c)} \le c\log\frac{1}{c}$$ 
    where the last inequality holds because $c\in[0, \frac{1}{e}]$, and if we define the function $h(c)= c\log\frac{1}{c} - (1-c)\log\frac{1}{1-c}$, then we have $h'(c) = \log\frac{1}{c(1-c)} - 2$, which is positive over $c\in[0,c_0]$ ($e^{-2}<c_0<e^{-1}$ is where $c_0(1-c_0)=e^{-2}$), and then negative over $[c_0,\frac{1}{e}]$, and hence $h(c)\ge \min\{h(0),h(1/e)\} = 0$ for every $c\in[0,1/e]$.  
    \item \textbf{Case 3:} $\alpha \in [0, \frac{1}{e})$ and $\beta \in (\frac{1}{e}, 1]$. When $\alpha$ and $\beta$ lie on the opposite ends from the maximum point, the inequality becomes:
    $$\bigl\vert\alpha \log \frac{1}{\alpha} - \beta \log \frac{1}{\beta}\bigr\vert \leq \mathrm{max} \Bigr\{ \bigl\vert(1/e)\log \frac{1}{1/e} - \beta\log\frac{1}{\beta}\bigr\vert , \bigr\vert \alpha\log\frac{1}{\alpha} - (1/e)\log \frac{1}{1/e}\bigl\vert\Bigl\} \leq c\log\frac{1}{c}$$
    since we pick the side with the largest difference, this difference is upper bounded by either Case 1 or Case 2 because $\mathrm{max}\{|\frac{1}{e}-\beta|, |\alpha-\frac{1}{e}|\} < c$. Therefore, this case is upper-bounded by $c\log\frac{1}{c}$.
\end{itemize}
All the three cases of placement of $\alpha$ and $\beta$ are upper-bounded by $c\log \frac{1}{c}$; Therefore, the claim holds.
\end{proof}

\begin{lemma}\label{Lemma: Entropy Schur-concave}
If $\Vert \mathbf{u}\Vert_2\le \epsilon$ for $d$-dimensional vector $\mathbf{u}\ge \mathbf{0}$ where $\epsilon\le\frac{1}{e}$, then we have $$\sum_{i=1}^d u_i \log\frac{1}{u_i} \le \epsilon\sqrt{d}\log\frac{\sqrt{d}}{\epsilon}$$ 
\end{lemma}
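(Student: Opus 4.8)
The plan is to reduce the $d$-dimensional inequality to a one-dimensional monotonicity statement, using two elementary facts: the Cauchy--Schwarz inequality to control $\sum_i u_i$, and the bound $\sum_i p_i\log(1/p_i)\le\log d$ on the Shannon entropy of a probability vector supported on $d$ atoms.

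First I would dispose of the trivial case $\mathbf{u}=\mathbf{0}$, for which the left-hand side equals $0$. Otherwise set $s:=\sum_{i=1}^d u_i>0$. Since $u_i\ge 0$ and $u_i\le\Vert\mathbf{u}\Vert_2\le\epsilon\le 1/e<1$, every term $u_i\log(1/u_i)$ is nonnegative, so the left-hand side is a genuine entropy-type quantity. Writing $p_i:=u_i/s$, so that $(p_1,\dots,p_d)$ is a probability vector, a direct expansion gives
\[
\sum_{i=1}^d u_i\log\frac{1}{u_i}\;=\;s\log\frac{1}{s}\;+\;s\sum_{i=1}^d p_i\log\frac{1}{p_i}\;\le\;s\log\frac{1}{s}+s\log d\;=\;s\log\frac{d}{s},
\]
where the middle inequality is the maximum-entropy bound $\sum_i p_i\log(1/p_i)\le\log d$.

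Next I would bound $s$ itself: by Cauchy--Schwarz, $s=\sum_i u_i\le\sqrt{d}\,\Vert\mathbf{u}\Vert_2\le\epsilon\sqrt{d}$. It then remains to maximize the single-variable function $\psi(s):=s\log(d/s)$ over $s\in[0,\epsilon\sqrt{d}]$. Computing $\psi'(s)=\log(d/s)-1$ shows $\psi$ is increasing on $[0,d/e]$, and since $\epsilon\le 1/e\le\sqrt{d}/e$ (using $d\ge 1$) we get $\epsilon\sqrt{d}\le d/e$, so the whole interval $[0,\epsilon\sqrt{d}]$ lies in the increasing regime of $\psi$. Hence $\psi(s)\le\psi(\epsilon\sqrt{d})=\epsilon\sqrt{d}\log\frac{d}{\epsilon\sqrt{d}}=\epsilon\sqrt{d}\log\frac{\sqrt{d}}{\epsilon}$, which is exactly the claimed bound.

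The argument is essentially mechanical; the only points demanding care are the two range checks — that $\epsilon\le 1/e$ forces every $u_i$ into $[0,1)$ (so no summand is negative and the entropy interpretation is legitimate), and that $\epsilon\sqrt{d}\le d/e$ (so $\psi$ is still increasing at the endpoint $s=\epsilon\sqrt{d}$ and its maximum is attained there). An alternative route, closer to the lemma's name, is a symmetrization argument: $\mathbf{u}\mapsto\sum_i g(u_i)$ with $g(z)=z\log(1/z)$ concave is a symmetric concave (hence Schur-concave) function on the feasible set $\{\mathbf{u}\ge\mathbf{0}:\Vert\mathbf{u}\Vert_2\le\epsilon\}$, so averaging a maximizer over all coordinate permutations remains feasible and does not decrease the objective, reducing to a constant vector $(a,\dots,a)$ with $a\sqrt{d}\le\epsilon$; monotonicity of $g$ on $[0,1/e]$ then pushes $a$ to $\epsilon/\sqrt{d}$ and gives the same value. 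I would present the first route, since it avoids any majorization machinery.
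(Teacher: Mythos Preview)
Your proof is correct and takes a genuinely different route from the paper. The paper solves the constrained maximization $\max\sum_i u_i\log(1/u_i)$ subject to $u_i\ge 0$ and $\Vert\mathbf{u}\Vert_2\le\epsilon$ directly via KKT: it exhibits the candidate $\mathbf{u}^*=(\epsilon/\sqrt{d})\mathbf{1}$ together with Lagrange multipliers, checks primal and dual feasibility, complementary slackness, and stationarity, and then invokes sufficiency of KKT for concave programs to conclude. Your argument instead decomposes $\sum_i u_i\log(1/u_i)=s\log(1/s)+s\,H(p)$ with $s=\sum_i u_i$ and $p_i=u_i/s$, bounds $H(p)\le\log d$ by the maximum-entropy inequality, bounds $s\le\epsilon\sqrt{d}$ by Cauchy--Schwarz, and finishes with a one-variable monotonicity check on $\psi(s)=s\log(d/s)$. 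Your approach is lighter --- it avoids any Lagrangian machinery and reduces everything to two standard scalar inequalities --- and it makes transparent why the condition $\epsilon\le 1/e$ is exactly what is needed (to keep $\epsilon\sqrt{d}$ inside the increasing regime of $\psi$). The paper's KKT approach, on the other hand, identifies the extremizer explicitly and confirms the bound is tight at $\mathbf{u}^*=(\epsilon/\sqrt{d})\mathbf{1}$, which your decomposition recovers only implicitly (equality in both the entropy bound and Cauchy--Schwarz forces the uniform vector). The symmetrization alternative you sketch is essentially the paper's remark that the result ``is also implied by the Schur-concavity property of entropy.''
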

\begin{proof}
We prove the above inequality using the KKT conditions for the following maximization problem, representing a convex optimization problem,

\begin{align*}
    &\max_{\mathbf{u}\in\mathbb{R}^d}\qquad\;\;\: \sum_{i=1}^du_i\log(\frac{1}{u_i}) \\
    &\text{\rm subject to}\quad u_i \ge 0, \;\; \text{\rm for all}\: i\\
    &\qquad\qquad\quad \sum_{i=1}^du_i^2 \leq \epsilon^2 \;\;\ (\text{equivalent to } \Vert \mathbf{u}\Vert_2\le \epsilon)
\end{align*}

In a concave maximization problem subject to convex constraints, any point that satisfies the KKT conditions is guaranteed to be a global optimum. Let us pick the following solution $\mathbf{u}^* = \frac{\epsilon}{\sqrt{d}}\mathbf{1}$ and slack variables $\lambda^*=\frac{\sqrt{d}}{2\epsilon}\bigr(\log(\frac{\sqrt{d}}{\epsilon}) -1 \bigl)$, $\forall_i \;\mu_i^*=0$. The Lagrangian of the above problem:

$$L(\mathbf{u},\lambda,\mu_1,\dots,\mu_d) = \sum_{i=1}^du_i\log(\frac{1}{u_i}) + \lambda(\epsilon^2-\sum_{i=1}^du_i^2)-\sum_{i=1}^d\mu_iu_i$$

\begin{itemize}
    \item \textbf{Primal Feasibility.} The solution $\mathbf{u^*}$ satisfies the primal feasibility, since $\epsilon^2 - \sum_{i=1}^d(\frac{\epsilon}{\sqrt{d}})^2 = \epsilon^2 - d\frac{\epsilon^2}{d} = 0$ and $\frac{\epsilon}{\sqrt{d}} \ge 0$.
    \item \textbf{Dual Feasibility.} $\lambda^*\geq0$ is feasible because of the assumption $\epsilon\le \frac{1}{e}$ implying that $\frac{\sqrt{d}}{\epsilon}\ge e$ for every integer dimension $d\ge 1$. Note that this implies $\lambda^*= \frac{\sqrt{d}}{2\epsilon}\bigr(\log(\frac{\sqrt{d}}{\epsilon}) -1 \bigl) \ge 0$.
    \item \textbf{Complementary Slackness.} Since $\lambda^*\bigr( \epsilon^2 - \sum_{i=1}^d(\frac{\epsilon}{\sqrt{d}})^2\bigl) = \lambda^* \cdot  0 = 0$, the condition is satisfied.
    \item \textbf{Stationarity.} The condition is satisfied as follows:
    \begin{align*}
        \frac{\partial}{\partial u_i}L(\mathbf{u^*}) &= -\log(u_i^*)-1-2\lambda^*u_i^* + \mu_i^* = -\log(\frac{\epsilon}{\sqrt{d}}) - 1 - 2\cdot\frac{\sqrt{d}}{2\epsilon}\bigr( -\log(\frac{\epsilon}{\sqrt{d}}) -1 \bigl)\cdot \frac{\epsilon}{\sqrt{d}} = 0
    \end{align*}
\end{itemize}

Since all KKT conditions are satisfied and sufficient for global optimality, $\mathbf{u}^* = \frac{\epsilon}{\sqrt{d}}\mathbf{1}$ is a global optimum of the specified concave maximization problem. We note that this result is also implied by the Schur-concavity property of entropy. Following this result, the specified objective is upper-bounded as follows:
$$\sum_{i=1}^d u_i \log\frac{1}{u_i} \le \epsilon\sqrt{d}\log\frac{\sqrt{d}}{\epsilon}$$ 
Therefore, the lemma's proof is complete.
\end{proof}

Following the above lemmas, knowing that $\Vert \widehat{\lambda}_n - \widetilde{\lambda} \Vert_2 \le \sqrt{\frac{32\log(2/\delta)}{n}}$ from Theorem~\ref{Thm: eigenvalue convergence} and using the assumption $n\ge 32e^2\log(2/\delta)\approx 236.5 \log(2/\delta)$ that ensures the upper-bound satisfies $\sqrt{\frac{32\log(2/\delta)}{n}}\le \frac{1}{e}$, we can apply the above two lemmas to show that with probability $1-\delta$:

$$\Bigl\vert H_1({\widehat{C}_n}) - H_1(C_X)\Bigr\vert = \Bigl\vert H_1(\widehat{\lambda}_n) - H_1(\widetilde{\lambda})\Bigr\vert \le \sqrt{\frac{8 d \log(2/\delta)}{n}} \log\Bigl( \frac{nd}{32\log(2/\delta)}\Bigr)$$

Note that under a kernel function with finite dimension $d$, the above bound will be $\mathcal{O}\Bigl(\sqrt{\frac{d}{n}} \log\bigl(nd\bigr)\Bigr)$. %To show that our bound on $d$-dimensional space can be translated to $t$-truncated population Vendi, we utilize Lemma \ref{Lemma: Projection} in the proof of Theorem \ref{Thm: truncated Vendi guarantee}. The same upper-bound will hold for $t$-truncated Vendi when dimension $d$ is replaced by truncation parameter $t$ in the upper-bound.

%We can repeat the projection lemma in the Proof of Theorem~\ref{Thm: truncated Vendi guarantee} to show that the same upper-bound will remain to hold for $t$-truncated Vendi when dimension $d$ is replaced by truncation parameter $t$ in the upper-bound.

\textbf{The case of $1< \alpha <2$.} Note that the inequality $\Vert v \Vert_\alpha \le d^{\frac{2-\alpha}{2}} \Vert v\Vert_2$ holds for every $d$-dimensional vector $v\in\mathbb{R}^d$. Therefore, we can repeat the proof of Corollary~\ref{Corollary: Order greater than 2} to show the following for every $1< \alpha <2$
\begin{align*}
    \Bigl\vert \mathrm{Vendi}_\alpha(x_1,\ldots ,x_n)^{\frac{1-\alpha}{\alpha}} -  \mathrm{Vendi}_\alpha(P_x)^{\frac{1-\alpha}{\alpha}}\Bigr\vert \, &=\, \Bigl\vert \bigl\Vert \widehat{\boldsymbol{\lambda}}_n\bigr\Vert_{\alpha} - \bigl\Vert \widetilde{\boldsymbol{\lambda}}\bigr\Vert_{\alpha}\Bigr\vert \\
    &\le\, \bigl\Vert \widehat{\boldsymbol{\lambda}}_n - \widetilde{\boldsymbol{\lambda}}\bigr\Vert_{\alpha} \\
    &\le\, d^{\frac{2-\alpha}{2}}\bigl\Vert \widehat{\boldsymbol{\lambda}}_n - \widetilde{\boldsymbol{\lambda}}\bigr\Vert_{2}.
\end{align*}
Consequently, Theorem~\ref{Thm: eigenvalue convergence} impies that for every $1\le \alpha<2 $ and $\delta\ge \exp((2-n)/8)$, the following holds with probability at least $1-\delta$
\begin{equation*}
     \Bigl\vert \mathrm{Vendi}_\alpha(x_1,\ldots ,x_n)^{\frac{1-\alpha}{\alpha}} -  \mathrm{Vendi}_\alpha(P_x)^{\frac{1-\alpha}{\alpha}}\Bigr\vert \, \le \,  d^{\frac{2-\alpha}{2}}\sqrt{\frac{32\log(2/\delta)}{n}} \, =\, \sqrt{\frac{32 d^{2-\alpha}\log(2/\delta)}{n}}
\end{equation*}

\subsection{Proof of Corollary \ref{Corollary: Order greater than 2}}
Considering the $\alpha$-norm definition $\Vert \mathbf{v}\Vert_\alpha= \bigl(\sum_{i=1}^d |v_i|^\alpha\bigr)^{1/\alpha}$, we can rewrite the order-$\alpha$ Vendi definition  as
\begin{equation*}
    \mathrm{Vendi}_\alpha(x_1,\ldots ,x_n) = \bigl\Vert \widehat{\boldsymbol{\lambda}}_n\bigr\Vert^{\frac{\alpha}{1-\alpha}}_{\alpha} \quad \Longleftrightarrow \quad \mathrm{Vendi}_\alpha(x_1,\ldots ,x_n)^{\frac{1-\alpha}{\alpha}} = \bigl\Vert \widehat{\boldsymbol{\lambda}}_n\bigr\Vert_{\alpha}
\end{equation*}
where $\widehat{\boldsymbol{\lambda}}_n$ is defined in Theorem~\ref{Thm: eigenvalue convergence}. Similarly, given the definition of $\widetilde{\boldsymbol{\lambda}}$ we can write 
\begin{equation*}
    \mathrm{Vendi}_\alpha(P_x)^{\frac{1-\alpha}{\alpha}} = \bigl\Vert \widetilde{\boldsymbol{\lambda}}\bigr\Vert_{\alpha}
\end{equation*}
Therefore, for every $\alpha\ge 2$, the following hold due to the triangle inequality: 
\begin{align*}
    \Bigl\vert \mathrm{Vendi}_\alpha(x_1,\ldots ,x_n)^{\frac{1-\alpha}{\alpha}}  -  \mathrm{Vendi}_\alpha(P_x)^{\frac{1-\alpha}{\alpha}}\Bigr\vert \, &=\, \Bigl\vert \bigl\Vert \widehat{\boldsymbol{\lambda}}_n\bigr\Vert_{\alpha} - \bigl\Vert \widetilde{\boldsymbol{\lambda}}\bigr\Vert_{\alpha}\Bigr\vert \\
    &\le\, \bigl\Vert \widehat{\boldsymbol{\lambda}}_n - \widetilde{\boldsymbol{\lambda}}\bigr\Vert_{\alpha} \\
    &\le\, \bigl\Vert \widehat{\boldsymbol{\lambda}}_n - \widetilde{\boldsymbol{\lambda}}\bigr\Vert_{2}.
\end{align*}
As a result, Theorem~\ref{Thm: eigenvalue convergence} shows that for every $\alpha\ge 2$ and $\delta\ge \exp((2-n)/8)$, the following holds with probability at least $1-\delta$
\begin{equation*}
     \Bigl\vert \mathrm{Vendi}_\alpha(x_1,\ldots ,x_n)^{\frac{1-\alpha}{\alpha}} -  \mathrm{Vendi}_\alpha(P_x)^{\frac{1-\alpha}{\alpha}}\Bigr\vert \, \le \, \sqrt{\frac{32\log(2/\delta)}{n}}
\end{equation*}

\subsection{Proof of Theorem~\ref{Thm: truncated Vendi guarantee}}
We begin by proving the following lemma showing that the eigenvalues used in the definition of the $t$-truncated Vendi score are the projection of the original eigenvalues onto a $t$-dimensional probability simplex.
\begin{lemma}\label{Lemma: Projection}
   Consider $\mathbf{v}\in [0,1]^d$ that satisfies $\mathbf{1}^\top \mathbf{v} = 1$. i.e., the sum of $\mathbf{v}$'s entries equals $1$. Given integer $1\le t\le d$, define vector $\mathbf{v}^{(t)} \in [0,1]^d$ whose last $d-t$ entries are $0$, i.e., $v^{(t)}_{i} =0 $ for $t+1\le i\le d$, and its first $t$ entries are defined as $v^{(t)}_{j} = v_j + \frac{1-S_t}{t}$ where $S_t = v_1+\cdots +v_t$. Then, $\mathbf{v}^{(t)}$ is the projection of $\mathbf{v}$ onto the following simplex set and has the minimum $\ell_2$-norm distance to this set
   \begin{equation*}
       \Delta_t := \Bigl\{ \mathbf{u}\in [0,1]^d:\; v_i = 0\: \text{\rm for all}\: t+1\le i\le d,\;\;\; \sum_{i=1}^t v_i = 1  \Bigr\}.
   \end{equation*}
\end{lemma}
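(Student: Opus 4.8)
The plan is to identify $\mathbf{v}^{(t)}$ with the Euclidean projection of $\mathbf{v}$ onto $\Delta_t$, i.e.\ to show $\mathbf{v}^{(t)} = \arg\min_{\mathbf{u}\in\Delta_t}\|\mathbf{u}-\mathbf{v}\|_2$, organizing the argument as a feasibility check followed by a one-line Pythagorean optimality argument. First I would verify $\mathbf{v}^{(t)}\in\Delta_t$: by construction its last $d-t$ entries are $0$ and $\sum_{i=1}^t v^{(t)}_i = S_t + t\cdot\frac{1-S_t}{t} = 1$; for the box constraint I would use that $\mathbf{v}$ is a probability vector, so $S_t\le \mathbf{1}^\top\mathbf{v} = 1$, hence the common shift $\frac{1-S_t}{t}$ is nonnegative and $v^{(t)}_i = v_i + \frac{1-S_t}{t}\ge 0$, and since these entries sum to $1$ each lies in $[0,1]$.

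Next I would reduce the optimization to the first $t$ coordinates: for any $\mathbf{u}\in\Delta_t$ the last $d-t$ coordinates are pinned to $0$, so $\|\mathbf{u}-\mathbf{v}\|_2^2 = \sum_{i=1}^t (u_i-v_i)^2 + \sum_{i=t+1}^d v_i^2$, and the second term does not depend on $\mathbf{u}$; thus it suffices to minimize $f(\mathbf{u}):=\sum_{i=1}^t (u_i-v_i)^2$ over vectors with $\sum_{i=1}^t u_i = 1$. The key observation is that the residual $v^{(t)}_i - v_i = \frac{1-S_t}{t}$ is constant in $i$. Writing $w_i := u_i - v^{(t)}_i$ for a feasible $\mathbf{u}$, the sum constraint forces $\sum_{i=1}^t w_i = 0$, so expanding the square makes the cross term $2\,\frac{1-S_t}{t}\sum_{i=1}^t w_i$ vanish and yields $f(\mathbf{u}) = f(\mathbf{v}^{(t)}) + \sum_{i=1}^t w_i^2 \ge f(\mathbf{v}^{(t)})$, with equality iff $\mathbf{u} = \mathbf{v}^{(t)}$. (Alternatively one can check the KKT conditions of this convex quadratic program, taking the multiplier of the equality constraint to be $-\frac{1-S_t}{t}$ and all inequality multipliers zero, mirroring the KKT verification in Lemma~\ref{Lemma: Entropy Schur-concave}.)

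Since the last inequality in fact holds over the whole affine set $\{\mathbf{u}:\ u_i=0\ \text{for}\ i>t,\ \sum_{i=1}^t u_i=1\}$, which contains $\Delta_t$, and $\mathbf{v}^{(t)}$ already belongs to $\Delta_t$ by the feasibility step, $\mathbf{v}^{(t)}$ is a fortiori the unique minimizer over $\Delta_t$, establishing the claim. The only genuine obstacle is the feasibility step: one must ensure the uniform upward shift does not push any coordinate out of $[0,1]$, and this is exactly where we use that $\mathbf{v}$ is a true probability vector (so $S_t\le 1$ and no negative-entry clipping is needed, unlike in general projection onto a probability simplex) — which holds in our setting because $\mathbf{v}$ is the eigenvalue vector of a trace-normalized kernel matrix.
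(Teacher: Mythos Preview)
Your proof is correct and covers the same two steps as the paper—feasibility of $\mathbf{v}^{(t)}$ in $\Delta_t$, then optimality—but the optimality argument differs in execution. The paper verifies the KKT conditions for the constrained quadratic program $\min_{\mathbf{u}}\sum_{i=1}^t(u_i-v_i)^2$ subject to $u_i\ge 0$ and $\sum u_i=1$, taking Lagrange multipliers $\mu_i=0$ and $\lambda=(1-S_t)/t$ and invoking sufficiency of KKT for convex problems with affine constraints. You instead give a direct Pythagorean expansion: since the residual $v^{(t)}_i-v_i$ is constant in $i$, it is orthogonal (under the sum-zero constraint) to any feasible perturbation, so the squared distance decomposes as $f(\mathbf{u})=f(\mathbf{v}^{(t)})+\sum_i w_i^2$. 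Your route is slightly more elementary and makes the affine-projection geometry explicit, and the observation that optimality over the larger affine set plus feasibility in $\Delta_t$ implies optimality in $\Delta_t$ is a clean way to dispose of the box constraints. The KKT alternative you mention parenthetically is exactly what the paper does.
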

\begin{proof}
To prove the lemma, first note that $\mathbf{v}^{(t)} \in \Delta_t$, i.e. its first $t$ entries are non-negative and add up to $1$, and also its last $d-t$ entries are zero. Then, consider the projection problem discussed in the lemma:
\begin{align*}
    &\min_{\mathbf{u}\in\mathbb{R}^t}\qquad\;\;\: \sum_{i=1}^t \bigl(u_i - v_i\bigr)^2 \\
    &\text{\rm subject to}\quad u_i\ge 0, \;\; \text{\rm for all}\: i\\
    &\qquad\qquad\quad \sum_{i=1}^t u_i = 1
\end{align*}
Then, since we know from the assumptions that $v_i\ge 0$ and $\sum_{i=1}^t v_i \le 1$, the discussed $\mathbf{u}^*\in\mathbb{R}^t$ where $u^*_i = v_i + (1-S_t)/t$ together with Lagrangian coefficients $\mu_i =0 $ (for inequality constraint $u_i\ge 0$) and $\lambda = (1 - S_t)/t$ (for equality constraint) satisfy the KKT conditions. The primal and dual feasibility conditions as well as the complementary slackness clearly hold for these selection of primal and dual variables. Also, the KKT stationarity condition is satisfied as for every $i$ we have $u^*_i -v_i -\lambda - \mu_i = 0$. Since the optimization problem is a convex optimization task with affine constraints, the KKT conditions are sufficient for optimaility which proves the lemma.  
\end{proof}

Based on the above lemma, the eigenvalues $\widehat{\boldsymbol{\lambda}}_n^{(t)}$ used to calculate the $t$-truncated Vendi score $\mathrm{Vendi}^{(t)}_\alpha(x_1,\ldots ,x_n)$ are the projections of the top-$t$ eigenvalues in $\widehat{\boldsymbol{\lambda}}_n$ for the original score $\mathrm{Vendi}_\alpha(x_1,\ldots ,x_n)$ onto the $t$-simplex subset of $\mathbb{R}^d$ according to the $\ell_2$-norm. Similarly, the eigenvalues $\widetilde{\boldsymbol{\lambda}}_n^{(t)}$ used to calculate the $t$-truncated population Vendi $\mathrm{Vendi}^{(t)}_\alpha(P_X)$ are the projections of the top-$t$ eigenvalues in $\widetilde{\boldsymbol{\lambda}}$ for the original population Vendi $\mathrm{Vendi}_\alpha(P_x)$ onto the $t$-simplex subset of $\mathbb{R}^d$.

Since $\ell_2$-norm is a Hilbert space norm and the $t$-simplex subset $\Delta_t$ is a convex set, we know from the convex analysis that the $\ell_2$-distance between the projected points $\widehat{\boldsymbol{\lambda}}_n^{(t)}$ and $\widetilde{\boldsymbol{\lambda}}^{(t)}$ is upper-bounded by the $\ell_2$-distance between the original points $\widehat{\boldsymbol{\lambda}}_n$ and $\widetilde{\boldsymbol{\lambda}}$. As  a result, Theorem~\ref{Thm: eigenvalue convergence} implies that 
\begin{align*}
    & \mathbb{P}\Bigl(\bigl\Vert \widehat{\boldsymbol{\lambda}}_n - \widetilde{\boldsymbol{\lambda}} \bigr\Vert_2 \le \sqrt{\frac{32\log(2/\delta)}{n}}\Bigr) \, \ge\, 1- \delta \\
    \Longrightarrow \;\; & \mathbb{P}\Bigl(\bigl\Vert \widehat{\boldsymbol{\lambda}}^{(t)}_n - \widetilde{\boldsymbol{\lambda}}^{(t)} \bigr\Vert_2 \le \sqrt{\frac{32\log(2/\delta)}{n}}\Bigr) \, \ge\, 1- \delta
\end{align*}
However, note that the eigenvalue vectors $\widehat{\boldsymbol{\lambda}}^{(t)}_n$ and $\widetilde{\boldsymbol{\lambda}}^{(t)}$ can be analyzed in a bounded $t$-dimensional space as their entries after index $t+1$ are zero. Therefore, we can apply the proof of Corollary~\ref{Corollary: Finite Dimension} to show that for every $1\le \alpha<2 $ and $\delta\ge \exp((2-n)/8)$, the following holds with probability at least $1-\delta$
\begin{equation*}
     \Bigl\vert \mathrm{Vendi}_\alpha(x_1,\ldots ,x_n)^{\frac{1-\alpha}{\alpha}} -  \mathrm{Vendi}_\alpha(P_x)^{\frac{1-\alpha}{\alpha}}\Bigr\vert \, \le \,   \sqrt{\frac{32 t^{2-\alpha}\log(2/\delta)}{n}}
\end{equation*}
To extend the result to a general $\alpha>1$, we reach the following inequality covering the above result as well as the result of Corollary~\ref{Corollary: Order greater than 2} in one inequality
\begin{equation*}
     \Bigl\vert \mathrm{Vendi}_\alpha(x_1,\ldots ,x_n)^{\frac{1-\alpha}{\alpha}} -  \mathrm{Vendi}_\alpha(P_x)^{\frac{1-\alpha}{\alpha}}\Bigr\vert \, \le \, \sqrt{\frac{32 \max\{1, t^{2-\alpha}\}\log(2/\delta)}{n}}
\end{equation*}

\subsection{Proof of Theorem~\ref{Thm: Nyström, FKEA}}
\textbf{Proof of Part (a)}. As defined by \cite{ospanov_fkea_2024},  the FKEA method uses the eigenvalues of $t$ random Fourier frequencies $\omega_1,\ldots , \omega_{t}$ where for each $\omega_i$ they consider two features $\cos(\omega_i^\top x)$ and $\sin(\omega_i^\top x)$. Following the definitions, it can be seen that $k(x,x') = \mathbb{E}_{\omega\sim p_\omega}\bigl[\cos(\omega^{\top}(x-x'))\bigr]$ which is approximated by FKEA as $\frac{1}{t}\sum_{i=1}^{t} \cos(\omega_i^{\top}(x-x'))$. Therefore, if we define kernel matrix $K_i$ as the kernel matrix for $k_i(x,x') = \cos(\omega_i^{\top}(x-x'))$, then we will have
\begin{equation*}
    \frac{1}{n} K^{\mathrm{FKEA}(t)} = \frac{1}{t}\sum_{i=1}^{t} \frac{1}{n}K_i
\end{equation*}
where $\mathbb{E}_{\omega_i\sim p_\omega}[\frac{1}{n}K_i] = \frac{1}{n}K$.

On the other hand, we note that $\Vert \frac{1}{n} K \Vert_{\mathrm{HS}} \le 1$ holds as the kernel function is normalized and hence $|k(x,x')|\le 1$. Since the Frobenius norm is the $\ell_2$-norm of the vectorized version of the matrix, we can apply Vector Bernstein inequality in Lemma~\ref{Lemma: Vector Bernstein} to show that for every $0\le \epsilon \le 2$: 
\begin{align*}
    &\mathbb{P}\Bigl(\Bigl\Vert \frac{1}{t}\sum_{i=1}^t  \bigl[\frac{1}{n}K_i\bigr] - \frac{1}{n}K  \Bigr\Vert_{F} \ge \epsilon\Bigr) \, \le\, \exp\Bigl(\frac{8-t\epsilon^2}{32}\Bigr) \\
    \Longrightarrow \;\; & \mathbb{P}\Bigl(\Bigl\Vert \frac{1}{n}K^{\mathrm{FKEA}(t)} - \frac{1}{n}K  \Bigr\Vert_{F} \ge \epsilon\Bigr) \, \le\, \exp\Bigl(\frac{8-t\epsilon^2}{32}\Bigr)
\end{align*}
Then, we apply the Hoffman-Wielandt inequality to show that for the sorted eigenvalue vectors of $\frac{1}{n}K$ (denoted by $\widehat{\boldsymbol{\lambda}}_n$) and $\frac{1}{n}K^{\mathrm{FKEA}(t)}$ (denoted by ${\boldsymbol{\lambda}}^{\mathrm{FKEA}(t)}$) we will have
$\Vert \widehat{\boldsymbol{\lambda}}_n - {\boldsymbol{\lambda}}^{\mathrm{FKEA}(t)} \Vert_2\le \Vert \frac{1}{n}K^{\mathrm{FKEA}(t)} - \frac{1}{n}K \Vert_{\mathrm{HS}}$, which together with the previous inequality leads to
\begin{align*}
 \mathbb{P}\Bigl(\Bigl\Vert \widehat{\boldsymbol{\lambda}}_n - {\boldsymbol{\lambda}}^{\mathrm{FKEA}(t)} \Bigr\Vert_2 \ge \epsilon\Bigr) \, \le\, \exp\Bigl(\frac{8-t\epsilon^2}{32}\Bigr)
\end{align*}
Furthermore, as we shown in the proof of Theorem~\ref{Thm: eigenvalue convergence} for every $0\le \gamma\le 2$
\begin{align*}
    & \mathbb{P}\Bigl(\bigl\Vert \widehat{\boldsymbol{\lambda}}_n - \widetilde{\boldsymbol{\lambda}} \bigr\Vert_2 \ge \gamma\Bigr) \, \le\, \exp\Bigl(\frac{8-n\gamma^2}{32}\Bigr)
\end{align*}
which, by applying the union bound for $\gamma = \epsilon$, together with the previous inequality shows that
\begin{align*}
 \mathbb{P}\Bigl(\Bigl\Vert \widetilde{\boldsymbol{\lambda}} - {\boldsymbol{\lambda}}^{\mathrm{FKEA}(t)} \Bigr\Vert_2 \ge 2\epsilon\Bigr) \, &\le\, \exp\Bigl(\frac{8-t\epsilon^2}{32}\Bigr)+\exp\Bigl(\frac{8-n\epsilon^2}{32}\Bigr) \\
 &\le \, 2\exp\Bigl(\frac{8-\min\{n,t\}\epsilon^2}{32}\Bigr)
\end{align*}
Therefore, Lemma~\ref{Lemma: Projection} implies that
\begin{align*}
 \mathbb{P}\Bigl(\Bigl\Vert \widetilde{\boldsymbol{\lambda}}^{(t)} - {\boldsymbol{\lambda}}^{\mathrm{FKEA}(t)} \Bigr\Vert_2 \ge \epsilon\Bigr) \, &\le \, 2\exp\Bigl(\frac{32-\min\{n,t\}\epsilon^2}{128}\Bigr)
\end{align*}
If we define $\delta= 2\exp\bigl(\frac{32-\min\{n,t\}\epsilon^2}{128}\bigr)$, implying that $\epsilon \le \sqrt{\frac{128\log(3/\delta)}{\min\{n,t\}}}$, then the above inequality shows that
\begin{align*}
 \mathbb{P}\Bigl(\Bigl\Vert \widetilde{\boldsymbol{\lambda}}^{(t)} - {\boldsymbol{\lambda}}^{\mathrm{FKEA}(t)} \Bigr\Vert_2 \le \sqrt{\frac{128\log(3/\delta)}{\min\{n,t\}}}\Bigr) \, &\ge 1-\delta
\end{align*}
Therefore, if we follow the same steps of the proof of Theorem~\ref{Thm: truncated Vendi guarantee}, we can show
\begin{equation*}
     \Bigl\vert \mathrm{FKEA}\text{-}\mathrm{Vendi}^{(t)}_\alpha(x_1,\ldots ,x_n)^{\frac{1-\alpha}{\alpha}} -  \mathrm{Vendi}^{(t)}_\alpha(P_x)^{\frac{1-\alpha}{\alpha}}\Bigr\vert \, \le \, \sqrt{\frac{128 \max\{1, t^{2-\alpha}\}\log(3/\delta)}{\min\{n,t\}}}
\end{equation*}

\textbf{Proof of Part (b)}. To show this theorem, we use Theorem~3 from \citep{xu2015nystrom}, which shows that if the $r$th largest eigenvalue of the kernel matrix $\frac{1}{n}K$ satisfies $\lambda_r \le \frac{\tau}{n}$, then given $t\ge Cr\log(n)$ ($C$ is a universal constant), the following spectral norm bound will hold with probability $1-\frac{2}{n^3}$:
\begin{equation*}
    \bigl\Vert \frac{1}{n}K - \frac{1}{n}K^{\mathrm{Nystrom(t)}} \bigr\Vert_{sp} \le \mathcal{O}\Bigl( \frac{\tau \log(n)}{\sqrt{nt}}\Bigr).
\end{equation*}
Therefore, Weyl's inequality implies the following for the vector of sorted eigenvalues of $\frac{1}{n}K$, i.e. $\widehat{\boldsymbol{\lambda}}_n$, and that of $\frac{1}{n}K^{\mathrm{Nystrom(t)}}$, i.e., ${\boldsymbol{\lambda}}^{\mathrm{Nystrom(t)}}$,
\begin{equation*}
    \bigl\Vert \widehat{\boldsymbol{\lambda}}_n - {\boldsymbol{\lambda}}^{\mathrm{Nystrom(t)}} \bigr\Vert_{\infty} \le \mathcal{O}\Bigl( \frac{\tau \log(n)}{\sqrt{nt}}\Bigr).
\end{equation*}
As a result, considering the subvectors $\widehat{\boldsymbol{\lambda}}_n[1:t]$ and ${\boldsymbol{\lambda}}^{\mathrm{Nystrom(t)}}[1:t]$ with the first $t$ entries of the vectors, we will have:
\begin{equation*}
    \bigl\Vert \widehat{\boldsymbol{\lambda}}_n[1:t] - {\boldsymbol{\lambda}}^{\mathrm{Nystrom(t)}}[1:t] \bigr\Vert_{\infty} \le \mathcal{O}\Bigl( \frac{\tau \log(n)}{\sqrt{nt}}\Bigr) \quad \Longrightarrow \quad \bigl\Vert \widehat{\boldsymbol{\lambda}}_n[1:t] - {\boldsymbol{\lambda}}^{\mathrm{Nystrom(t)}}[1:t] \bigr\Vert_{2} \le \mathcal{O}\Bigl( \tau \log(n)\sqrt{\frac{t}{n}}\Bigr)  
\end{equation*}
Noting that the non-zero entries of ${\boldsymbol{\lambda}}^{\mathrm{Nystrom(t)}}$ are all included in the first-$t$ elements, we can apply Lemma~\ref{Lemma: Projection} which shows that with probability $1-2n^{-3}$ we have
\begin{equation*}
     \Bigl\Vert \widehat{\boldsymbol{\lambda}}^{(t)}_n - {\boldsymbol{\lambda}}^{\mathrm{Nystrom(t)}}\Bigr\Vert_{2} \le \mathcal{O}\Bigl( \tau \log(n)\sqrt{\frac{t}{n}}\Bigr)  
\end{equation*}
Also, in the proof of Theorem~2, we showed that
\begin{align*}
    \mathbb{P}\Bigl(\bigl\Vert \widehat{\boldsymbol{\lambda}}^{(t)}_n - \widetilde{\boldsymbol{\lambda}}^{(t)} \bigr\Vert_2 \le \sqrt{\frac{32\log(2/\delta)}{n}}\Bigr) \, \ge\, 1- \delta
\end{align*}
Combining the above inequalities using a union bound, shows that with probability at least $1-\delta-2n^{-3}$ we have 
\begin{align*}
     \Bigl\Vert {\boldsymbol{\lambda}}^{\mathrm{Nystrom(t)}} - \widetilde{\boldsymbol{\lambda}}^{(t)} \Bigr\Vert_{2} \:&\le\:  \Bigl\Vert {\boldsymbol{\lambda}}^{\mathrm{Nystrom(t)}} - \widehat{\boldsymbol{\lambda}}^{(t)}_n \Bigr\Vert_{2} + \Bigl\Vert \widehat{\boldsymbol{\lambda}}^{(t)}_n - \widetilde{\boldsymbol{\lambda}}^{(t)} \Bigr\Vert_{2} \\
     \:&\le\:  \sqrt{\frac{32\log(2/\delta)}{n}}+ \mathcal{O}\Bigl( \tau \log(n)\sqrt{\frac{t}{n}}\Bigr) \\
     &=\: \mathcal{O}\Bigl(\sqrt{\frac{\log(2/\delta)+t\log(n)^2\tau^2}{n}}\Bigr)
\end{align*}
Hence, repeating the final steps in the proof of Theorem~\ref{Thm: truncated Vendi guarantee}, we can prove
\begin{equation*}
     \Bigl\vert \mathrm{Nystrom}\text{-}\mathrm{Vendi}^{(t)}_\alpha(x_1,\ldots ,x_n)^{\frac{1-\alpha}{\alpha}} -  \mathrm{Vendi}^{(t)}_\alpha(P_x)^{\frac{1-\alpha}{\alpha}}\Bigr\vert \, \le \, \mathcal{O}\Bigl(\sqrt{\frac{\max\{t^{2-\alpha},1\}\bigl(\log(2/\delta)+t\log(n)^2\tau^2\bigr)}{n}}\Bigr)
\end{equation*}

\section{Additional Numerical Results}
\begin{figure}
    \centering
    \includegraphics[width=\textwidth]{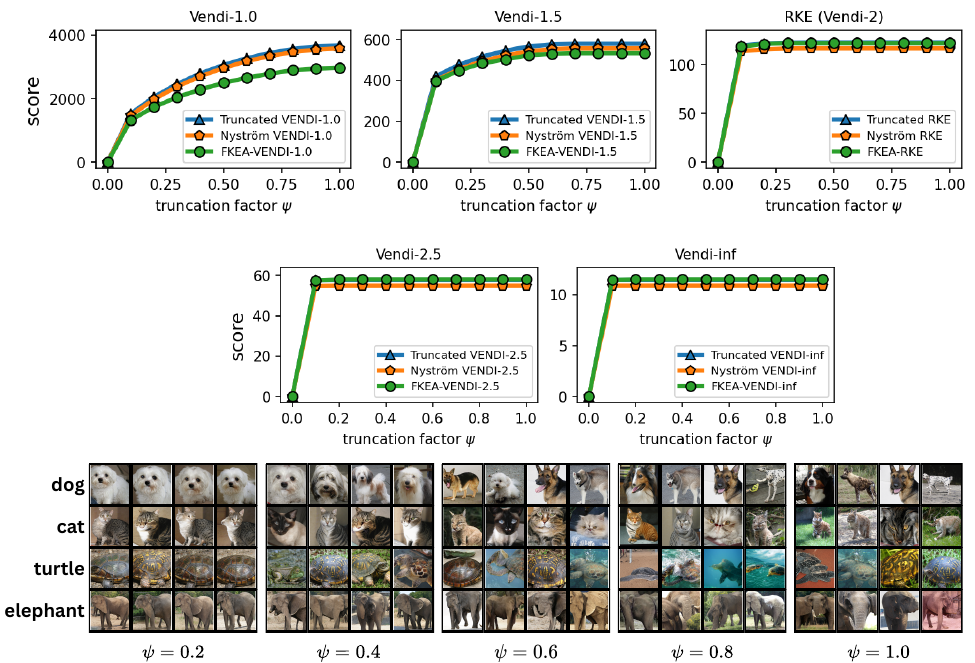}
    \caption{Diversity evaluation of Vendi scores on StyleGAN-XL generated ImageNet dataset with varying truncation parameter $\psi$. The setting is based on \textit{DinoV2} embedding and bandwidth $\sigma=30$}
  \label{VENDI_imagenet_diversity}
\end{figure}

In this section, we present supplementary results concerning the evaluation of diversity and the convergence behavior of different variants of the Vendi score. We extend the convergence experiments discussed in the main text to include the truncated StyleGAN3-t FFHQ dataset (Figure \ref{VENDI_stylegan3_convergence}) and the StyleGAN-XL ImageNet dataset (Figure \ref{VENDI_styleganxl_convergence}). Furthermore, we demonstrate that the truncated Vendi statistic effectively captures the diversity characteristics across various data modalities. Specifically, we conducted similar experiments as shown in Figures \ref{VENDI_imagenet_truncation} and \ref{VENDI_ffhq_truncation} on text data (Figure \ref{VENDI_countries_diversity}) and video data (Figure \ref{VENDI_video_diversity}), showcasing the applicability of the metric across different domains.

\begin{figure}
    \centering
    \includegraphics[width=\linewidth]{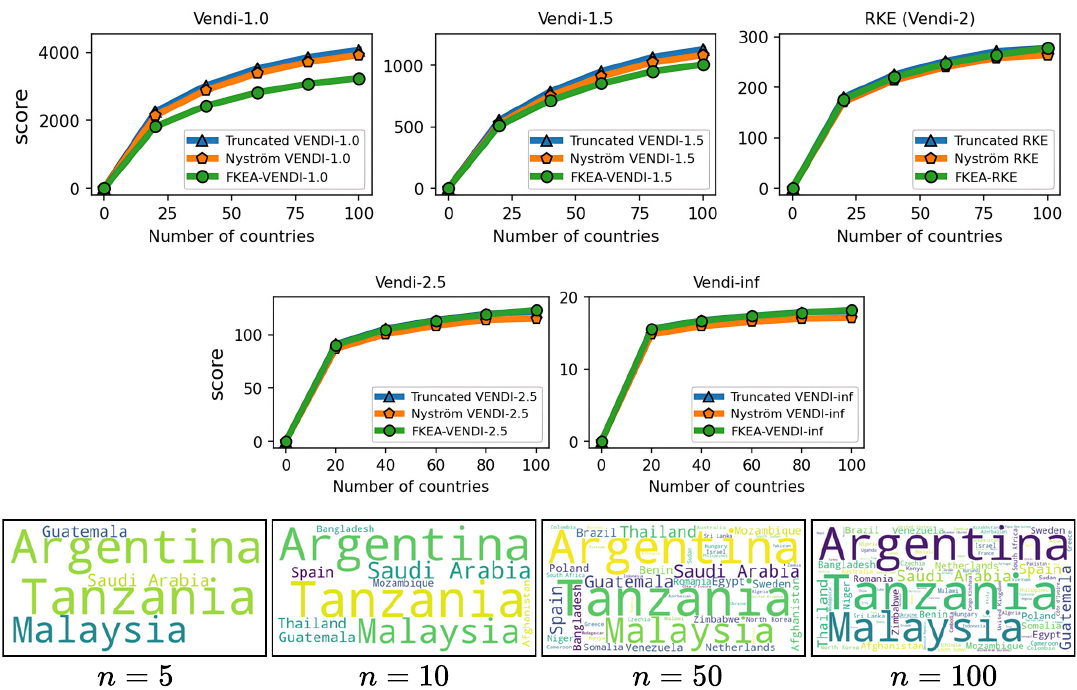}
    \caption{Diversity evaluation of Vendi scores on synthetic text dataset about 100 countries generated by GPT-4 with varying number of countries. The setting is based on \textit{text-embedding-3-large} embedding and bandwidth $\sigma=0.5$}
  \label{VENDI_countries_diversity}
\end{figure}

\begin{figure}
    \includegraphics[width=0.99\linewidth]{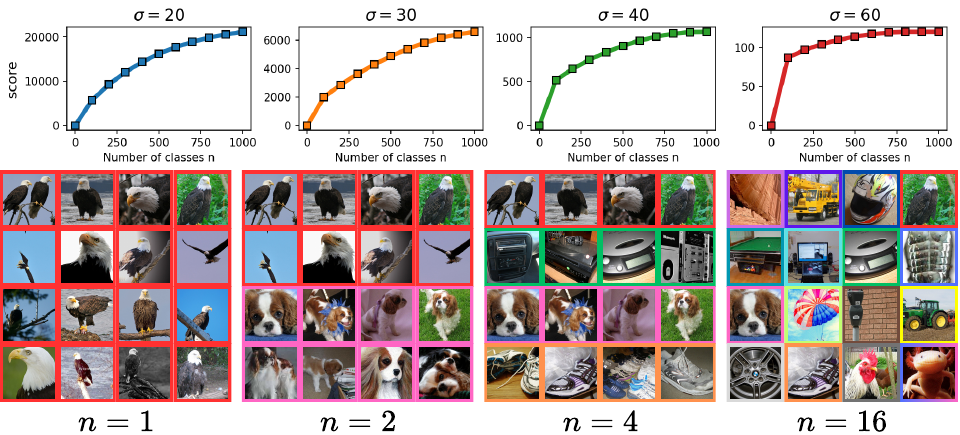}
    \caption{The diagram outlining an intuition behind a kernel bandwidth $\sigma$ selection in diversity evaluation.}
    \label{bandwidth_illustration}
\end{figure}

\begin{figure}
    \centering
    \includegraphics[width=\linewidth]{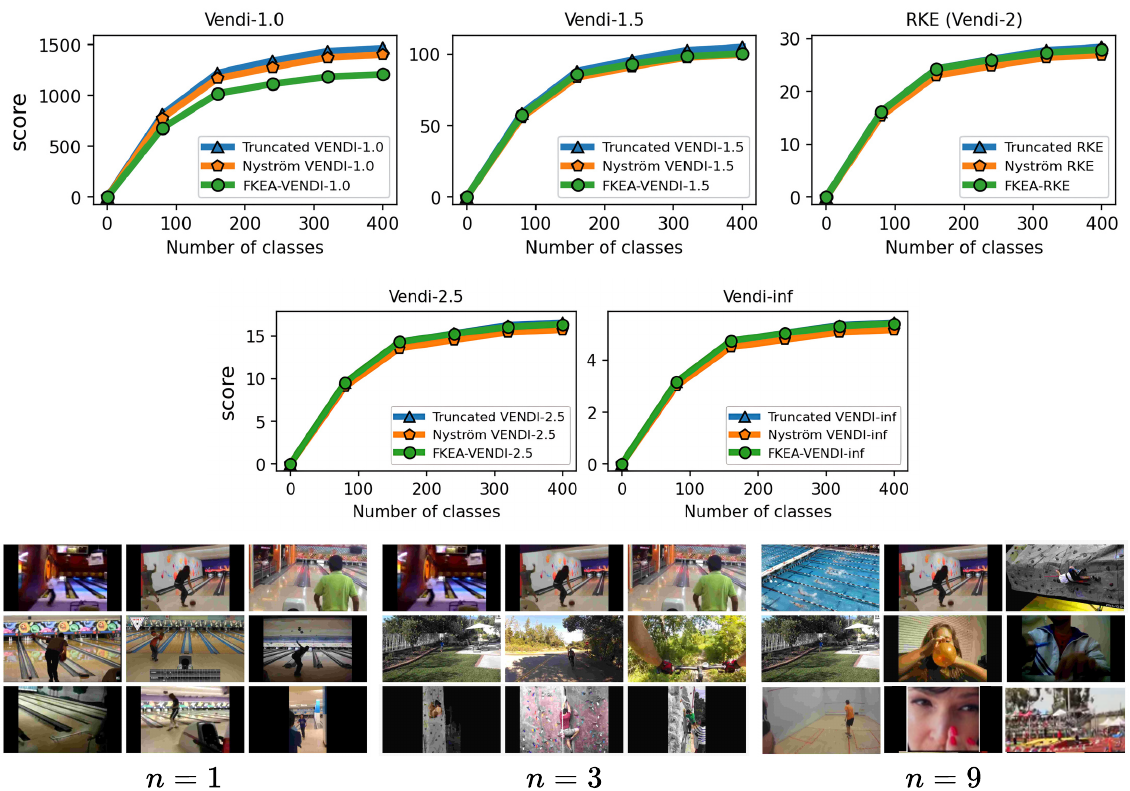}
    \caption{Diversity evaluation of Vendi scores on Kinetics400 dataset with varying number of classes. The setting is based on \textit{I3D} embedding and bandwidth $\sigma=4.0$}
  \label{VENDI_video_diversity}
\end{figure}

We observe in Figure \ref{VENDI_stylegan3_convergence} that the convergence behavior is illustrated across various values of $\psi$. The results indicate that, for a fixed bandwidth $\sigma$, the truncated, Nyström, and FKEA variants of the Vendi score converge to the truncated Vendi statistic. As demonstrated in Figure \ref{VENDI_ffhq_truncation} of the main text, this truncated Vendi statistic effectively captures the diversity characteristics inherent in the underlying dataset.

We note that in presence of incremental changes to the diversity of the dataset, finite-dimensional kernels, such as cosine similarity kernel, remain relatively constant. This effect is illustrated in Figure \ref{VENDI_styleganxl_convergence}, where increase in truncation factor $\psi$ results in incremental change in diversity. This is one of the cases where infinite-dimensional kernel maps with a sensitivity (bandwidth) parameter $\sigma$ are useful in controlling how responsive the method should be to the change in diversity.

\begin{figure}
    \centering
    \subfigure{\includegraphics[width=0.7\textwidth]{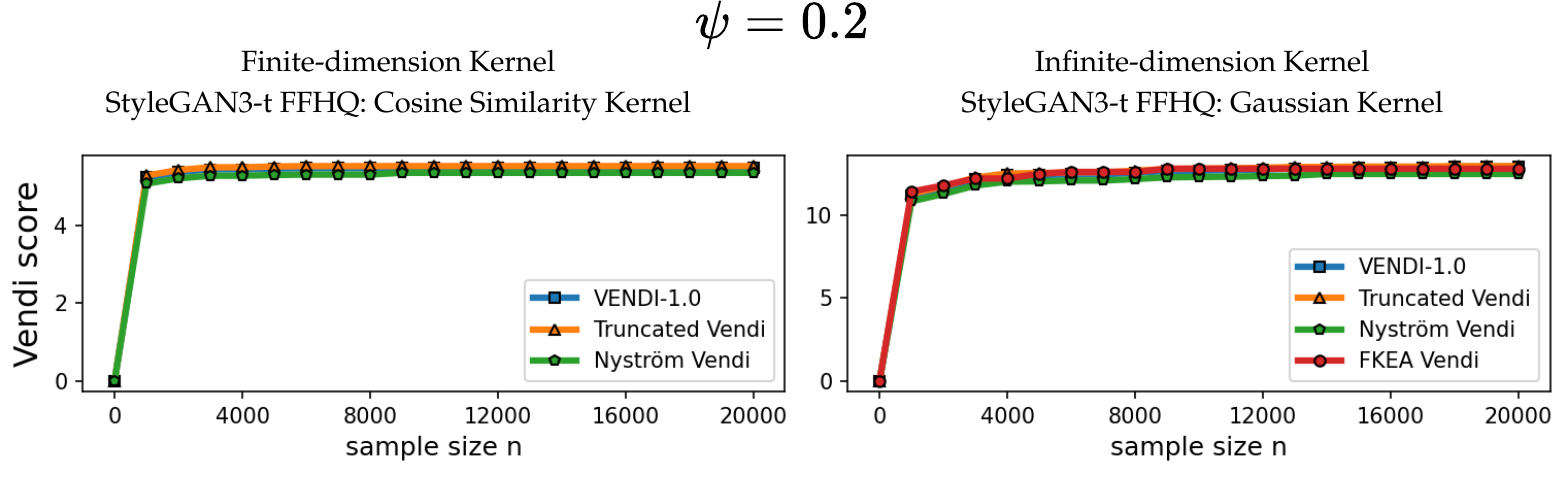}}
    \subfigure{\includegraphics[width=0.7\textwidth]{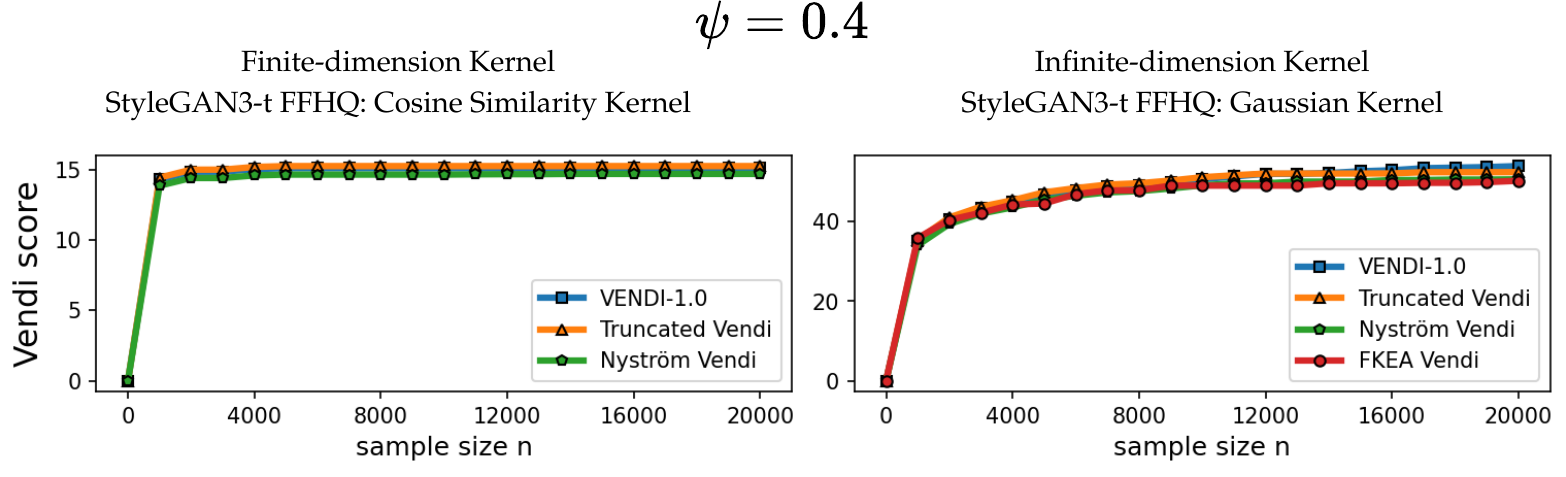}}
    \subfigure{\includegraphics[width=0.7\textwidth]{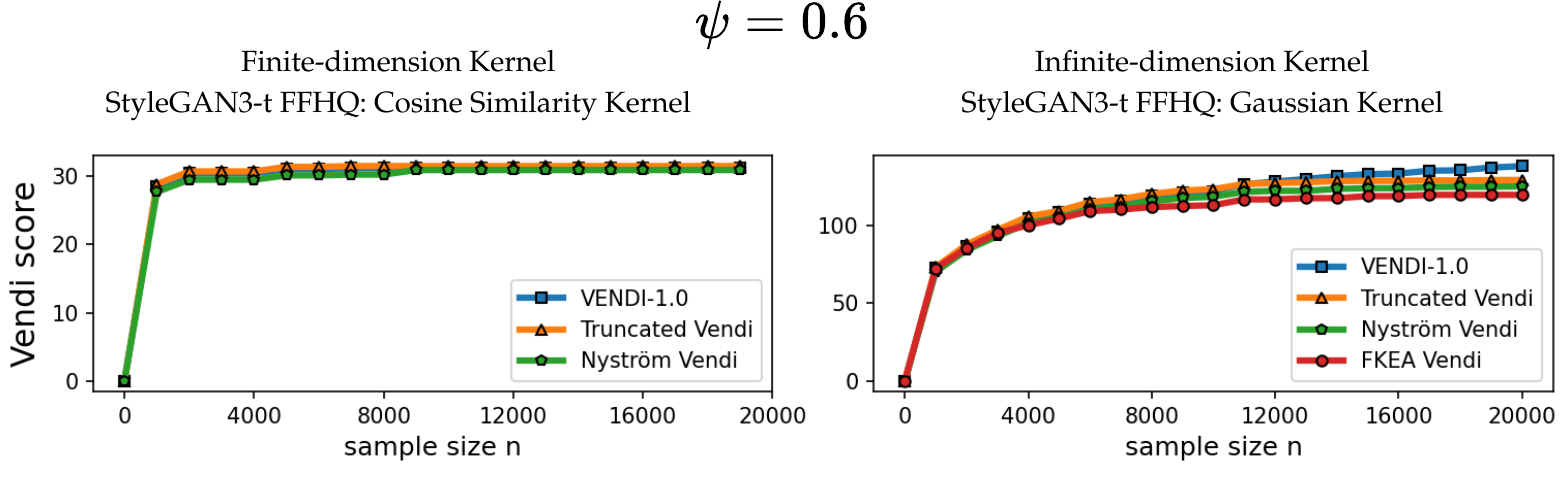}}
    \subfigure{\includegraphics[width=0.7\textwidth]{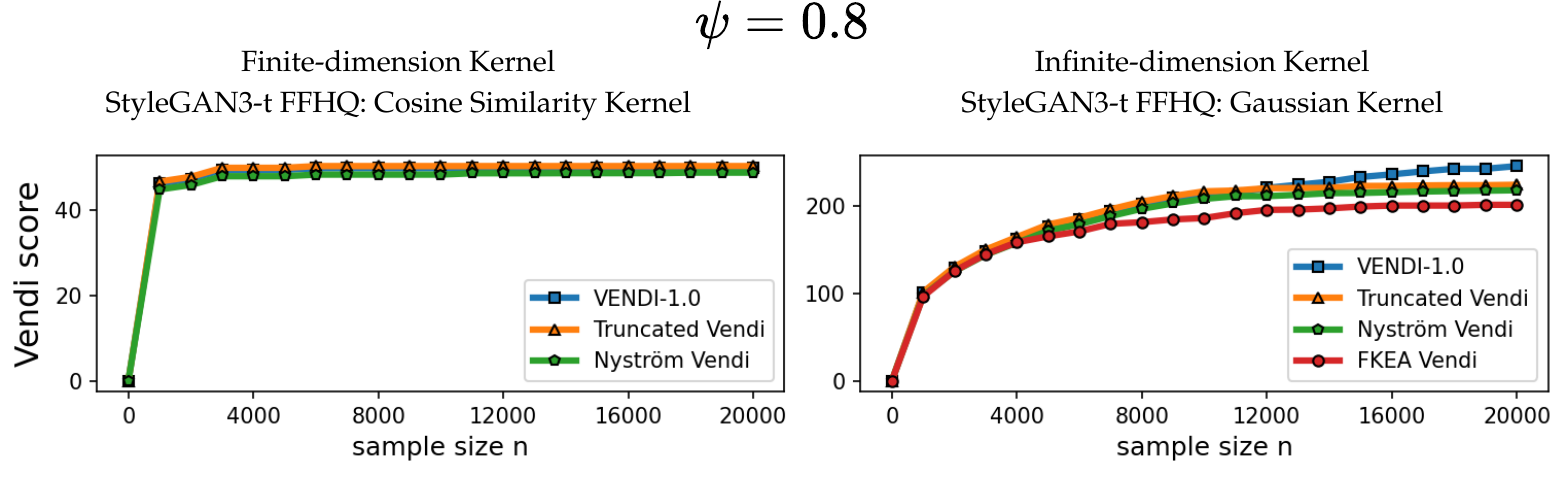}}
    \subfigure{\includegraphics[width=0.7\textwidth]{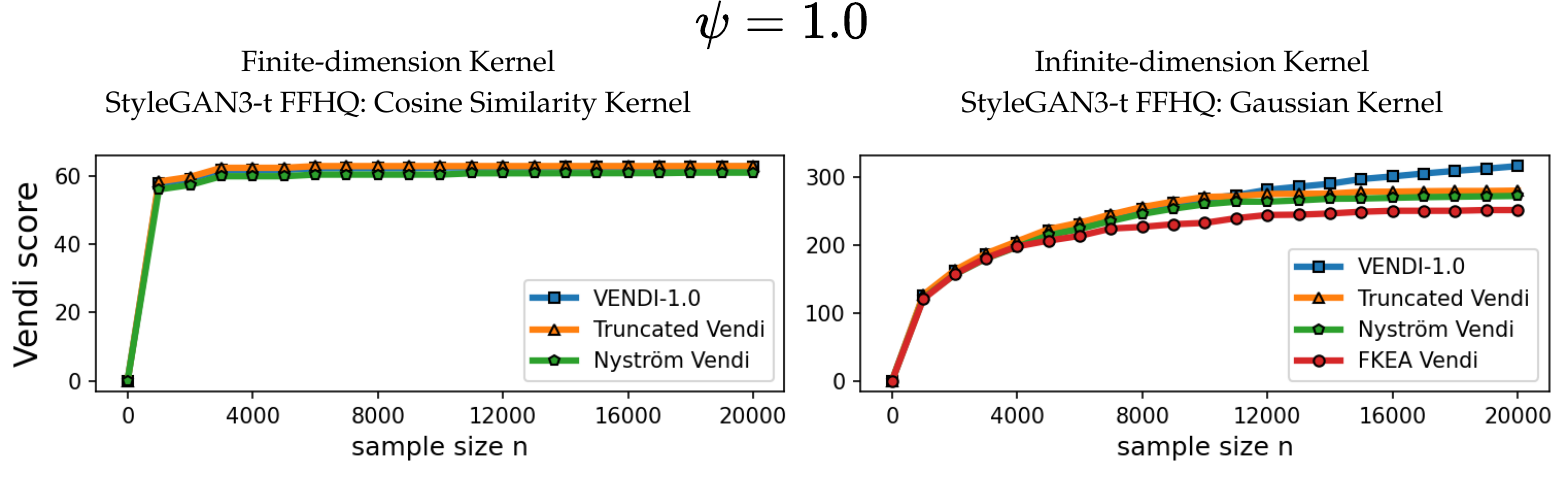}}
    \caption{Statistical convergence of Vendi score for different sample sizes on StyleGAN3 generated FFHQ data at various truncation factors $\psi$: (Left plot) finite-dimension cosine similarity kernel (Right plot) infinite dimension Gaussian kernel with bandwidth $\sigma=35$. \emph{DinoV2} embedding (dimension 768) is used in computing the scores.}
  \label{VENDI_stylegan3_convergence}
\end{figure}

\begin{figure}
    \centering
    \subfigure{\includegraphics[width=0.7\textwidth]{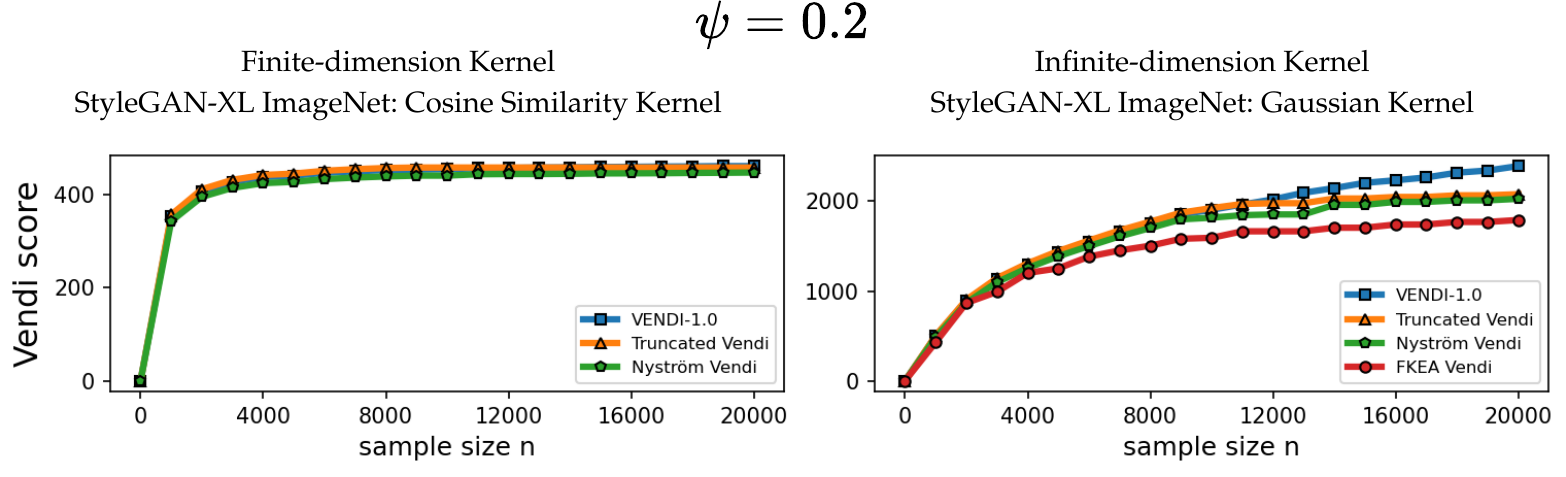}}
    \subfigure{\includegraphics[width=0.7\textwidth]{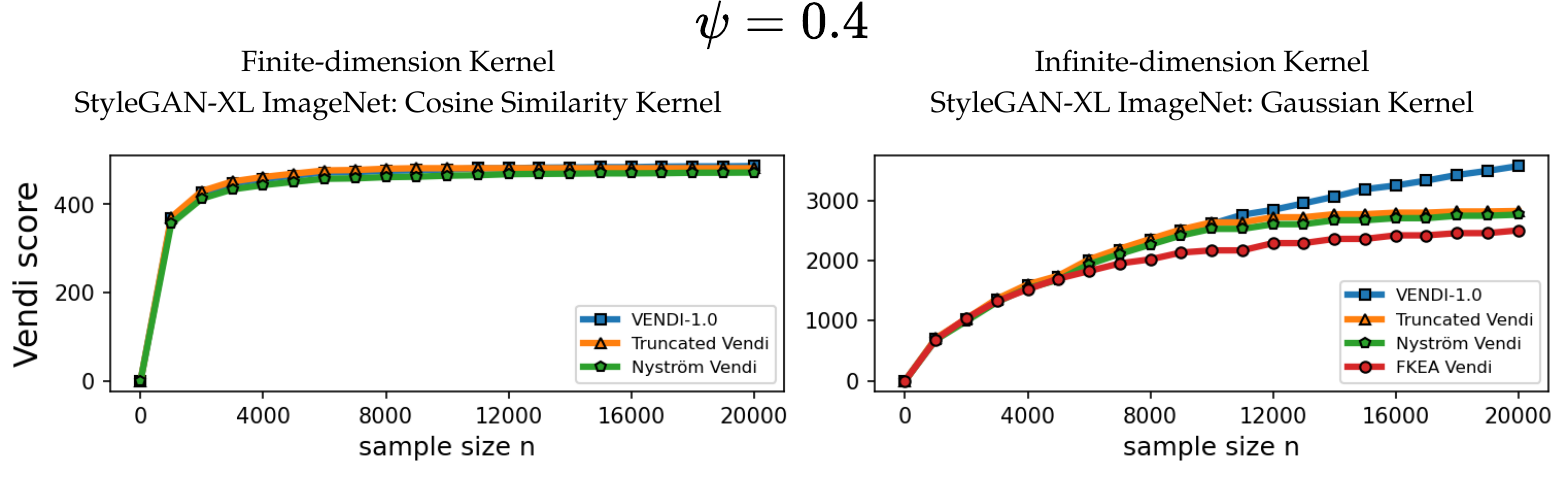}}
    \subfigure{\includegraphics[width=0.7\textwidth]{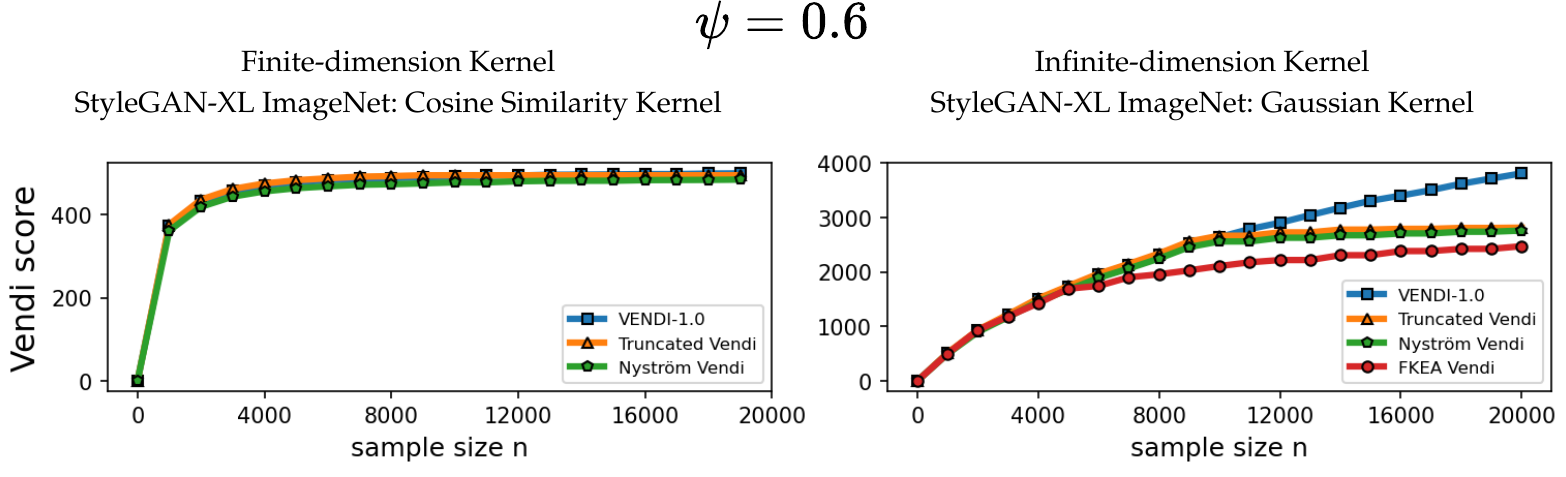}}
    \subfigure{\includegraphics[width=0.7\textwidth]{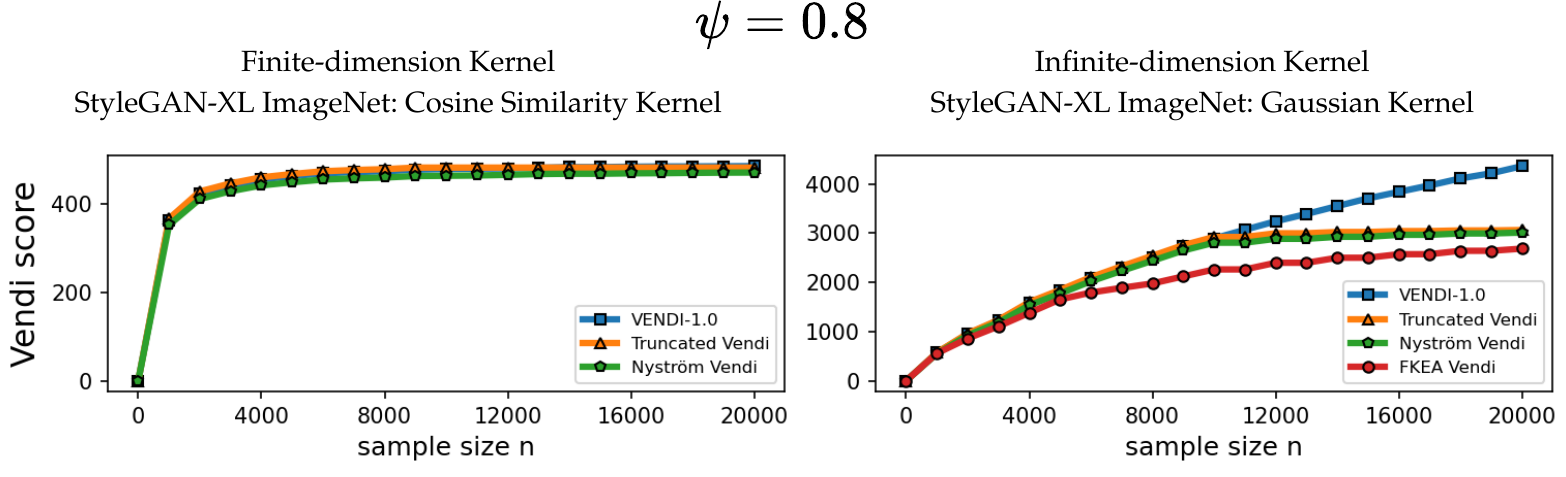}}
    \subfigure{\includegraphics[width=0.7\textwidth]{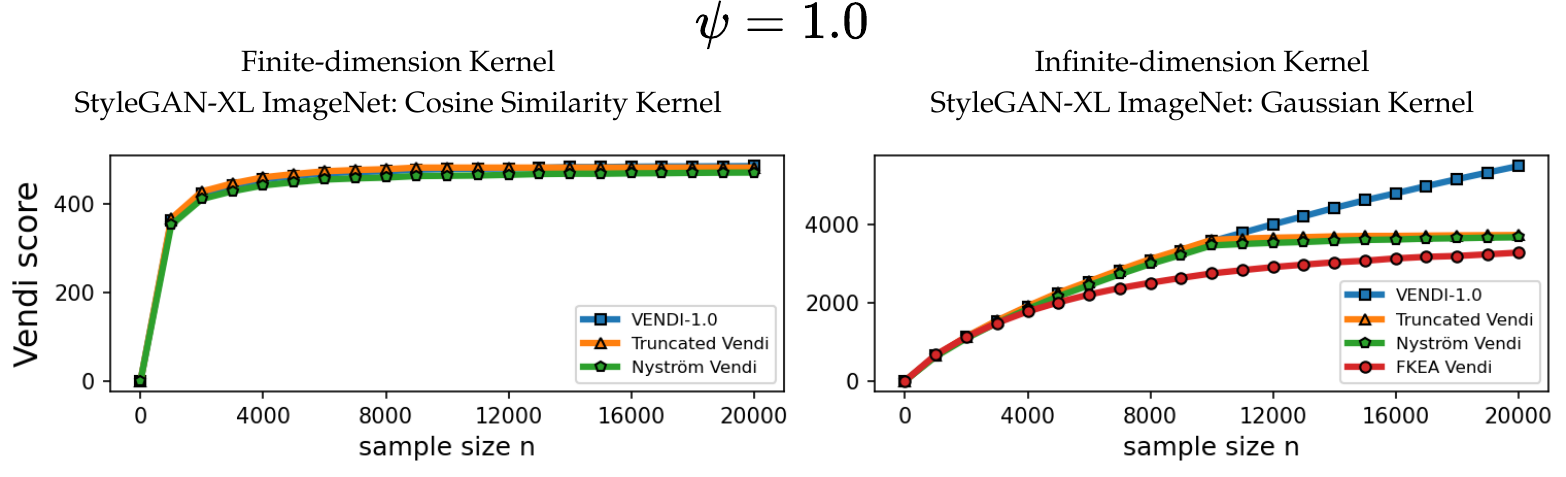}}
    \caption{Statistical convergence of Vendi score for different sample sizes on StyleGAN-XL generated ImageNet data: (Left plot) finite-dimension cosine similarity kernel (Right plot) infinite dimension Gaussian kernel with bandwidth $\sigma=40$. \emph{DinoV2} embedding (dimension 768) is used in computing the scores.}
  \label{VENDI_styleganxl_convergence}
\end{figure}

\begin{table}
    \centering
    % \scriptsize
    \caption{Statistical convergence of diversity scores for different sample size  on DALL-E 3 generated MSCOCO data}
    % \vspace{3mm}
    \begin{tabular}{lccccccc}
    \toprule
    $n$ & VENDI-1.0 & RKE & Vendi-t & FKEA-Vendi & Nystrom-Vendi & Recall & Coverage \\
    \midrule
    2000 & 239.91 & 13.47 & 239.91 & 228.69 & 239.91 & 0.76 & 0.86 \\
    4000 & 315.35 & 13.51 & 315.35 & 280.68 & 315.35 & 0.81 & 0.87 \\
    6000 & 357.15 & 13.56 & 346.27 & 310.9 & 345.49 & 0.83 & 0.91 \\
    8000 & 392.36 & 13.56 & 354.8 & 329.56 & 357.41 & 0.87 & 0.91 \\
    \bottomrule
    \end{tabular}
    \label{tab:dalle3-mscoco}
\end{table}

\begin{table}
    \centering
    % \scriptsize
    \caption{Statistical convergence of diversity scores for different sample size on SDXL generated MSCOCO data}
    % \vspace{3mm}
    \begin{tabular}{lccccccc}
    \toprule
    $n$ & VENDI-1.0 & RKE & Vendi-t & FKEA-Vendi & Nystrom-Vendi & Recall & Coverage \\
    \midrule
    2000 & 187.17 & 10.65 & 187.17 & 173.06 & 187.18 & 0.78 & 0.85\\
    4000 & 236.49 & 10.7 & 236.49 & 222.78 & 236.08 & 0.82 & 0.87\\
    6000 & 264.82 & 10.7 & 258.21 & 236.37 & 257.34 & 0.86 & 0.87\\
    8000 & 289.08 & 10.71 & 265.84 & 251.59 & 266.23 & 0.86 & 0.86\\
    10000 & 304.44 & 10.72 & 267.39 & 256.24 & 268.34 & 0.86 & 0.87\\
    \bottomrule
    \end{tabular}
    \label{tab:sdxl-mscoco}
\end{table}

\begin{table}
    \centering
    % \scriptsize
    \caption{Compilation time (in seconds) of different Vendi scores with increasing sample size}
    % \vspace{3mm}
    \begin{tabular}{lccccccc}
    \toprule

    \multirow{2}{*}{Metric} & \multicolumn{7}{c}{samples $n$} \\	
     & 10000 & 20000 & 30000 & 40000 & 50000 & 60000 & 70000 \\
    \midrule
    Vendi & 97s & 631s & 1868s & - & - & - & - \\
    FKEA-Vendi & 19s & 36s & 53s & 71s & 88s & 105s & 124s \\
    Nystrom-Vendi & 31s & 44s & 78s & 91s & 112s & 136s & 164s \\

    \bottomrule
    \end{tabular}
    \label{tab:time complexity}
\end{table}

\vspace{-2mm}
\subsection{Bandwidth $\sigma$ Selection}
In our experiments, we select the Gaussian kernel bandwidth, $\sigma$, to ensure that the Vendi metric effectively distinguishes the inherent modes within the dataset. The kernel bandwidth directly controls the sensitivity of the metric to the underlying data clusters. As illustrated in Figure \ref{bandwidth_illustration}, varying $\sigma$ significantly impacts the diversity computation on the ImageNet dataset. A smaller bandwidth (e.g., $\sigma = 20, 30$) results in the metric treating redundant samples as distinct modes, artificially inflating the number of clusters, which in turn slows down the convergence of the metric. On the other hand, large bandwidth results in instant convergence of the metric, i.e. in $\sigma=60$ $n=100$ and $n=1000$ have almost the same amount of diversity.

\section{Selection of Embedding space}
To show that proposed truncated Vendi score remains feasible under arbitrary embedding selection, we conducted experiments from Figures \ref{VENDI_ffhq_truncation} and \ref{VENDI_imagenet_truncation}. Figures \ref{VENDI_imagenet_clip_diversity}, \ref{VENDI_ffhq_truncation_clip_diversity}, \ref{VENDI_imagenet_swav_diversity} and \ref{VENDI_ffhq_truncation_swav_diversity} extend the results to CLIP \cite{radford_learning_2021} and SWaV \cite{caron_unsupervised_2020} embeddings. These experiments demonstrate that FKEA, Nyström and $t$-truncated Vendi correlate with increasing diversity of the evaluated dataset. We emphasize that proposed statistic remains feasible under arbitrary embedding space that is capable of mapping image samples into a latent space.

\begin{figure}
    \centering
    \includegraphics[width=\textwidth]{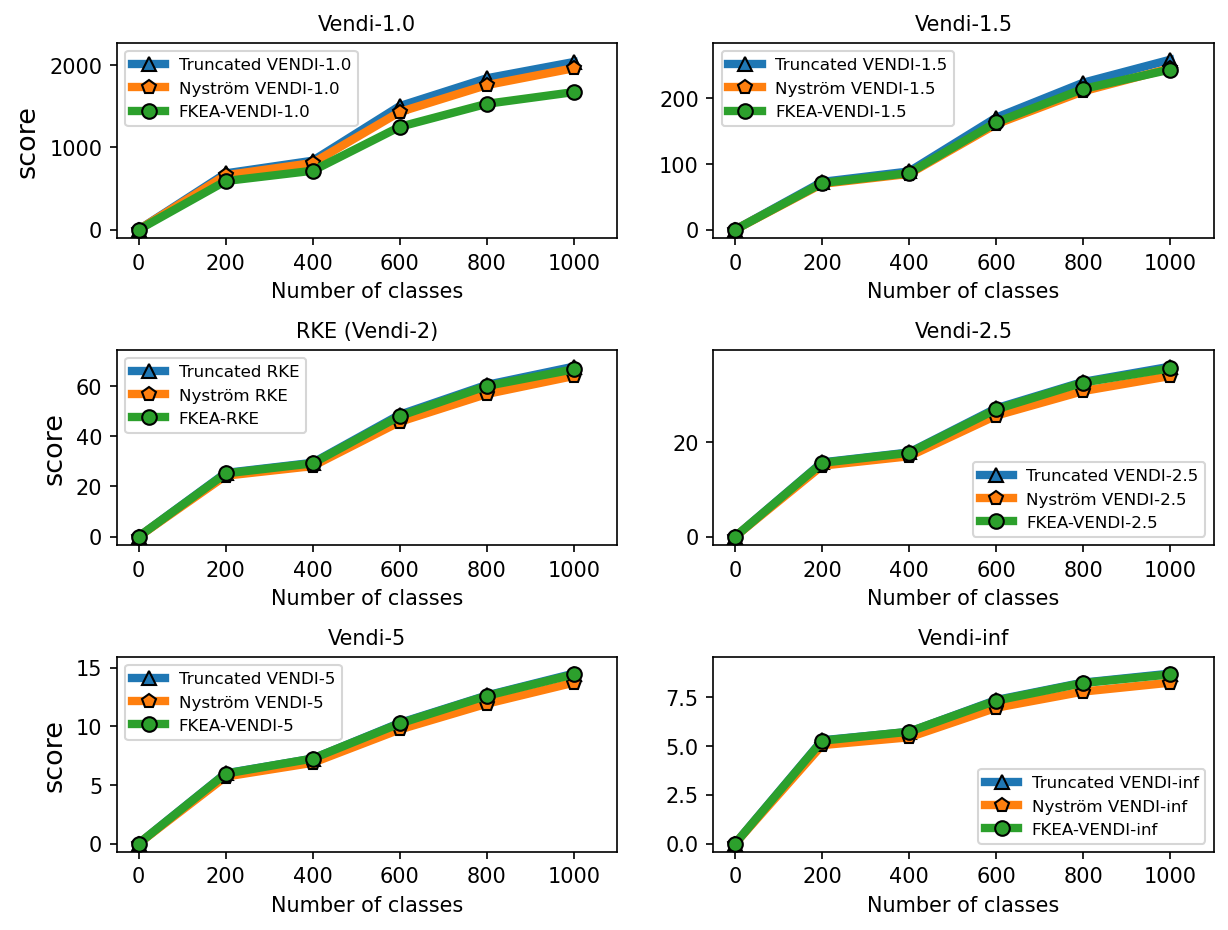}
    \caption{Diversity evaluation of Vendi scores on ImageNet dataset with varying number of classes based on \textit{CLIP} embedding and bandwidth $\sigma=5.0$}
  \label{VENDI_imagenet_clip_diversity}
\end{figure}

\begin{figure}
    \centering
    \includegraphics[width=\textwidth]{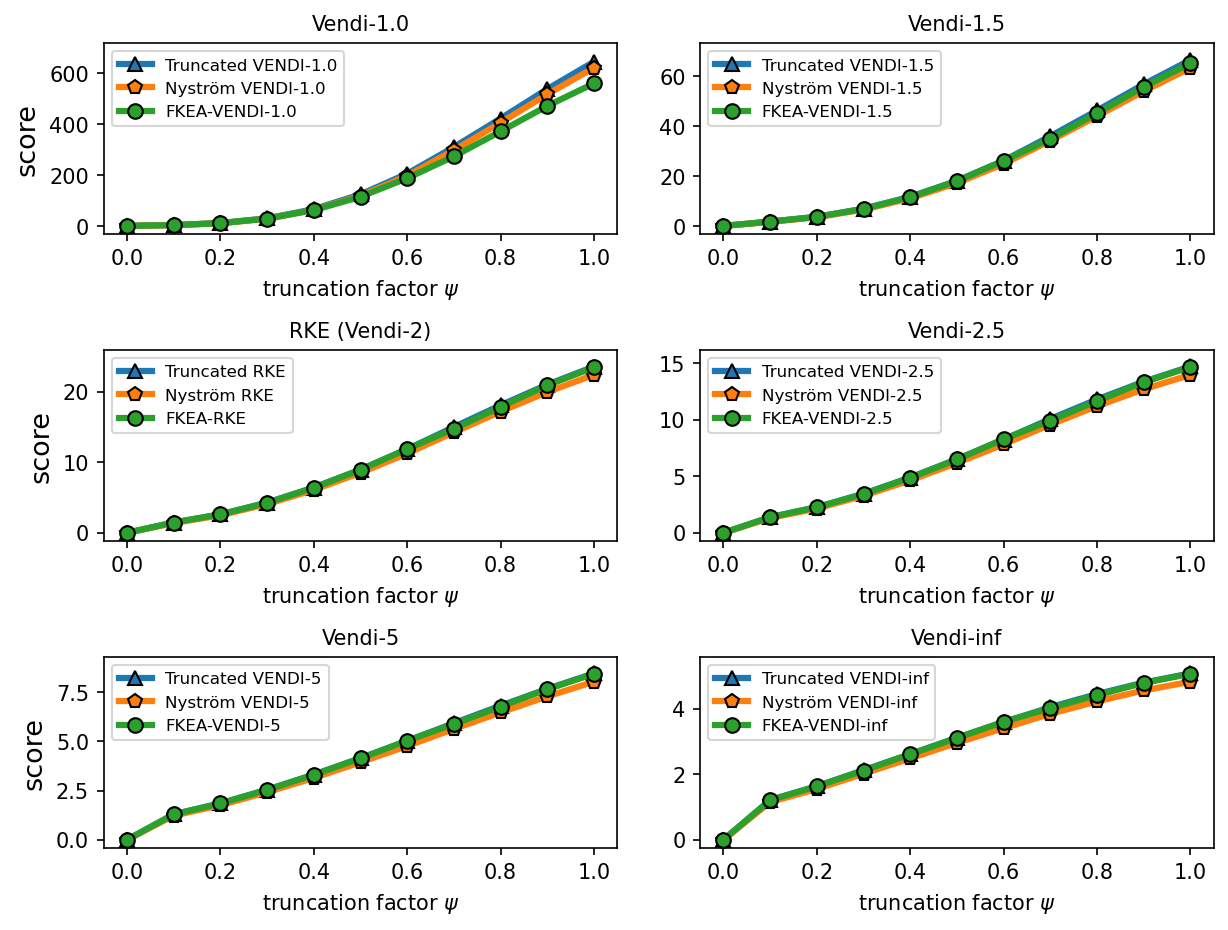}
    \caption{Diversity evaluation of Vendi scores on truncated StyleGAN3 generated FFHQ with varying truncation coefficient $\psi$ based on \textit{CLIP} embedding and bandwidth $\sigma=5.0$}
  \label{VENDI_ffhq_truncation_clip_diversity}
\end{figure}

\begin{figure}
    \centering
    \includegraphics[width=\textwidth]{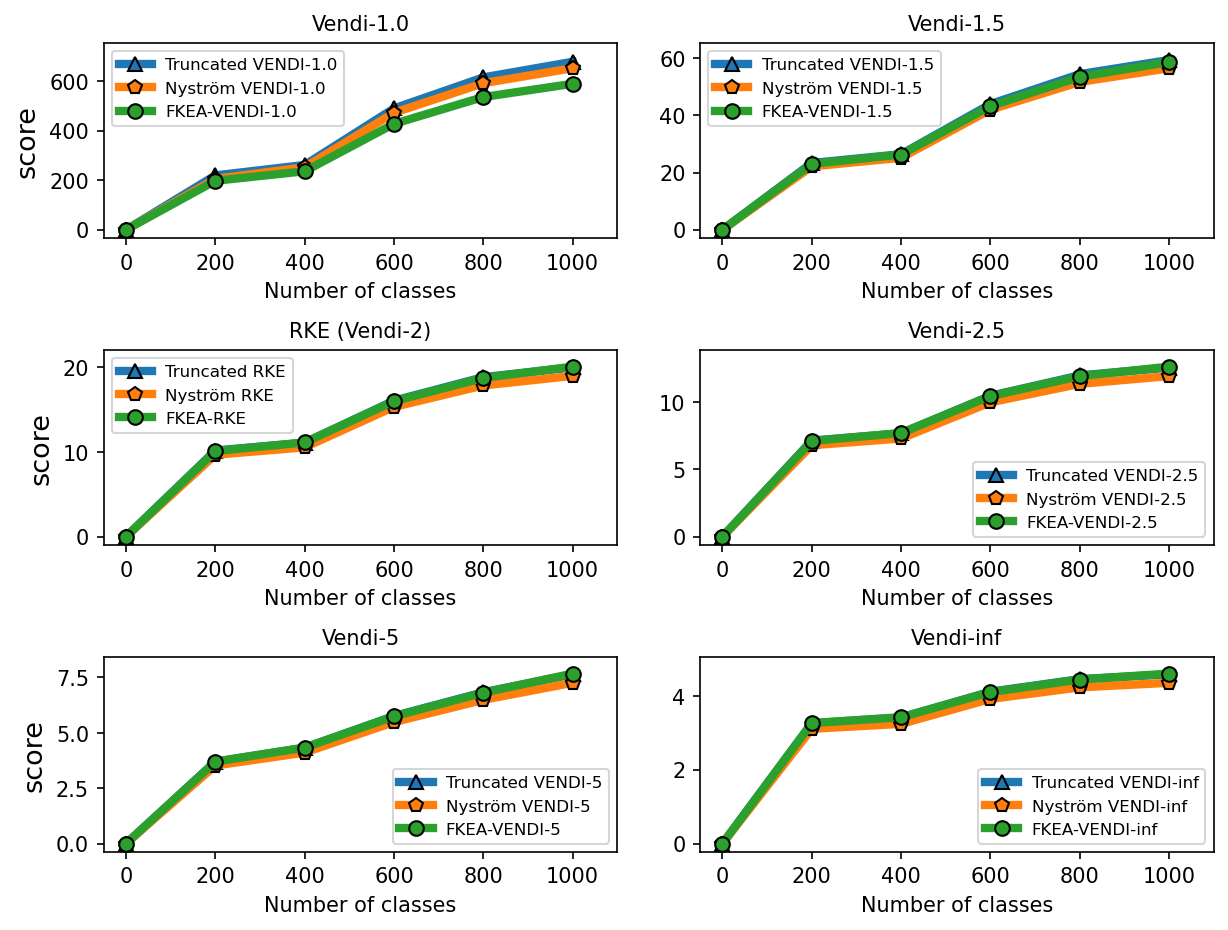}
    \caption{Diversity evaluation of Vendi scores on ImageNet dataset with varying number of classes based on \textit{SWaV} embedding and bandwidth $\sigma=1.0$}
  \label{VENDI_imagenet_swav_diversity}
\end{figure}

\begin{figure}
    \centering
    \includegraphics[width=\textwidth]{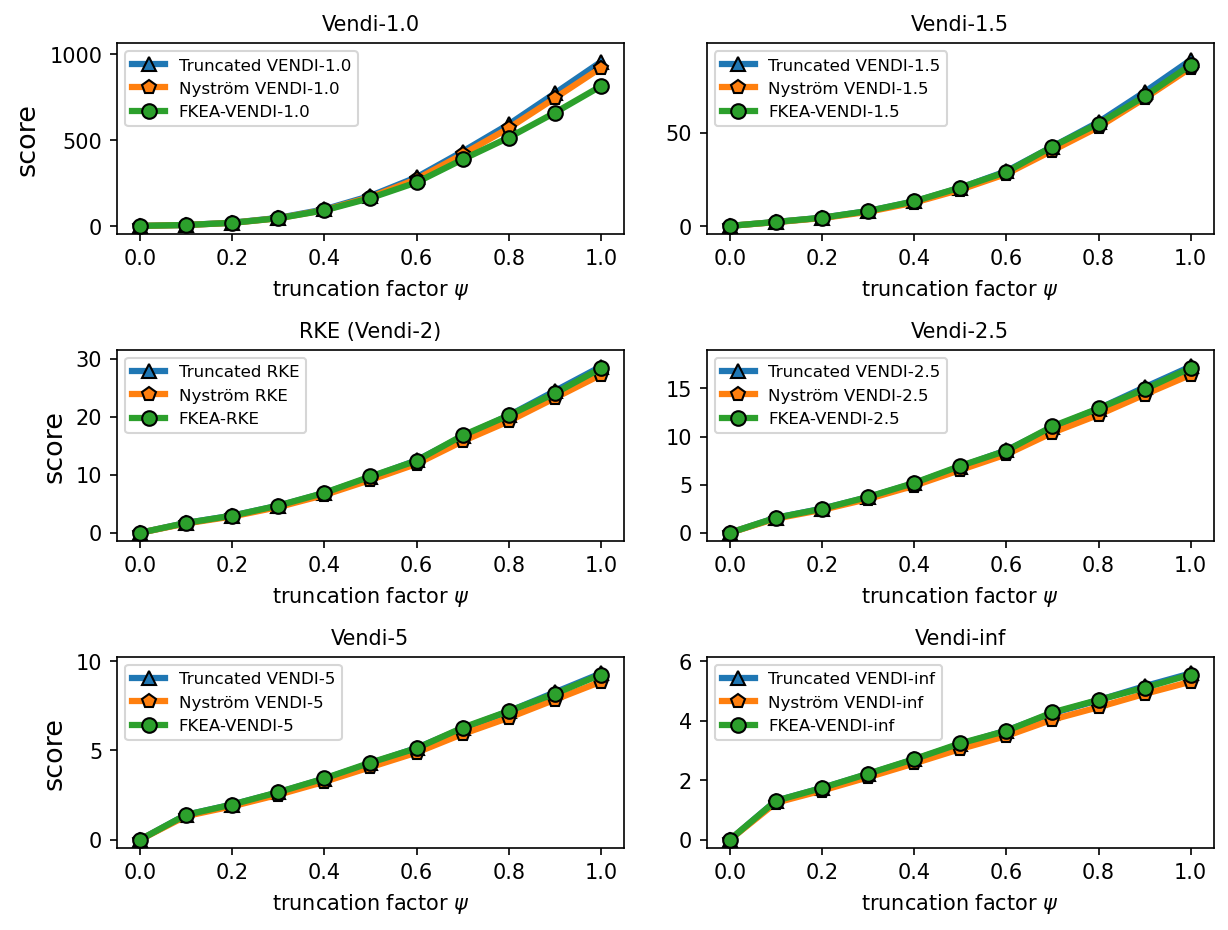}
    \caption{Diversity evaluation of Vendi scores on truncated StyleGAN3 generated FFHQ with varying truncation coefficient $\psi$ based on \textit{SwAV} embedding and bandwidth $\sigma=1.0$}
  \label{VENDI_ffhq_truncation_swav_diversity}
\end{figure}

\end{appendices}

\end{document}